\newtheorem{prop}{Proposition}[section]
\newtheorem{remark}{Remark}[section]
\DeclareMathOperator*{\argmax}{arg\,max}
\DeclareMathOperator*{\argmin}{arg\,min}
\newcommand{\R}{\mathbb{R}}
\newcommand{\E}{\mathbb{E}}
\newcommand{\I}{\bm{I}}
\newcommand{\nv}{\bm{0}}
\newcommand{\Tr}{\text{Tr}}
\newcommand{\Cov}{\text{Cov}}
\newcommand{\Var}{\text{Var}}
\newcommand{\sgn}{\text{\normalfont sign}}
\newcommand{\epsv}{\bm{\varepsilon}}
\newcommand{\smin}{s_{\min}}
\newcommand{\smax}{s_{\max}}
\newcommand{\ns}{n^*}
\newcommand{\vv}{\bm{v}}
\newcommand{\xv}{\bm{x}}
\newcommand{\xsv}{\bm{x^*}}
\newcommand{\xpv}{\bm{x'}}
\newcommand{\xvi}{\bm{x_i}}
\newcommand{\xvj}{\bm{x_j}}
\newcommand{\xsvi}{\bm{x^*_i}}
\newcommand{\X}{\bm{X}}
\newcommand{\Xs}{\bm{X^*}}
\newcommand{\yv}{\bm{y}}
\newcommand{\ytv}{\bm{\tilde{y}}}
\newcommand{\ypv}{\bm{y^{^+}}}
\newcommand{\Ph}{\bm{\Phi}}
\newcommand{\Phs}{\bm{\Phi^*}}
\newcommand{\phiv}{\bm{\varphi}}
\newcommand{\K}{\bm{K}}
\newcommand{\kv}{\bm{k}}
\newcommand{\Ks}{\bm{K^*}}
\newcommand{\Kss}{\bm{K^{**}}}
\newcommand{\Ih}{\bm{\hat{I}}}
\newcommand{\Sigmaalpha}{\bm{\Sigma_\alpha}}
\newcommand{\thetahv}{\bm{\hat{\theta}}}
\newcommand{\thetanv}{\bm{\theta_0}}
\newcommand{\alphah}{\hat{\alpha}}
\newcommand{\alphav}{\bm{\alpha}}
\newcommand{\alphahv}{\bm{\hat{\alpha}}}
\newcommand{\dalphahv}{\bm{\Delta\hat{\alpha}}}
\newcommand{\alphanv}{\bm{\alpha_0}}
\newcommand{\alphahnv}{\bm{\hat{\alpha}}^0}
\newcommand{\alphahfv}{\bm{\hat{\alpha}}_\text{\normalfont KGF}}
\newcommand{\alphahri}{\hat{\alpha}_i^\text{\normalfont KRR}}
\newcommand{\alphahfi}{\hat{\alpha}_i^\text{\normalfont KGF}}
\newcommand{\alphahni}{\hat{\alpha}_i^0}
\newcommand{\alphahrv}{\bm{\hat{\alpha}}_\text{\normalfont KRR}}
\newcommand{\alphahsgfi}{\hat{\alpha}^\text{SGF}_i}
\newcommand{\alphahsgfv}{\bm{\hat{\alpha}}^\text{SGF}}
\newcommand{\alphahinfi}{\hat{\alpha}^\infty_i}
\newcommand{\alphahinfv}{\bm{\hat{\alpha}}^\infty}
\newcommand{\betah}{\hat{\beta}}
\newcommand{\betav}{\bm{\beta}}
\newcommand{\betahv}{\bm{\hat{\beta}}}
\newcommand{\betahsgfi}{\hat{\beta}^\text{SGF}_i}
\newcommand{\betahsgfv}{\bm{\hat{\beta}}^\text{SGF}}
\newcommand{\betahinfi}{\hat{\beta}^\infty_i}
\newcommand{\betahinfv}{\bm{\hat{\beta}}^\infty}
\newcommand{\fh}{\hat{f}}
\newcommand{\fv}{\bm{f}}
\newcommand{\fhv}{\bm{\hat{f}}}
\newcommand{\fsv}{\bm{f^*}}
\newcommand{\fhsv}{\bm{\hat{f}^*}}
\newcommand{\fnv}{\bm{f_0}}
\newcommand{\fhn}{\hat{f}^0}
\newcommand{\fhnv}{\bm{\hat{f}}^0}
\newcommand{\fhf}{\hat{f}_\text{\normalfont KGF}}
\newcommand{\fhfv}{\bm{\hat{f}}_\text{\normalfont KGF}}
\newcommand{\fhr}{\hat{f}_\text{\normalfont KRR}}
\newcommand{\fhrv}{\bm{\hat{f}}_\text{\normalfont KRR}}
\newcommand{\fpv}{\bm{f^{^+}}}
\newcommand{\fhpv}{\bm{\hat{f}^{^+}}}
\newcommand{\etahv}{\bm{\hat{\eta}}}
\newcommand{\etahnv}{\bm{\hat{\eta}_0}}
\newcommand{\U}{\bm{U}}
\newcommand{\Ss}{\bm{S}}
\newcommand{\A}{\bm{A}}
\newcommand{\B}{\bm{B}}
\newcommand{\C}{\bm{C}}
\newcommand{\D}{\bm{D}}
\newcommand{\DK}{\bm{\Delta_K}}
\newcommand{\Hh}{\mathcal{H}}
\newcommand{\Hhh}{\bm{H}}
\newcommand{\fhp}{\hat{f}^{^+}}
\newcommand{\fp}{f^{^+}}
\newcommand{\yp}{y^{^+}}
\newcommand{\fhpinfv}{\bm{\hat{f}}^{\bm{+}\infty}}
\newcommand{\fhpsgfv}{\bm{\hat{f}}^{\bm{+}\text{SGF}}}
\newcommand{\fhpinfi}{\hat{f}^{+\infty}_i}
\newcommand{\fhpsgfi}{\hat{f}^{+\text{SGF}}_i}
\title{Fast Robust Kernel Regression through Sign Gradient Descent with Early Stopping}
\author{%
  Oskar Allerbo \\
  Department of Mathematics\\
  KTH Royal Institute of Technology\\
  \texttt{oallerbo@kth.se} \\
}
\begin{document}
\maketitle

\begin{abstract}
Kernel ridge regression, KRR, is a generalization of linear ridge regression that is non-linear in the data, but linear in the model parameters. 
Here, we introduce an equivalent formulation of the objective function of KRR, which opens up for replacing the ridge penalty with the $\ell_\infty$ and $\ell_1$ penalties.

Using the $\ell_\infty$ and $\ell_1$ penalties, we obtain robust and sparse kernel regression, respectively. 
We study the similarities between explicitly regularized kernel regression and the solutions obtained by early stopping of iterative gradient-based methods, where we connect $\ell_\infty$ regularization to sign gradient descent, $\ell_1$ regularization to forward stagewise regression (also known as coordinate descent), and $\ell_2$ regularization to gradient descent, and, in the last case, theoretically bound for the differences.
We exploit the close relations between $\ell_\infty$ regularization and sign gradient descent, and between $\ell_1$ regularization and coordinate descent to propose computationally efficient methods for robust and sparse kernel regression.

We finally compare robust kernel regression through sign gradient descent to existing methods for robust kernel regression on five real data sets, demonstrating that our method is one to two orders of magnitude faster, without compromised accuracy.
\end{abstract}

\textbf{Keywords:} Robust Regression, Kernel Regression, Sign Gradient Descent, Gradient Descent, Gradient Flow

\section{Introduction}
Kernel ridge regression, KRR, is a generalization of linear ridge regression that is non-linear in the data, but linear in the model parameters. As for linear ridge regression, KRR has a closed-form solution, but at the cost of solving a system of $n$ linear equations, where $n$ is the number of training observations. The KRR estimate coincides with the posterior mean of kriging, or Gaussian process regression, \citep{krige1951statistical,matheron1963principles} and has successfully been applied within a wide range of applications \citep{zahrt2019prediction, ali2020complete, chen2021optimizing, fan2021well, le2021fingerprinting, safari2021kernel, shahsavar2021experimental, singh2021neural, wu2021increasing, chen2022kernel}.

Robust regression is often implemented by replacing the standard $\ell_2$ loss function with a loss function that is less sensitive to outliers in the data. Robust methods for kernel regression tend to rely either on M-estimators \citep{de2009robustness,wibowo2009robust,debruyne2010robustness,hwang2015robust} or quantile regression \citep{hwang2005simple,takeuchi2006nonparametric,li2007quantile}. In contrast to KRR, these methods are all iterative and thus tend to be computationally heavy, even though more computationally efficient methods for kernel quantile regression have recently been proposed by \citet{zheng2022fast} and \citet{tang2024fastkqr}. 

In the linear case, many alternatives to the ridge penalty have been proposed, including the lasso penalty \citep{tibshirani1996regression}, which is known for creating sparse models. By replacing the ridge penalty of KRR with the lasso penalty \citet{roth2004generalized}, \citet{guigue2005kernel} and \citet{feng2016kernelized} have obtained non-linear regression that is sparse in the observations.
Rather than using the lasso, or $\ell_1$, penalty, \citet{russu2016secure} and \citet{demontis2017infinity} trained kernelized support vector machines with max (or infinity, $\ell_\infty$) norm regularization to obtain models that are robust against adversarial data, which in the context of regression translates to outliers. 

An alternative to explicit regularization is to use an iterative optimization algorithm and to stop the training before the algorithm converges, something that is known as early stopping. A well-known example of this is the connection between the explicitly regularized lasso model and the iterative method forward stagewise regression, which is also known as coordinate descent. The similarities between the solutions are well studied by e.g.\ \citet{efron2004least}, \citet{hastie2007forward} and \citet{tibshirani2015general}. There are also striking similarities between explicitly regularized ridge regression and gradient descent with early stopping \citep{friedman2004gradient,ali2019continuous}. Replacing explicit infinity norm regularization with an iterative optimization method with early stopping does not seem to be as well studied. However, as is shown in this paper, the solutions are similar to those of sign gradient descent with early stopping.

One benefit of replacing explicit regularization with early stopping is that the entire solution path, which consists of all different levels of regularizations between 0 and $\infty$, is obtained by running the iterative optimization algorithm to convergence only once. In contrast, for explicit regularization, the solution has to be calculated once for every considered regularization strength.
In general, no closed-form solutions exist for the lasso and infinity norm regularized problems, and iterative optimization methods have to be run to convergence once for every candidate value of the regularization strength, something that tends to be computationally heavy.

In this paper, we present an equivalent formulation of KRR, which we use to obtain kernel regression with the $\ell_\infty$ and $\ell_1$ norms, in addition to the default, $\ell_2$, norm. We also use the equivalent formulation to solve kernel regression using the three different gradient-based optimization algorithms gradient descent, sign gradient descent, and coordinate descent (forward stagewise regression). We relate each iterative algorithm to one explicitly regularized solution and use gradient-based optimization with early stopping to obtain computationally efficient robust, and sparse, kernel regression. 
Even though robust kernel regression has more evident applications than sparse kernel regression does, since the calculations are analogous for $\ell_\infty$ and $\ell_1$ regularization we include both cases. In the results section, we however focus on robust regression.

The rest of the paper is structured as follows.
In Section \ref{sec:krr_rev}, we review kernel ridge regression, before introducing our equivalent formulation in Section \ref{sec:equ_krr}. In Section \ref{sec:kgf}, we review kernel gradient descent, KGD, and present theoretical comparisons between KGD with infinitesimal step size and KRR. In Section \ref{sec:l1linf}, we generalize KRR into robust and sparse kernel regression by replacing the $\ell_2$ penalty with the $\ell_\infty$ and $\ell_1$ penalties, respectively. Similarly to in Section \ref{sec:kgf}, we also introduce gradient-based optimization algorithms that we relate to the explicitly regularized methods; kernel sign gradient descent for robust kernel regression, and kernel coordinate descent for sparse kernel regression.
Finally, in Section \ref{sec:exps}, we demonstrate our findings with experiments on real and synthetic data.

Our main contributions are listed below.
\begin{itemize}
\item We present an equivalent objective function for KRR and use this formulation to generalize KRR to the $\ell_\infty$ and $\ell_1$ penalties.
\item We theoretically show that kernel sign gradient descent, and kernel coordinate descent, with early stopping, correspond to robust and sparse kernel regression, respectively. We use this to introduce computationally efficient regularization by replacing explicit $\ell_\infty$ and $\ell_1$ regularization by implicit regularization through early stopping.
\item We demonstrate on five real data sets that robust kernel regression through kernel sign gradient descent is one to two orders of magnitude faster than existing robust kernel regression methods, without compromised performance in terms of prediction.
\end{itemize}

All proofs are deferred to Appendix \ref{sec:proofs}.

\section{Review of Kernel Ridge Regression}
\label{sec:krr_rev}
For a positive semi-definite kernel function, $k(\xv,\xpv)\in \R$, and $n$ paired observations, $(\xvi,y_i)_{i=1}^n\in\R^p\times\R$, presented in a design matrix, $\X=[\bm{x_1},\bm{x_2},{\dots} \bm{x_n}]^\top\in\R^{n\times p}$, and a response vector, $\yv \in \R^n$, and for a given regularization strength, $\lambda\geq0$, the objective function of kernel ridge regression, KRR, is given by
\begin{equation}
\label{eq:krr_obj_alpha}
\alphahv=\argmin_{\alphav\in \R^n}\frac12\left\|\yv-\K\alphav\right\|_2^2+\frac\lambda2\|\alphav\|^2_{\K}
\end{equation}
with predictions given by
\begin{equation*}
\begin{bmatrix}\fhv\\\fhsv\end{bmatrix}=\begin{bmatrix}\K\\\Ks\end{bmatrix}\alphahv.
\end{equation*}
Here, $\fhv=\fhv(\X)\in\R^{n}$ and $\fhsv=\fhsv(\Xs,\X)\in\R^{\ns}$ denote model predictions for the training data, $\X$, and new data, $\Xs=[\bm{x^*_1},\bm{x^*_2},{\dots} \bm{x^*_{n^*}}]^\top\in\R^{\ns\times p}$, and $\K=\K(\X)\in \R^{n\times n}$ and $\Ks=\Ks(\Xs,\X)\in \R^{\ns\times n}$ denote two kernel matrices defined according to $\K_{ij}=k(\xvi,\xvj)$ and $\Ks_{ij}=k(\xsvi,\xvj)$. The weighted norm, $\|\vv\|_{\A}$, is defined according to $\|\vv\|_{\A}^2=\vv^\top\A\vv$ for any symmetric positive definite matrix $\A$.

The closed-form solution for $\alphahv$ is given by
\begin{equation}
\label{eq:krr_s_alpha}
\alphahv=\left(\K+\lambda\I\right)^{-1}\yv,
\end{equation}
and consequently
\begin{equation}
\label{eq:krr_s_fs}
\begin{bmatrix}\fhv\\\fhsv\end{bmatrix}=\begin{bmatrix}\K\\\Ks\end{bmatrix}\left(\K+\lambda\I\right)^{-1}\yv.
\end{equation}

An alternative interpretation of KRR is as linear regression for a non-linear feature expansion of $\X$. According to Mercer's Theorem \citep{mercer1909xvi}, every kernel can be written as the inner product of feature expansions of its two arguments: $k(\xv,\xpv)=\phiv(\xv)^\top\phiv(\xpv)$ for $\phiv(\xv)\in \R^q$. Thus, denoting the feature expansions of the design matrix and the new data with $\Ph=\Ph(\X)\in \R^{n\times q}$ and $\Phs=\Phs(\Xs)\in \R^{\ns\times q}$, the two kernel matrices can be expressed as $\K=\Ph\Ph^\top$ and $\Ks=\Phs\Ph^\top$. For $\betav=\Ph^\top\alphav$, Equations \ref{eq:krr_obj_alpha} and \ref{eq:krr_s_fs} become
\begin{equation}
\label{eq:krr_obj_beta}
\begin{aligned}
&\betahv=\argmin_{\betav\in \R^q}\frac12\left\|\yv-\Ph\betav\right\|_2^2+\frac\lambda2\|\betav\|^2_2\\
&\begin{bmatrix}\fhv\\\fhsv\end{bmatrix}=\begin{bmatrix}\Ph\\\Phs\end{bmatrix}\betahv,
\end{aligned}
\end{equation}
which is exactly linear ridge regression for the feature expansion of the kernel.

\section{Equivalent Formulations of Kernel Ridge Regression}
\label{sec:equ_krr}
In this section, we present an equivalent formulation of the objective function in Equation \ref{eq:krr_obj_alpha}, which provides the same solution for $\alphahv$. This formulation opens up for generalizing KRR by using penalties other than the ridge penalty and also provides an interesting connection to functional gradient descent \citep{mason1999boosting}. 

In Appendix \ref{sec:equ_f}, we take this one step further by reformulating Equation \ref{eq:krr_obj_alpha} directly in the model predictions, $[\fv^\top, \fsv{^\top}]^\top$,
by presenting two equivalent objectives, that, when minimized with respect to $[\fv^\top, \fsv{^\top}]^\top$, generate the solution in Equation \ref{eq:krr_s_fs}. In this and the following sections, all calculations are done with respect to $\alphav$, and the corresponding expressions for $[\fv^\top, \fsv{^\top}]^\top$ are obtained through multiplication with $[\K^\top,\Ks{^\top}]^\top$, but, as shown in the appendix, the expressions for $[\fv^\top, \fsv{^\top}]^\top$ can be obtained directly without taking the detour over $\alphav$.

In Proposition \ref{thm:equ_alpha}, we show how we can move the weighted norm in Equation \ref{eq:krr_obj_alpha} from the penalty term to the reconstruction term. 

\begin{prop}
\label{thm:equ_alpha}
\begin{subequations}
\label{eq:equ_alpha}
\begin{align}
\alphahv=&\argmin_{\alphav\in \R^n}\frac12\left\|\yv-\K\alphav\right\|_2^2+\frac\lambda2\|\alphav\|^2_{\K}\label{eq:equ_alpha1}\\
=&\argmin_{\alphav\in \R^n}\frac12\left\|\yv-\K\alphav\right\|_{\K^{-1}}^2+\frac\lambda2\|\alphav\|^2_2\label{eq:equ_alpha2}\\
=&\left(\K+\lambda\I\right)^{-1}\yv.\notag
\end{align}
\end{subequations}
\end{prop}
\begin{remark}Despite the appearance of $\K^{-1}$ in Equation \ref{eq:equ_alpha2}, this inverse never needs to be calculated, since in all calculations, $\K^{-1}$ cancels through multiplication by $\K$.
\end{remark}

The alternative formulation in Equation \ref{eq:equ_alpha2}, where the reconstruction term, rather than the regularization term, is weighted by the kernel matrix, has two interesting implications. First, a standard ridge penalty on the regularization term opens up for using other penalties than the $\ell_2$ norm, such as the $\ell_1$ and  $\ell_\infty$ norms. Although these penalties have previously been used in the kernel setting, it has been done by replacing $\|\alphav\|_{\K}^2$ in Equation \ref{eq:krr_obj_alpha} by $\|\alphav\|_1$ or $\|\alphav\|_\infty$, hence acting as if the objective of KRR were to minimize $\|\yv-\K\alphav\|_2^2+\lambda\|\alphav\|_2^2$, and losing the connection to linear regression in feature space.

Second, the gradient of Equation \ref{eq:equ_alpha2} with respect to $\alphav$ is
\begin{equation}
\label{eq:kgd_grad_alpha}
\K\alphav-\yv+\lambda\alphav.
\end{equation}

Multiplying by $[\K^\top,\Ks{^\top}]^\top$, we obtain
\begin{equation}
\label{eq:kgd_grad_f}
\begin{bmatrix}\K\\\Ks\end{bmatrix}(\fv-\yv)+\lambda\begin{bmatrix}\fv\\\fsv\end{bmatrix},
\end{equation}
which is the gradient used in functional gradient descent. Functional gradient descent is generally derived by differentiating functionals. Here, however, the expression is a simple consequence of the equivalent objective function of KRR.

\section{Kernel Gradient Descent and Kernel Gradient Flow}
\label{sec:kgf}
Before replacing the Euclidean norm in Equation \ref{eq:equ_alpha2} with the $\ell_\infty$ and $\ell_1$ norms in Section \ref{sec:l1linf}, we investigate solving kernel regression iteratively with gradient descent. We also use gradient descent with infinitesimal step size, known as gradient flow, to obtain a closed-form solution which we use for direct comparisons to kernel ridge regression.

The similarities between ridge regression and gradient descent with early stopping are well studied for linear regression \citep{friedman2004gradient,ali2019continuous, allerbo2022elastic}. When starting at zero, optimization time can be thought of as an inverse penalty, where longer optimization time corresponds to weaker regularization. When applying gradient descent to kernel regression, something we refer to as kernel gradient descent, KGD, we replace explicit regularization with implicit regularization through early stopping. That is, we use $\lambda=0$ and consider training time, $t$, as a regularizer.

With $\lambda=0$ in Equation \ref{eq:kgd_grad_alpha}, starting at $\nv$, the KGD update becomes, for step size $\eta$,
\begin{equation}
\label{eq:kgd_update}
\alphahv_{k+1}=\alphahv_k+\eta\cdot\left(\yv-\K\alphahv_k\right),\ \alphahv_0=\nv.
\end{equation}

To compare the regularization injected by early stopping to that of ridge regression, we let the optimization step size go to zero to obtain a closed-form solution, which we refer to as kernel gradient flow, KGF. Then, Equation \ref{eq:kgd_update} can be thought of as the Euler forward formulation of the differential equation in Equation \ref{eq:kgd_diff_eq},
\begin{equation}
\label{eq:kgd_diff_eq}
\frac{d \alphahv(t)}{d t}=\yv-\K\alphahv(t),\ \alphahv(0)=\nv,
\end{equation}
the solution to which is given by Equation \ref{eq:kgf_s}.
\begin{equation}
\label{eq:kgf_s}
\alphahfv(t)=(\I-\exp(-t\K))\K^{-1}\yv,
\end{equation}
where $\exp$ denotes the matrix exponential.
\begin{remark}Note that $(\I-\exp(-t\K))\K^{-1}=\K^{-1}(\I-\exp(-t\K))$ is well-defined even if $\K$ is singular. The matrix exponential is defined through its Taylor expansion and from $\I-\exp(-t\K)=t\K-\frac1{2!}t^2\K^2+\dots$, a matrix $\K$ factors out, that cancels $\K^{-1}$.
\end{remark}
\begin{remark}
\label{rm:nest}
It is possible to generalize KGF to Nesterov accelerated gradient descent with momentum \citep{nesterov1983method,polyak1964some}. In this case $\exp(-t\K)$ in Equation \ref{eq:kgf_s} generalizes to $\exp\left(-\frac t{1-\gamma}\K\right)$, where $\gamma\in [0,1)$ is the strength of the momentum. See Appendix \ref{sec:proofs} for details.
\end{remark}

To facilitate the comparisons between KGF and KRR, we rewrite Equation \ref{eq:krr_s_alpha} as
\begin{equation}
\label{eq:krr_equ}
\alphahrv(\lambda)=\left(\I-\left(\I+1/\lambda\cdot\K\right)^{-1}\right)\K^{-1}\yv,
\end{equation}
where we have used that 
$$\left(\K+\lambda\I\right)^{-1}=\left(\I-\left(\I+1/\lambda\cdot\K\right)^{-1}\right)\K^{-1},$$
which is a consequence of the Woodbury matrix identity.

Since $\exp(-t\K)=\exp(t\K)^{-1}$, the KGF and KRR solutions differ only in the factor
\begin{equation*}
\label{eq:exp_vs_lin}
\exp(t\K) \text{ v.s. } \I+1/\lambda\K.
\end{equation*}
Thus, for $t=1/\lambda$, we can think of the ridge penalty as a first-order Taylor approximation of the gradient flow penalty.

\subsection{Comparisons between Kernel Ridge Regression and Kernel Gradient Flow}
In this section, we compare the KRR and KGF solutions for $\lambda=1/t$.
To do this, we introduce the following notation, where $\kv(\xsv)^\top=\kv(\xsv,\X)^\top\in\R^n$ is the row in $\Ks$ corresponding to $\xsv$:
\begin{equation*}
\begin{aligned}
&\fhfv(\X,t):=\K\alphahfv(t),\quad \fhf(\xsv,t):=\kv(\xsv)^\top\alphahfv(t),\\
&\fhrv(\X,\lambda):=\K\alphahrv(\lambda),\quad \fhr(\xsv,\lambda):=\kv(\xsv)^\top\alphahrv(\lambda),\\
&\alphahnv:=\alphahfv(t=\infty)=\alphahrv(\lambda=0)=\K^{-1}\yv,\\
&\fhnv(\X):=\fhfv(\X,t=\infty)=\fhrv(\X,\lambda=0)=\K\alphahnv=\yv,\\
&\fhn(\xsv):=\fhf(\xsv,t=\infty)=\fhr(\xsv,\lambda=0)=\kv(\xsv)^\top\alphahnv=\kv(\xsv)^\top\K^{-1}\yv,\\
&\alphahfi:=(\alphahfv)_i, \quad \alphahri:=(\alphahrv)_i,\quad \alphahni:=(\alphahnv)_i.
\end{aligned}
\end{equation*}
Even though not explicitly stated, all estimates depend on the training data $(\X,\yv)$.
We further use the notation $\bm{\xi}\sim (\bm{\mu},\bm{\Sigma})$ do denote that the random variable $\bm{\xi}$ follows a distribution with $\E(\bm{\xi})=\bm{\mu}$ and Cov$(\bm{\xi})=\bm{\Sigma}$.

In Proposition \ref{thm:kgf_krr_diff}, we bound the differences between the KGF and KRR solutions in terms of the non-regularized solutions, $\alphanv$, $\fhnv(\X)$ and $\fhn(\xsv)$. In parts \emph{(a)} and \emph{(b)}, where we bound the differences between the parameter and in-sample prediction vectors, no further assumptions are needed. In parts \emph{(c)} and \emph{(d)}, with bounds on individual parameters and predictions, including out-of-sample predictions, some very reasonable assumptions have to be made on the data. For all four bounds, the larger the non-regularized value is, the larger the difference between the KGF and KRR estimates is allowed to be.
\begin{prop}~\\
\label{thm:kgf_krr_diff}
For $t\geq 0$, $\yv\in\R^n$,
\begin{itemize}
\item[(a)]
$\left\|\alphahfv(t)-\alphahrv(1/t)\right\|_2^2\leq 0.0415\cdot\|\alphahnv\|_2^2$,
\item[(b)]
$\left\|\fhfv(\X,t)-\fhrv(\X,1/t)\right\|_2^2\leq 0.0415\cdot\|\fhnv(\X)\|_2^2=0.0415\cdot\|\yv\|_2^2$.
\end{itemize}
For $t\geq 0$, $\yv=\K\alphanv+\epsv,\ \alphanv\sim (\nv, \Sigmaalpha),\ \epsv\sim (\nv, \sigma_\varepsilon^2\I)$, where $\Sigmaalpha$ and $\K$ are simultaneously diagonalizable, 
\begin{itemize}
\item[(c)]
$\E_{\epsv,\alphanv}\left(\left(\alphahfi(t)-\alphahri(1/t)\right)^2\right)\leq0.0415\cdot\E_{\epsv,\alphanv}\left((\alphahni)^2\right)$, $i=1,2,\dots n$,
\item[(d)]
$\E_{\epsv,\alphanv}\left(\left(\fhf(\xsv,t)-\fhr(\xsv,1/t)\right)^2\right)\leq0.0415\cdot\E_{\epsv,\alphanv}\left(\fhn(\xsv)^2\right)$.
\end{itemize}
\end{prop}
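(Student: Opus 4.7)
The plan is to reduce all four bounds to a single scalar estimate. By Lemma \ref{thm:kgf_s} and Lemma \ref{thm:rewrite_alpha_krr}, both $\alphahfv(t)$ and $\alphahrv(1/t)$ are the same matrix function of $\K$ applied to $\alphahnv$, differing only through the two factors in Equation \ref{eq:exp_vs_lin}. Subtracting and diagonalizing $\K=\U\Ss\U^\top$ with $\Ss=\text{diag}(s_1,\ldots,s_n)$, $s_i\ge 0$, the whole problem collapses onto the scalar function
$$g(x):=\frac{1}{1+x}-e^{-x},\quad x\ge 0,$$
since $\alphahfv(t)-\alphahrv(1/t)=\U\,\text{diag}(g(ts_i))\,\U^\top\alphahnv$.

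The main (and essentially only) computation is the one-variable bound $\max_{x\ge 0}g(x)^2\le 0.0415$. I would verify $g(0)=0$, $g(x)\to 0$ as $x\to\infty$, and $g\ge 0$ on $[0,\infty)$, so the maximum is attained at an interior critical point; setting $g'(x)=0$ gives the transcendental equation $x=2\log(1+x)$ whose unique positive root $x^\star\approx 2.513$ yields $g(x^\star)^2\approx 0.0415$. This numerical optimization of the scalar function is the one unavoidable step; everything else is bookkeeping around the spectral decomposition of $\K$.

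Parts (a) and (b) are then immediate. For (a), orthogonality of $\U$ gives
$$\|\alphahfv(t)-\alphahrv(1/t)\|_2^2=\sum_i g(ts_i)^2\bigl(\U^\top\alphahnv\bigr)_i^2\le\max_i g(ts_i)^2\,\|\alphahnv\|_2^2\le 0.0415\,\|\alphahnv\|_2^2.$$
For (b), $\K$ commutes with $\U\,\text{diag}(g(ts_i))\,\U^\top$ and $\K\alphahnv=\yv=\fhnv(\X)$, so the same argument pushes $\K$ through the diagonal factor and the bound applies in terms of $\|\yv\|_2^2$.

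For parts (c) and (d), the simultaneous-diagonalizability hypothesis is what makes the argument work: under the change of basis $\U^\top$, the vector $\U^\top\alphahnv=\U^\top\alphanv+\Ss^{-1}\U^\top\epsv$ has uncorrelated entries with variances $\sigma_{\alpha,j}^2+s_j^{-2}\sigma_\varepsilon^2$, where $\sigma_{\alpha,j}^2$ are the eigenvalues of $\Sigmaalpha$ (shared with $\K$ up to ordering). Writing the $i$-th entry of the estimator difference as $\sum_j u_{ij}g(ts_j)(\U^\top\alphahnv)_j$ with $u_{ij}$ the entries of $\U$, taking expectation kills the cross terms, and the identity $\E[(\alphahni)^2]=\sum_j u_{ij}^2(\sigma_{\alpha,j}^2+s_j^{-2}\sigma_\varepsilon^2)$, obtained by the same change of basis, delivers (c). Part (d) is the same calculation with the standard basis vector replaced by $\kv(\xsv)$, yielding the bound in terms of $\E[\fhn(\xsv)^2]$.
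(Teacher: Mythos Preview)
Your proposal is correct and follows essentially the same route as the paper: write the difference as $\U\,\mathrm{diag}(g(ts_i))\,\U^\top$ acting on $\alphahnv$ (or $\yv$), reduce everything to the scalar bound $\sup_{x\ge 0}g(x)^2\approx 0.0415$, and for (c)--(d) exploit the simultaneous diagonalizability so that the second moment of $\U^\top\alphahnv$ is diagonal. The only cosmetic differences are that the paper phrases (c)--(d) via the matrix identity $\E(\yv\yv^\top)=\Sigmaalpha\K^2+\sigma_\varepsilon^2\I$ and commutativity rather than working componentwise in the eigenbasis, and it locates the critical point through the Lambert~$W$ function rather than your equivalent equation $x=2\log(1+x)$.
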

Two typical options for $\Sigmaalpha$ are $\sigma_{\alpha}^2\I$ and $\sigma_\beta^2\K^{-1}$, where the second formulation implies $\bm{\beta_0}\sim (\nv,\sigma_\beta^2\I)$ in the feature space formulation of KRR from Equation \ref{eq:krr_obj_beta}.

In Proposition \ref{thm:kgf_krr_y}, we compare the distances to the observation vector, $\yv$, of the KGF and KRR solutions. According to parts \emph{(a)} and \emph{(c)}, for a given regularization, the furthest possible (among all values of $\yv$, or expected $y_i^2$) normalized KGF solution lies closer to the observations than the furthest possible normalized KRR solution does. Analogously, according to parts \emph{(b)} and \emph{(d)}, for a given regularization, the furthest possible KRR solution lies closer to zero than the furthest possible KGF solution does.
These results agree with those of \citet{richards2021comparing}, who state that, for linear regression, when the signal-to-noise ratio is high, gradient descent tends to outperform ridge regression: For low noise, a solution that lies closer to the observations, as provided by KGF, is beneficial.
\begin{prop}~\\
\label{thm:kgf_krr_y}
For $t\geq 0$, $\yv\in \R^n$,
\begin{equation*}
\begin{aligned}
&\text{(a)} && \max_{\yv\neq \nv}\left(\frac{\left\|\fhfv(\X,t)-\yv\right\|_2}{\left\|\yv\right\|_2}\right)\leq\max_{\yv\neq \nv}\left(\frac{\left\|\fhrv(\X,1/t)-\yv\right\|_2}{\left\|\yv\right\|_2}\right),\\
&\text{(b)} && \max_{\yv\neq \nv}\left(\frac{\left\|\fhrv(\X,1/t)\right\|_2}{\left\|\yv\right\|_2}\right)\leq\max_{\yv\neq \nv}\left(\frac{\left\|\fhfv(\X,t)\right\|_2}{\left\|\yv\right\|_2}\right).
\end{aligned}
\end{equation*}
For $t\geq 0$, $\yv=\K\alphanv+\epsv,\ \alphanv\sim (\nv, \Sigmaalpha),\ \epsv\sim (\nv, \sigma_\varepsilon^2\I)$, where $\Sigmaalpha$ and $\K$ are simultaneously diagonalizable, and $i=1,2,\dots n$,
\begin{equation*}
\begin{aligned}
&\text{(c)} && \max_{\E_{\epsv,\alphanv}(y_i^2)\neq 0}\left(\frac{\E_{\epsv,\alphanv}\left(\left(\fhf(\xvi,t)-y_i\right)^2\right)}{\E_{\epsv,\alphanv}\left(y_i^2\right)}\right)\\
& &&\leq\max_{\E_{\epsv,\alphanv}(y_i^2)\neq 0}\left(\frac{\E_{\epsv,\alphanv}\left(\left(\fhr(\xvi,1/t)-y_i\right)^2\right)}{\E_{\epsv,\alphanv}\left(y_i^2\right)}\right),\\
&\text{(d)} && \max_{\E_{\epsv,\alphanv}(y_i^2)\neq 0}\left(\frac{\E_{\epsv,\alphanv}\left(\fhr(\xvi,1/t)^2\right)}{\E_{\epsv,\alphanv}\left(y_i^2\right)}\right)\\
& &&\leq\max_{\E_{\epsv,\alphanv}(y_i^2)\neq 0}\left(\frac{\E_{\epsv,\alphanv}\left(\fhf(\xvi,t)^2\right)}{\E_{\epsv,\alphanv}\left(y_i^2\right)}\right).
\end{aligned}
\end{equation*}
\end{prop}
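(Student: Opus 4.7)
The plan is to reduce all four claims to pointwise scalar inequalities on the eigenvalues of $\K$ through a joint diagonalization. By Lemma~\ref{thm:rewrite_alpha_krr} I first rewrite
\begin{equation*}
\fhfv(\X,t)=(\I-\exp(-t\K))\yv,\qquad \fhrv(\X,1/t)=(\I-(\I+t\K)^{-1})\yv,
\end{equation*}
and diagonalize $\K=\U\Ss\U^\top$ with $\Ss=\operatorname{diag}(s_1,\dots,s_n)$, $s_i\geq 0$. The residual operators $e^{-t\K}$ and $(\I+t\K)^{-1}$ then have eigenvalues $e^{-ts_i}$ and $(1+ts_i)^{-1}$, while the prediction operators $\I-e^{-t\K}$ and $\I-(\I+t\K)^{-1}$ have eigenvalues $1-e^{-ts_i}$ and $ts_i/(1+ts_i)$.

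For (a) and (b) the outer maximum over $\yv$ is exactly the operator norm of a symmetric filter, i.e.\ its largest eigenvalue in absolute value. Since the residual filters decrease and the prediction filters increase in $s$, (a) will reduce to $e^{-t\smin}\leq (1+t\smin)^{-1}$ and (b) to $t\smax/(1+t\smax)\leq 1-e^{-t\smax}$; both are equivalent rearrangements of the elementary inequality $e^x\geq 1+x$, so these parts follow immediately.

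For (c) and (d) I would diagonalize $\Sigmaalpha=\U\D\U^\top$ in the same basis and set $\tilde{\yv}=\U^\top\yv=\Ss\tilde{\alphanv}+\tilde{\epsv}$. Since $\Cov(\tilde{\alphanv})=\D$ and $\Cov(\tilde{\epsv})=\sigma_\varepsilon^2\I$ are diagonal, $\Cov(\tilde{\yv})=\bm{\Lambda}:=\Ss^2\D+\sigma_\varepsilon^2\I$ is diagonal and $\E(\tilde{y}_j\tilde{y}_k)=\Lambda_{jj}\delta_{jk}$. Expanding $y_i=\sum_j U_{ij}\tilde{y}_j$ and $\fhf(\xvi,t)-y_i=-\sum_j U_{ij}e^{-ts_j}\tilde{y}_j$ (with the analogous expressions for KRR and for the prediction-norm versions), each expectation becomes a nonnegative weighted sum over $j$ with weights $U_{ij}^2\Lambda_{jj}$. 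For any fixed $i$ the KGF/KRR ratio is then a convex combination of $e^{-2ts_j}$ versus $(1+ts_j)^{-2}$ in (c), and of $(1-e^{-ts_j})^2$ versus $(ts_j/(1+ts_j))^2$ in (d); the squared pointwise inequalities from the previous paragraph bound each ratio term by term in $j$, hence for every $i$, and in particular for the maximum over $i$. The main obstacle is exactly this decoupling: without the simultaneous diagonalizability of $\Sigmaalpha$ and $\K$, $\Cov(\tilde{\yv})$ fails to be diagonal, the cross-terms $\E(\tilde{y}_j\tilde{y}_k)$ do not vanish, and the expectations no longer collapse to clean weighted averages of the per-eigenvalue filters.
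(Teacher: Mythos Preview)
Your argument is correct. Parts (a) and (b) are handled exactly as in the paper: the outer maximum is the operator norm of a symmetric filter, and the two scalar inequalities you state are precisely the ones the paper reduces to, both consequences of $e^x\geq 1+x$.

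For (c) and (d) your route differs slightly from the paper's. The paper also computes $\E(\yv\yv^\top)=\Sigmaalpha\K^2+\sigma_\varepsilon^2\I$ and uses its commutation with $\K$, but then rewrites each ratio as $\|A\sqrt{M}\bm{e_i}\|_2^2/\|\sqrt{M}\bm{e_i}\|_2^2$ (with $M=\Sigmaalpha\K^2+\sigma_\varepsilon^2\I$ and $A$ the relevant filter) and identifies the maximum over $i$ with the squared operator norm of $A$, namely $e^{-2t\smin}$, $(1+t\smin)^{-2}$, $(1-(1+t\smax)^{-1})^2$ and $(1-e^{-t\smax})^2$; the comparison is then made between these four scalars. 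You instead stay coordinatewise: after rotating to $\tilde{\yv}=\U^\top\yv$ you obtain, for each fixed $i$, the same nonnegative weights $U_{ij}^2\Lambda_{jj}$ in numerator and denominator, so the KGF and KRR ratios are weighted averages of $e^{-2ts_j}$ vs.\ $(1+ts_j)^{-2}$ (resp.\ $(1-e^{-ts_j})^2$ vs.\ $(ts_j/(1+ts_j))^2$), and the per-$j$ scalar inequality yields the per-$i$ inequality directly. This buys you a slightly stronger conclusion (the inequality holds index by index, not only at the maximizer) and it sidesteps the paper's identification of the max over the finite set $\{\sqrt{M}\bm{e_i}\}_i$ with the full operator norm of $A$, a step that is in general only an upper bound unless some $\sqrt{M}\bm{e_i}$ happens to align with the extremal eigenvector of $\K$.
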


Finally, if we assume that a true function exists, parameterized by the true $\alphanv$ as
$$\begin{bmatrix}\fnv(\X)\\\fnv(\Xs)\end{bmatrix}=\begin{bmatrix}\K\\\Ks\end{bmatrix}\alphanv,$$
and with observations according to $\yv=\fnv(\X)+\epsv$, we can calculate the expected squared differences between the estimated and true models, something that is often referred to as the risk: $\text{Risk}(\bm{\hat{\theta}};\bm{\theta_0})=\E\left(\|\bm{\hat{\theta}}-\bm{\theta_0}\|_2^2\right)$. In Proposition \ref{thm:kgf_krr_risk}, which is an adaptation of Theorems 1 and 2 by \citet{ali2019continuous}, the KGF estimation and prediction risks are bounded in terms of the corresponding KRR risks. In all three cases, the KGF risk is less than 1.69 times the KRR risk.

\begin{prop}~\\
\label{thm:kgf_krr_risk}
For $t\geq 0$, $\yv=\fnv(\X)+\epsv=\K\alphanv+\epsv$, $\epsv\sim(\bm{0},\sigma_\varepsilon^2\I)$,
\begin{itemize}
\item[(a)]
For any $\alphanv$, $\text{\emph{Risk}}(\alphahfv(t);\alphanv)\leq 1.6862\cdot\text{\emph{Risk}}(\alphahrv(1/t);\alphanv)$.
\item[(b)]
For any $\alphanv$,\\ $\text{\emph{Risk}}(\fhfv(\X,t);\fnv(\X))\leq 1.6862\cdot\text{\emph{Risk}}(\fhrv(\X,1/t);\fnv(\X))$.
\item[(c)]
For $\alphanv\sim \left(\nv,\Sigmaalpha\right)$, where $\Sigmaalpha$ and $\K$ are simultaneously diagonalizable,\\$\E_{\alphanv}\left(\text{\emph{Risk}}(\fhf(\xsv,t);f_0(\xsv))\right)\\\leq 1.6862\cdot\E_{\alphanv}\left(\text{\emph{Risk}}(\fhr(\xsv,1/t);f_0(\xsv))\right)$, 
where $f_0(\xsv)\in \fnv(\Xs)$.
\end{itemize}
\end{prop}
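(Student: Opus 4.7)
The plan is to diagonalize $\K = \U\Ss\U^\top$ (with $\Ss = \text{diag}(s_1,\ldots,s_n)$) and reduce each part of the proposition to a scalar inequality in the single variable $u = ts \geq 0$. Since $\exp(-t\K)$ and $(\I + t\K)^{-1}$ both commute with $\K$ and share its eigenbasis, the KGF and KRR parameter errors decompose cleanly as $\alphahfv(t)-\alphanv = -\exp(-t\K)\alphanv + (\I-\exp(-t\K))\K^{-1}\epsv$ and $\alphahrv(1/t)-\alphanv = -(\I+t\K)^{-1}\alphanv + (\I-(\I+t\K)^{-1})\K^{-1}\epsv$. Passing to the eigenbasis of $\K$ and taking the expectation over $\epsv$ yields, for part (a),
\begin{equation*}
\text{Risk}(\alphahfv(t);\alphanv) = \sum_i\left[e^{-2u_i}\tilde\alpha_{0,i}^2 + \sigma_\varepsilon^2\frac{(1-e^{-u_i})^2}{s_i^2}\right],
\end{equation*}
with the KRR risk taking the same form after replacing $e^{-u_i}$ by $(1+u_i)^{-1}$.

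To compare the two risks I would establish that each summand of the KGF risk is bounded by $1.6862$ times the corresponding summand of the KRR risk. Since the bias and variance contributions carry different non-negative coefficients $\tilde\alpha_{0,i}^2$ and $\sigma_\varepsilon^2/s_i^2$, it suffices to prove two scalar inequalities for all $u \geq 0$: (i) $e^{-2u}(1+u)^2 \leq 1$ (bias ratio), and (ii) $h(u)^2 \leq 1.6862$, where $h(u) := (1-e^{-u})(1+u)/u$ (variance ratio). Inequality (i) is immediate since $(1+u)e^{-u}$ has derivative $-ue^{-u}\leq 0$ and equals $1$ at $u=0$.

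The main obstacle is (ii). The function $h$ satisfies $h(0^+)=1$ and $h(\infty)=1$, so any interior extremum is a maximum detected by $h'(u)=0$; a short calculation reduces this stationarity condition to the transcendental equation $e^u = u^2+u+1$, whose unique positive root $u^*$ lies near $1.793$, giving $h(u^*)^2 \approx 1.6862$. To make this rigorous I would bracket $u^*$ in a small interval via sign analysis of $e^u - (u^2+u+1)$ and evaluate $h$ at the enclosing endpoints; monotonicity of $h$ on each side of $u^*$ then yields the stated numerical constant.

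Parts (b) and (c) follow by the same eigendecomposition with minor bookkeeping changes. For (b), using $\fhfv(\X,t) - \fnv(\X) = \K(\alphahfv(t) - \alphanv)$ multiplies each eigencomponent of the bias by $s_i^2$ and strips the $1/s_i^2$ from the variance, so the same two pointwise inequalities apply. For (c), taking the additional expectation over $\alphanv \sim (\nv,\Sigmaalpha)$ with $\Sigmaalpha$ and $\K$ co-diagonalizable replaces $\tilde\alpha_{0,i}^2$ by the $i$-th eigenvalue of $\Sigmaalpha$, while $\kv(\xsv)^\top$ enters as a non-negative weighting over eigencomponents; the same scalar inequalities propagate unchanged.
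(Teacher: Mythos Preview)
Your proposal is correct and follows essentially the same route as the paper: diagonalize $\K$, decompose the risks into bias and variance summands indexed by the eigenvalues, and compare termwise via the two scalar inequalities $e^{-u}\leq (1+u)^{-1}$ and $(1-e^{-u})(1+u)/u \leq 1.2985$. The only cosmetic difference is that the paper imports both scalar bounds from \citet{ali2019continuous} (so $1.6862=1.2985^2$ appears without derivation), whereas you locate the maximizer of $h(u)=(1-e^{-u})(1+u)/u$ yourself via the stationarity equation $e^u=u^2+u+1$; your calculation of $u^\ast\approx 1.793$ and $h(u^\ast)^2\approx 1.6862$ is correct. For part~(c) your sketch is also right: once you take $\E_{\alphanv}$ with $\Sigmaalpha$ co-diagonalizable with $\K$, the cross terms in the squared bias vanish in the eigenbasis and the risk becomes $\sum_i (\U^\top\kv(\xsv))_i^2\bigl[\sigma_{\alpha,i}e^{-2ts_i}+\sigma_\varepsilon^2(1-e^{-ts_i})^2/s_i^2\bigr]$, which is exactly the weighted-eigencomponent form you describe; the paper writes the same computation in matrix/trace notation rather than passing explicitly to components.
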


To summarize this section, the KGF and KRR solutions tend to agree well both in terms of the parameter and prediction vectors and in terms of the risks. Which algorithm actually performs better, depends on the specific data, sometimes KGF performs slightly better than KRR and sometimes vice versa. Proposition \ref{thm:kgf_krr_y} suggests that the KGF solution lies closer to the observations than KRR solution does, which can often be an advantage, however not in the presence of extreme outliers. These conclusions are supported by the experiments in Section \ref{sec:exps}.

\section{Kernel Regression with the \texorpdfstring{$\ell_\infty$}{l1} and \texorpdfstring{$\ell_1$}{linf} Norms}
\label{sec:l1linf}
In this section, we replace the squared $\ell_2$ norm of KRR in Equation \ref{eq:equ_alpha2} by the $\ell_\infty$ and $\ell_1$ norms, respectively, to obtain $\ell_\infty$ and $\ell_1$ regularized kernel regression, which we abbreviate as K$\ell_\infty$R and K$\ell_1$R, respectively. We also connect the explicitly regularized algorithms K$\ell_\infty$R and K$\ell_1$R to kernel sign gradient descent, KSGD, and kernel coordinate descent, KCD, with early stopping, similarly to how we related KRR and KGF in Section \ref{sec:kgf}. 
The six algorithms and their abbreviations are stated in Table \ref{tab:abbrs}.

\begin{table}
\caption{Kernel regression algorithms}
\center
\begin{tabular}{|l|l|}
\hline
Algorithm & Abbreviation\\
\hline
Kernel ridge regression ($\ell_2$ penalty) & KRR\\
Kernel regression with the $\ell_\infty$ penalty (robust) & K$\ell_\infty$R\\
Kernel regression with the $\ell_1$ penalty (sparse) & K$\ell_1$R\\
Kernel gradient descent & KGD\\
Kernel sign gradient descent (robust) & KSGD\\
Kernel coordinate descent (sparse) & KCD\\
\hline
\end{tabular}
\label{tab:abbrs}
\end{table}

The objective functions  for K$\ell_\infty$R and K$\ell_1$R are, respectively
\begin{equation}
\label{eq:alphainf}
\argmin_{\alphav\in \R^n}\frac12\left\|\yv-\K\alphav\right\|_{\K^{-1}}^2+\lambda\|\alphav\|_\infty
\end{equation}
and
\begin{equation}
\label{eq:alpha1}
\argmin_{\alphav\in \R^n}\frac12\left\|\yv-\K\alphav\right\|_{\K^{-1}}^2+\lambda\|\alphav\|_1.
\end{equation}
In contrast to KRR, unless the data is uncorrelated, no closed-form solutions exist for Equations \ref{eq:alphainf} and \ref{eq:alpha1}. However, the problems are still convex (strictly convex if $\K$ is strictly positive definite, which is the case for all kernels with infinite-dimensional feature expansions) and solutions can be obtained using the iterative optimization method proximal gradient descent \citep{rockafellar1976monotone}, which, in contrast to standard gradient descent, can handle the discontinuities of the gradients of the $\ell_\infty$ and $\ell_1$ norms.

Applying $\ell_\infty$ regularization, which penalizes the largest parameter(s), promotes a solution with no extreme parameter values. Thus, Equation \ref{eq:alphainf} promotes a solution where the impact of extreme observations is alleviated, which might be beneficial if outliers are present in the data.
Vice versa, $\ell_1$, or lasso, regularization promotes a sparse solution, where parameters are added sequentially to the model, with the most significant parameters included first. 
We further demonstrate this in Section \ref{sec:exp_toy}.

\subsection{Regularization through Early Stopping}
\label{sec:early_stopping}
The regularization strength must be carefully selected for the model to generalize well on new data. While a too large regularization results in poor performance due to a too simple model, a too small regularization might result in a too complex model that incorporates the noise in the training data, and thus generalizes poorly to new data (overfitting). This is sometimes referred to as the bias-variance tradeoff. In practice, the regularization strength is usually selected by some sort of cross-validation, where parts of the data are kept aside during training and then used for validation once the model is trained. By training multiple models, all with different regularization strengths, one can select the model that performs best on the validation data.

However, applying the proximal gradient descent algorithm might be computationally heavy, especially when evaluating several different regularization strengths. For some classes of models, there exist computationally efficient versions of leave-one-out and generalized cross-validation, but, as demonstrated in Appendix \ref{sec:fast_cv}, these do not perform well in this setting. Instead, we use another approach: In Section \ref{sec:kgf}, we showed how the solution of KGF with early stopping is very similar to that of KRR, with a later stopping time, $t$, corresponding to a smaller regularization strength, $\lambda$. Thus, running KGD until convergence once, all different stopping times, or regularization strengths, between $t=0$ (corresponding to $\lambda=\infty$) and $t=t_{\max}$ (corresponding to $\lambda=0$) are obtained through this single execution of the algorithm. 
(For more detailed descriptions of the connections between explicit regularization and early stopping, see the works by e.g.\ \citet{friedman2004gradient}, \citet{tibshirani2015general} or \citet{allerbo2022elastic}.)
Analogously, if we can find iterative optimization algorithms that correspond to the $\ell_\infty$ and $\ell_1$ penalties, instead of solving the problem using proximal gradient descent once for each value of $\lambda$, all different early stopping regularization strengths could be obtained by one single call of that algorithm. 

The similarities between $\ell_1$ regularization (lasso) and the iterative optimization method forward stagewise regression (also known as coordinate descent, or, for infinitesimal step size, coordinate flow) are well studied, for instance by \citet{efron2004least} and \citet{hastie2007forward}. \citet{efron2004least} show that under certain conditions, including for uncorrelated data, the solutions paths of lasso and coordinate flow coincide, but even when these conditions do not hold, the solution paths tend to be very similar, where smaller correlations lead to larger similarities.
Similarly, by comparing Equations \ref{eq:kgf_s} and \ref{eq:krr_equ}, or Equations 6 and 10 by \citet{allerbo2022elastic} for standard linear regression, we note that also for $\ell_2$ regularization and gradient flow, the solution paths exactly coincide for uncorrelated data.

The topic of a gradient-based algorithm similar to $\ell_\infty$ regularization does not seem to be as well studied. However, in Proposition \ref{thm:sgn_inf}, we show that the solutions of $\ell_\infty$ regularization and sign gradient flow coincide for uncorrelated data. This is shown using the constrained form of regularized regression, which, due to Lagrangian duality, is equivalent to the penalized form.

Before stating the proposition, we first review the three gradient-based algorithms and their corresponding infinitesimal step-size versions.
For a loss function, $L(\cdot)$, that quantifies the reconstruction error, we denote the gradient and its maximum component at time step $k$ as
\begin{equation*}
\begin{aligned}
\bm{g}_k&:=\frac{\partial L(\bm{\theta}_k;\X,\yv)}{\partial \bm{\theta}_k}\\
m_k&:=\argmax_d(|\bm{g}_k|)_d,
\end{aligned}
\end{equation*}
where the absolute value of the gradient is evaluated element-wise.
The update rules of gradient descent/flow, sign gradient descent/flow, and coordinate descent/flow are stated in Equation \ref{eq:gd_updates}, where $\eta$ denotes the optimization step size, and where $\bm{I_m}(t)$ is a diagonal matrix that defines which gradient component(s) that is updated at time $t$ (for details about $\bm{I_m}$, see \citet{allerbo2022elastic}).
\begin{equation}
\label{eq:gd_updates}
\begin{aligned}
\text{Gradient Descent: }& &\bm{\theta}_{k+1}&=\bm{\theta}_k-\eta\cdot\bm{g}_k\\
\text{Gradient Flow: }& &\frac{\partial \bm{\theta}(t)}{\partial t}&=-\bm{g}(t)\\
\text{Sign Gradient Descent: }& &\bm{\theta}_{k+1}&=\bm{\theta}_k-\eta\cdot\sgn\left(\bm{g}_k\right)\\
\text{Sign Gradient Flow: }& &\frac{\partial \bm{\theta}(t)}{\partial t}&=-\sgn(\bm{g}(t))\\
\text{Coordinate Descent: }& &\left(\bm{\theta}_{k+1}\right)_{m_k}&=\left(\bm{\theta}_k\right)_{m_k}-\eta\cdot\sgn\left(\bm{g}_k\right)_{m_k}\\
\text{Coordinate Flow: }& &\frac{\partial \bm{\theta}(t)}{\partial t}&=-\bm{I_m}(t)\cdot\sgn(\bm{g}(t)).\\
\end{aligned}
\end{equation}
\begin{remark}
For coordinate descent (flow), in each iteration (at each time), only the coordinate(s) corresponding to the maximal absolute gradient value is updated.
\end{remark}
\begin{remark}
The name ``coordinate descent'' is sometimes also used for other, related, algorithms.
\end{remark}

\begin{prop}~\\
\label{thm:sgn_inf}
\begin{enumerate}
\item[(a)]
Let $\betahinfv(c)$ denote the solution to
$$\min_{\betav\in\R^p}\left\|\yv-\X\betav\right\|_{2}^2 \text{ s.t. } \|\betav\|_\infty\leq c,$$
and let $\betahsgfv(t)$ denote the solution to
$$\frac{d\betahv(t)}{dt}=-\sgn\left(\frac{\partial}{\partial\betahv(t)}\left(\left\|\yv-\X\betahv(t)\right\|_{2}^2\right)\right),\ \betahv(0)=\nv.$$
When $\X^\top\X$ is diagonal, with elements $\{s_{ii}\}_{i=1}^p$, the two solutions decompose element-wise and coincide for $c=t$:
$$\betahinfi(t)=\betahsgfi(t)=\sgn((\X^\top\yv)_i/s_{ii})\cdot\min(t,|(\X^\top\yv)_i/s_{ii}|).$$
\item[(b)]
Let $\alphahinfv(c)$ denote the solution to
$$\min_{\alphav\in\R^n}\left\|\yv-\K\alphav\right\|_{\K^{-1}}^2 \text{ s.t. } \|\alphav\|_\infty\leq c,$$
and let $\alphahsgfv(t)$ denote the solution to
$$\frac{d\alphahv(t)}{dt}=-\sgn\left(\frac{\partial}{\partial\alphahv(t)}\left(\left\|\yv-\K\alphahv(t)\right\|_{\K^{-1}}^2\right)\right),\ \alphahv(0)=\nv.$$
When $\K$ is diagonal, with elements $\{k_{ii}\}_{i=1}^n$, the two solutions decompose element-wise and coincide for $c=t$:
$$\alphahinfi(t)=\alphahsgfi(t)=\sgn(y_i/k_{ii})\cdot\min(t,|y_i/k_{ii}|).$$
\end{enumerate}
\end{prop}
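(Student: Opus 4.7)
The plan is to exploit the diagonality assumption in each part to decouple both the constrained optimization and the sign gradient flow into independent one-dimensional problems, and then verify that those one-dimensional problems share the same solution. I will carry out part (a) in detail; part (b) follows by the same argument after observing that $\|\yv-\K\alphav\|_{\K^{-1}}^2 = \yv^\top\K^{-1}\yv - 2\alphav^\top\yv + \alphav^\top\K\alphav$, so that $\K$ plays the role of $\X^\top\X$ and $\yv$ the role of $\X^\top\yv$.

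For the constrained side of (a), I would expand
\begin{equation*}
\|\yv-\X\betav\|_2^2 = \yv^\top\yv - 2\betav^\top\X^\top\yv + \betav^\top(\X^\top\X)\betav.
\end{equation*}
With $\X^\top\X=\text{diag}(s_{ii})$, the quadratic and linear terms in $\betav$ decompose as $\sum_i\bigl(s_{ii}\beta_i^2 - 2\beta_i(\X^\top\yv)_i\bigr)$, while the constraint $\|\betav\|_\infty\le c$ becomes $|\beta_i|\le c$ for every $i$. The problem therefore splits into $p$ independent scalar minimizations whose unconstrained optimizer is $(\X^\top\yv)_i/s_{ii}$; projecting onto $[-c,c]$ yields $\betahinfi(c)=\sgn((\X^\top\yv)_i/s_{ii})\cdot\min(c,|(\X^\top\yv)_i/s_{ii}|)$.

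For the sign gradient flow, the gradient of $\|\yv-\X\betav\|_2^2$ equals $-2\X^\top(\yv-\X\betav)$, whose $i$-th component reduces under diagonality to $-2\bigl((\X^\top\yv)_i - s_{ii}\beta_i\bigr)$ and depends on $\beta_i$ alone. The ODE thus decouples into
\begin{equation*}
\frac{d\beta_i(t)}{dt} = \sgn\bigl((\X^\top\yv)_i - s_{ii}\beta_i(t)\bigr),\quad \beta_i(0)=0.
\end{equation*}
Starting at zero, the right-hand side equals $\sgn((\X^\top\yv)_i/s_{ii})$ (assuming $s_{ii}>0$), so $\beta_i$ moves at unit speed toward the unconstrained minimizer $(\X^\top\yv)_i/s_{ii}$, reaching it at time $|(\X^\top\yv)_i/s_{ii}|$ and remaining there thereafter. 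This gives $\betahsgfi(t)=\sgn((\X^\top\yv)_i/s_{ii})\cdot\min(t,|(\X^\top\yv)_i/s_{ii}|)$, which coincides with the constrained solution at $c=t$. Part (b) is then obtained verbatim: the gradient of $\|\yv-\K\alphav\|_{\K^{-1}}^2$ is $-2(\yv-\K\alphav)$, and when $\K$ is diagonal the $i$-th coordinate of the flow is $d\alpha_i/dt=\sgn(y_i-k_{ii}\alpha_i)$, from which the stated formula follows.

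The main obstacle is the non-smoothness of the sign function at zero: once a coordinate trajectory reaches the unconstrained optimum, the gradient vanishes and the classical ODE loses meaning. I would handle this by interpreting the equation as a Filippov differential inclusion (equivalently, by adopting the convention $\sgn(0):=0$), under which the unconstrained optimizer is a stable equilibrium that the trajectory enters in finite time and never leaves. The concatenation of the linear-in-$t$ segment and the constant segment is then the unique solution from the zero initial condition, completing the proof.
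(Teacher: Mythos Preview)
Your proposal is correct and follows essentially the same route as the paper: exploit diagonality to decouple both the constrained problem and the sign-gradient-flow ODE into independent one-dimensional problems, solve each by projecting the unconstrained minimizer onto $[-c,c]$ (respectively, by moving at unit speed toward it), and observe the formulas coincide at $c=t$. Your Filippov/$\sgn(0)=0$ remark adds a bit more care than the paper's informal treatment of the equilibrium, but the argument is otherwise identical.
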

\begin{remark}
For infinitesimal step size, sign gradient descent, Adam \citep{kingma2014adam}, and RMSProp \citep{hinton2012neural} coincide, up to constants of order $10^{-8}$ added for numerical stability \citep{balles2018dissecting}. Thus, in the uncorrelated case, $\ell_\infty$ regularization also coincides with Adam and RMSProp with infinitesimal step size.
\end{remark}
\begin{remark}
Similarly to ridge regression and gradient flow, and lasso and coordinate flow, the exact coincidence of $\ell_\infty$ regularization and sign gradient flow only holds for uncorrelated data. However, due to continuity, unless the correlations are very large, the solution paths still tend to be very similar.
\end{remark}

\subsection{Properties of the Early Stopping Algorithms}
One might wonder whether the fact that the exact coincidence of the solution paths for explicit and early stopping regularization holds only for uncorrelated data, implies that the properties of the explicitly regularized methods do not transfer to the early stopping methods on uncorrelated data.
Or is it still the case that KSGD, just like K$\ell_\infty$R, provides a solution that is robust to outliers and that KCD, like K$\ell_1$R, promotes a sparse solution that includes the most significant observations first? In Proposition \ref{thm:other_objs}, we show that this is indeed the case by using the equivalent interpretation of kernel regression as linear regression in feature space from Equation \ref{eq:krr_obj_beta}.

\begin{prop}~\\
\label{thm:other_objs}
Solving $\min_{\alphav}\left\|\yv-\K\alphav\right\|_{\K^{-1}}^2$ for $\alphav$ with 
\begin{itemize}
\item gradient descent, is equivalent of solving $\min_{\betav}\left\|\yv-\Ph\betav\right\|_2^2$ for $\betav$ with gradient descent.
\item sign gradient descent, is equivalent of solving $\min_{\betav}\left\|\yv-\Ph\betav\right\|_1$ for $\betav$ with gradient descent.
\item coordinate descent, is equivalent of solving $\min_{\betav}\left\|\yv-\Ph\betav\right\|_\infty$ for $\betav$ with gradient descent.
\end{itemize}
\end{prop}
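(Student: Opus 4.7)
The plan is to exploit the identity $\K=\Ph\Ph^\top$ together with the change of variables $\betav=\Ph^\top\alphav$ that was already used to pass from Equation \ref{eq:krr_obj_alpha} to Equation \ref{eq:krr_obj_beta}. Under this correspondence the fitted values and residuals agree in the two parameterizations, since $\Ph\betav=\Ph\Ph^\top\alphav=\K\alphav$, and hence $\yv-\Ph\betav=\yv-\K\alphav$. Starting from the common initialization $\alphav_0=\nv$ and $\betav_0=\Ph^\top\nv=\nv$, I will verify case by case that each iterative update on $\alphav$, after left-multiplication by $\Ph^\top$, is exactly the corresponding update on $\betav$; a one-line induction then preserves the invariant $\betav_k=\Ph^\top\alphav_k$, so the predictions $\Ph\betav_k=\K\alphav_k$ coincide at every step.

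First I would compute the gradient of $\tfrac12\|\yv-\K\alphav\|_{\K^{-1}}^2$ with respect to $\alphav$, which simplifies to $\K\alphav-\yv$ because the $\K$ from the chain rule cancels the $\K^{-1}$ in the weighted norm. For the gradient-descent case, the $\betav$-space gradient of $\tfrac12\|\yv-\Ph\betav\|_2^2$ is $\Ph^\top(\Ph\betav-\yv)=\Ph^\top(\K\alphav-\yv)$, so applying $\Ph^\top$ to the KGD update $\alphav_{k+1}=\alphav_k-\eta(\K\alphav_k-\yv)$ reproduces the GD step on $\betav$. For the sign-gradient-descent case, a subgradient of $\|\yv-\Ph\betav\|_1$ is $-\Ph^\top\sgn(\yv-\Ph\betav)=\Ph^\top\sgn(\K\alphav-\yv)$, so applying $\Ph^\top$ to the KSGD update $\alphav_{k+1}=\alphav_k-\eta\,\sgn(\K\alphav_k-\yv)$ again reproduces the $\betav$-space step.

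The coordinate-descent case requires the most care. The KCD update moves only the coordinate $m_k=\argmax_i|(\K\alphav_k-\yv)_i|$, by $-\eta\,\sgn((\K\alphav_k-\yv)_{m_k})$, and applying $\Ph^\top$ to such a one-coordinate perturbation scales the $m_k$-th row of $\Ph$ (transposed) by that amount. On the $\betav$ side, a subgradient of $\|\yv-\Ph\betav\|_\infty$ is supported on precisely that same row of $\Ph$, because the max over $i$ of $|y_i-(\Ph\betav)_i|$ is attained at $\argmax_i|y_i-(\K\alphav)_i|=m_k$ by the residual identity, with matching sign $-\sgn(y_{m_k}-(\Ph\betav)_{m_k})=\sgn((\K\alphav_k-\yv)_{m_k})$. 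Hence the $\betav$-space GD step on $\|\yv-\Ph\betav\|_\infty$ coincides with $\Ph^\top$ applied to the KCD step. The main obstacle is exactly this argmax-matching, together with the non-differentiability of $\|\cdot\|_1$ and $\|\cdot\|_\infty$ at residual zeros and argmax ties; both are handled by the standard convention of selecting an arbitrary subgradient element and observing that any choice made on the $\alphav$ side is consistent with the same choice on the $\betav$ side, since the residuals coincide coordinatewise.
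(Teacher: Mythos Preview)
Your proposal is correct and follows essentially the same route as the paper: compute the gradient of the $\alphav$-objective as $\K\alphav-\yv$, compute the (sub)gradients of the three $\betav$-objectives, and then verify via the residual identity $\yv-\Ph\betav=\yv-\K\alphav$ that left-multiplying each $\alphav$-update by $\Ph^\top$ reproduces the corresponding $\betav$-update. You are slightly more explicit than the paper about the induction invariant $\betav_k=\Ph^\top\alphav_k$ and about argmax ties and zero residuals in the $\ell_1$/$\ell_\infty$ cases, but the argument is otherwise identical.
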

According to Proposition \ref{thm:other_objs}, KSGD and KCD correspond to feature space regression with the $\ell_1$ and $\ell_\infty$ loss functions, respectively. (Note that while we previously discussed different norms for the \emph{penalty term}, Proposition \ref{thm:other_objs} considers different norms for \emph{loss function}.)
Regression with the $\ell_1$ loss function, which is known as least absolute deviations, is a commonly used method for robust regression.
Vice versa, regression with the $\ell_\infty$ loss function minimizes the maximum residual, i.e.\ more extreme observations contribute more to the solution than less extreme observations do. 
~\\

Even for convex problems, a too large optimization step size will cause the solution to diverge. In Proposition \ref{thm:conv}, we present upper bounds for the step sizes, which guarantee that the solutions do not diverge.
\begin{prop}~\\
\label{thm:conv}
When solving $\min_{\alphav}\left\|\yv-\K\alphav\right\|_{\K^{-1}}^2$ for $\alphav$ with 
\begin{itemize}
\item 
gradient descent, i.e.\ when $\alphahv_{k+1}=\alphahv_k+\eta\cdot\left(\yv-\K\alphahv_k\right)$, then 
$$\eta\leq \frac2{\smax(\K)} \implies \|\yv-\K\alphahv_{k+1}\|_2\leq\|\yv-\K\alphahv_k\|_2,$$
where $\smax(\K)$ denotes the largest singular value of $\K$.
\item
sign gradient descent, i.e.\ when $\alphahv_{k+1}=\alphahv_k+\eta\cdot\sgn\left(\yv-\K\alphahv_k\right)$, then
$$\eta\leq \min_{1\leq i\leq n}\left(\left|\yv-\K\alphahv_k\right|_i\right) \implies \|\yv-\K\alphahv_{k+1}\|_1\leq\|\yv-\K\alphahv_k\|_1,$$
where the absolute value is taken element-wise.
\item 
coordinate descent, i.e.\ when $\left(\alphahv_{k+1}\right)_{m_k}=\left(\alphahv_k\right)_{m_k}+\eta\cdot\sgn\left(\yv-\K\alphahv_k\right)_{m_k}$, then
$$\eta\leq \min_{1\leq i\leq n}\left(\left|\yv-\K\alphahv_k\right|_i\right) \implies \|\yv-\K\alphahv_{k+1}\|_\infty\leq\|\yv-\K\alphahv_k\|_\infty,$$
where $m_k$ denotes the coordinate with the largest absolute value, and where the absolute value is taken element-wise.
\end{itemize}
\end{prop}
\begin{remark}
Note that depending on which algorithm is used, the convergence is obtained for different norms, similar to in Proposition \ref{thm:other_objs}.
\end{remark}

\begin{figure}[t]
\center
\includegraphics[width=\textwidth]{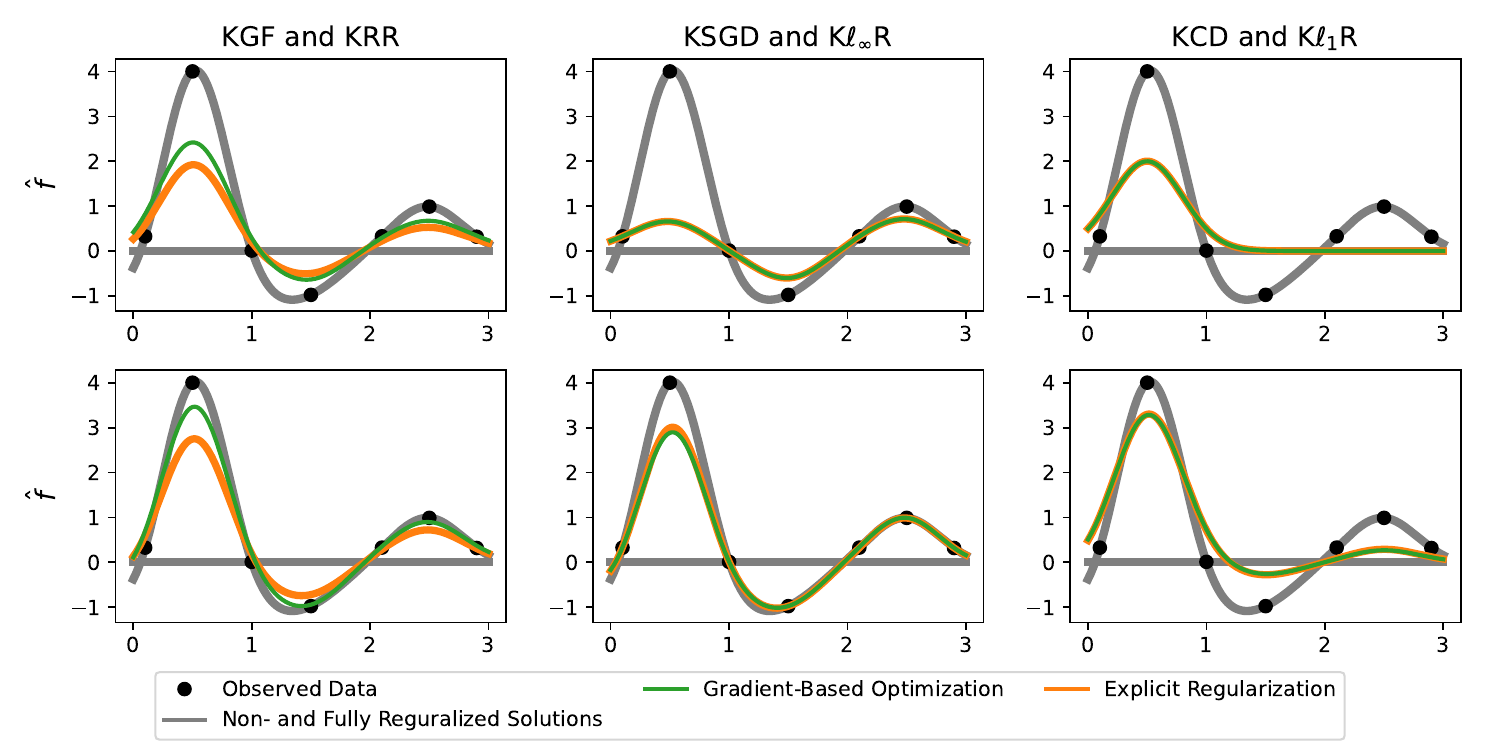}
\caption{Comparisons of the effects of KGF/KRR, KSGD/K$\ell_\infty$R, and KCD/K$\ell_1$R on $\fhv(\xv)$.
In the top panel, a larger regularization, or a shorter training time, is used than in the bottom panel.
For KGF/KRR, we use $t=1/\lambda$, while for the other two cases, $t$ and $\lambda$ are chosen so that the functions coincide as well as possible.\\
As proposed by Proposition \ref{thm:kgf_krr_diff}, the KGF and KRR solutions differ most where the non-regularized solution is large, and as suggested by Proposition \ref{thm:kgf_krr_y}, the KGF solution tends to lie closer to the observations than the KRR solution does.\\
For KSGD/K$\ell_\infty$R, the more extreme observations tend to be penalized harder, resulting in a more robust solution that is less sensitive to outliers.
For KCD/K$\ell_1$R, some observations do not contribute to the solution, resulting in peaks at the more significant observations.\\
The solutions obtained through early stopping are very similar to those obtained through explicit regularization, although not exactly identical.
}
\label{fig:comp_af}
\end{figure}

\section{Experiments}
\label{sec:exps}
In this section, we demonstrate our methods on synthetic and real data. We first demonstrate the six methods in Table \ref{tab:abbrs} on simple synthetic data in Section \ref{sec:exp_toy}. Then, in Section \ref{sec:exp_real}, we compare KSGD and K$\ell_\infty$R to four other robust kernel regression methods on five real data sets.
K$\ell_\infty$R and K$\ell_1$R were implemented by solving Equations \ref{eq:alphainf} and \ref{eq:alpha1} using proximal gradient descent, with an optimization step size of $0.01$, a step size that was also used for KGD, KSGD, and KCD.
We consistently used the Gaussian kernel, $k(\xv,\xpv)=\exp\left(-\frac{\|\xv-\xpv\|^2_2}{2\sigma^2}\right)$; in Appendix \ref{sec:more_exps}, we also present results for four additional kernels.
\subsection{Demonstrations on Synthetic Data}
\label{sec:exp_toy}
To get a better intuition of the six methods discussed in Sections \ref{sec:kgf} and \ref{sec:l1linf}, in this section, we demonstrate them on simple, synthetic data.

In Figures \ref{fig:comp_af} and \ref{fig:comp_a}, the similarities between explicit regularization and gradient-based optimization with early stopping are demonstrated. The solutions are very similar, although not exactly identical.
In Figure \ref{fig:comp_af}, we observe that for KGF and KRR, all residuals are treated equally and all parts of the function are updated at a pace that is proportional to the distance to the solution at convergence.
For KSGD and K$\ell_\infty$, which are less sensitive to outliers, all parts of the function are initially updated at the same pace, and, in contrast to more moderate observations, the most extreme observations are not fully incorporated until at the end of the training.
For KCD and K$\ell_1$R, which minimize the maximum residual, the most extreme observations are incorporated first, while more moderate observations are initially ignored.

\begin{figure}[btp]
\center
\includegraphics[width=\textwidth]{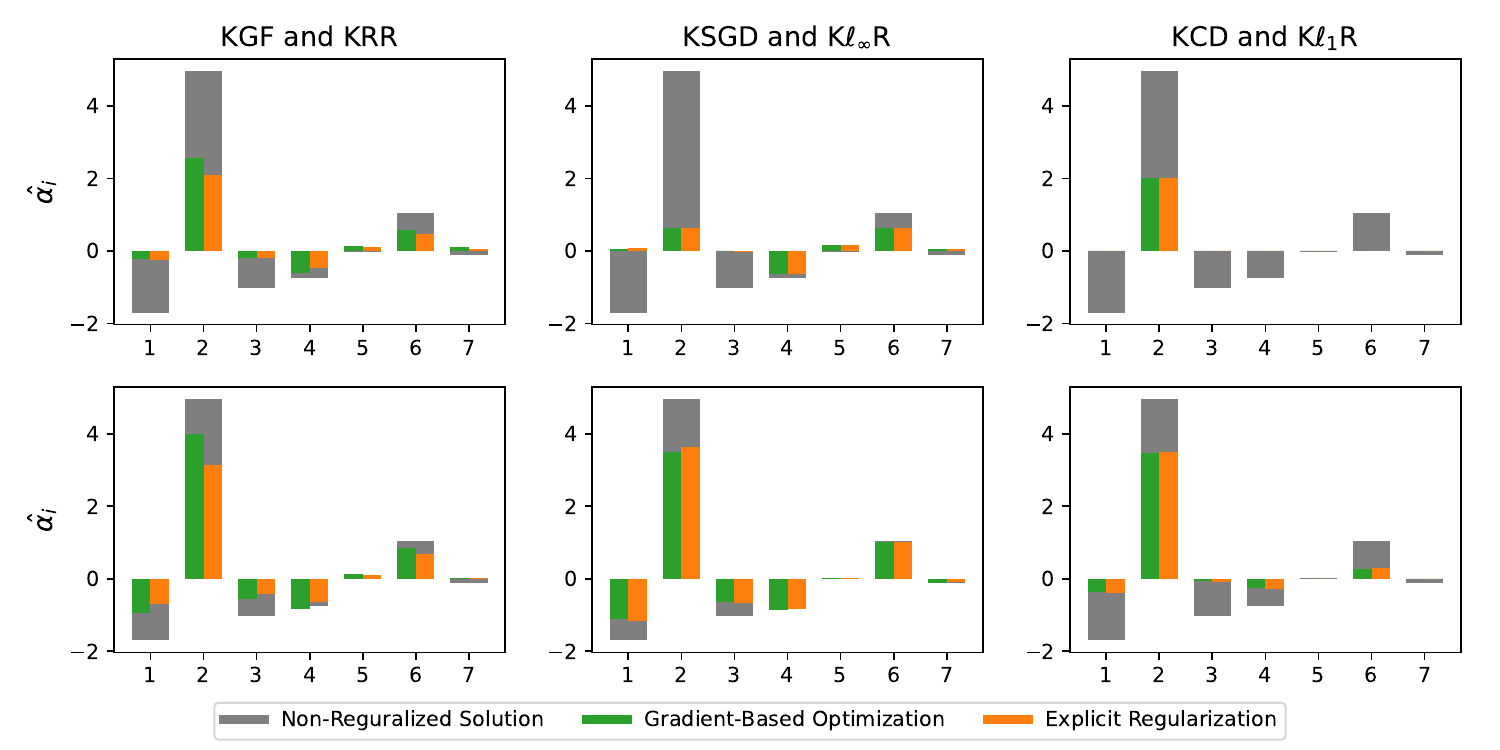}
\caption{Comparisons of the effects of KGF/KRR, KSGD/K$\ell_\infty$R, and KCD/K$\ell_1$R on the seven elements in $\alphahv$.
In the top panel, a larger regularization, or a shorter training time, is used than in the bottom panel.
For KGF/KRR, we use $t=1/\lambda$, while, for the two other cases, $t$ and $\lambda$ are chosen so that the functions coincide as well as possible.\\
For KSGD/K$\ell_\infty$R, the more extreme observations tend to be penalized harder, resulting in no extreme $\hat{\alpha}_i$'s.
For KCD/K$\ell_1$R, some observations do not contribute to the solution, resulting in the corresponding $\hat{\alpha}_i$'s being 0.\\
The solutions obtained through early stopping are very similar to those obtained through explicit regularization, although not exactly identical.
}
\label{fig:comp_a}
\end{figure}

In Figure \ref{fig:syn_expl}, we further compare KSGD/K$\ell_\infty$R and KCD/K$\ell_1$R to KGD/KRR.
For robust regression, to obtain data with outliers, 100 observations were sampled according to 
\begin{equation}
\label{eq:data_sin}
x\sim\mathcal{U}(-10,10),\ y=\sin\left(\frac\pi2\cdot x\right)+\mathcal{C}(0,0.1),
\end{equation}
where $\mathcal{U}(a,b)$ denotes the uniform distribution on $(a,b)$, and $\mathcal{C}(0,\gamma)$ the centered Cauchy distribution with scale parameter $\gamma$. 

To demonstrate sparse regression on a signal that is mostly zero, with a narrow but distinct peak, 100 observations were sampled according to 
\begin{equation}
\label{eq:data_gauss}
x\sim\mathcal{U}(-10,10),\ y=e^{-5\cdot x^2}+\mathcal{N}(0,0.1^2),
\end{equation}
where again $\mathcal{U}(a,b)$ denotes the uniform distribution and $\mathcal{N}(\mu,\sigma^2)$ denotes the normal distribution.
On the first (robust) data set, we compare KSGD, K$\ell_\infty$R, KGD, and KRR, and on the second (sparse) data set, we compare KCD, K$\ell_1$R, KGD, and KRR.
In both cases, bandwidth and regularization were selected by 10-fold cross-validation evaluating $30\times 30$ logarithmically spaced candidate values.

As expected, KGD/KRR are more sensitive to the outliers than KSGD/K$\ell_\infty$R are, with KGD being more sensitive than KRR, something that probably can be attributed to the fact that the KGF solution tends to lie closer to the observations than the KRR solution does, as suggested by Proposition \ref{thm:kgf_krr_y}.
For the second data set, to perform well at the peak, KGD/KRR must also incorporate the noise in the regions where the true signal is zero. KCD/K$\ell_1$R include the most significant observations first and are thus able to perform well at the peak without incorporating the noise in the zero regions.

\begin{figure}[t]
\center
\includegraphics[width=\textwidth]{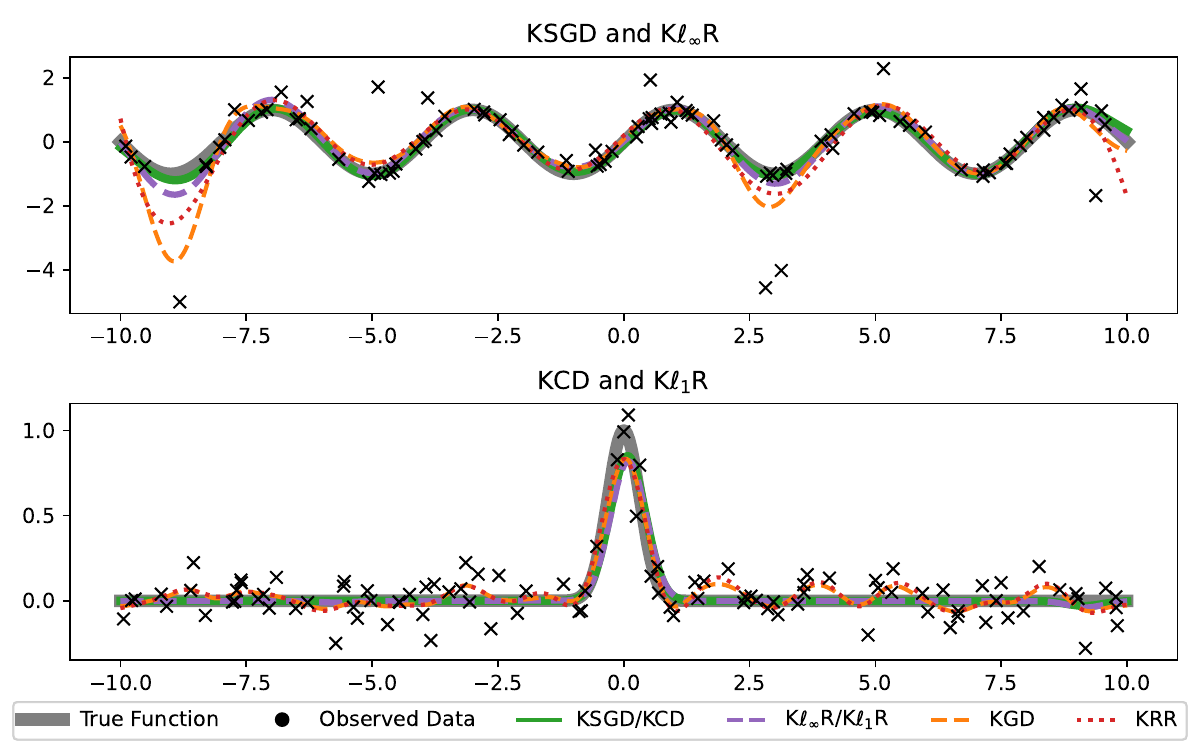}
\caption{Modelling the data generated by Equations \ref{eq:data_sin} (top) and \ref{eq:data_gauss} (bottom) using KSGD/K$\ell_\infty$R (top) or KCD/K$\ell_1$R (bottom) KGD/KRR (both panels). 
For the first data set, KSGD/K$\ell_\infty$R are less affected by the outliers than KGD/KRR are, and KGD is more affected by the outliers than KRR is.
For the second data set, in contrast to KGD/KRR, KCD/K$\ell_1$R are able to model the peak without incorporating the noise.}
\label{fig:syn_expl}
\end{figure}

\subsection{Robust Kernel Regression on Real Data}
\label{sec:exp_real}
In this section, we use five real data sets to compare robust kernel regression through KSGD and K$\ell_\infty$R to four state-of-the-art methods for robust kernel regression, and to KGD and KRR. The competing methods, for which we used the default implementations, are presented in Table \ref{tab:methods}, and the data sets in Table \ref{tab:datasets}.

Each data set was standardized to zero mean and unit variance. Then 50 random splits were created by randomly selecting 100 observations, which in turn were split 80\%/20\% into training and testing data.

Hyper-parameter selection was performed through 10-fold cross-validation, with 30 logarithmically spaced values for the kernel bandwidth, $\sigma$, and regularization strength, $\lambda$, respectively. Moreover, all four methods in Table \ref{tab:methods} come with an additional hyper-parameter, for which we evaluated three different values. For quantile regression, where the parameter $\tau\in(0,1)$ defines which quantile to use, we used $\tau \in [0.25,\ 0.5,\ 0.75]$. For the two M-estimators, which both come with a parameter $k$, which tunes the sensitivity toward outliers, we used $k\in [0.5k_d,\ k_d, 2k_d]$, where $k_d$ is the default tuning constant. For Huber regression, $k_d=1.345\hat{\sigma}$, and for Tukey bisquare regression, $k_d=4.685\hat{\sigma}$, where $\hat{\sigma}$ denotes the standard deviation of the residuals.

For the early stopping algorithms, rather than using an explicit regularization parameter $\lambda$, the stopping times were selected by monitoring performance on a validation data set, for which 10\% of the training set was used. Once the performance started to decrease on the validation set, the algorithm was terminated.
The experiments were run on a cluster with AMD EPYC Zen 2, 2.25 GHz cores, using one core per experiment.

\begin{table}[b]
\caption{Robust kernel regression models used.}
\centering
\begin{tabular}{|l|l|}
\hline
Method & Abbreviation\\
\hline
\hline
Huber loss implemented by iteratively reweighted least squares & KMR-H\\
Tukey loss implemented by iteratively reweighted least squares & KMR-T\\
Fast quantile regression implemented by \citet{zheng2022fast} & KQR-Z\\
Fast quantile regression implemented by \citet{tang2024fastkqr} & KQR-T\\
\hline
\end{tabular}
\label{tab:methods}
\end{table}

In Tables \ref{tab:real_100_0} and \ref{tab:real_100_1}, we present the computation time and $R^2$ on test data for the experiments.\footnote{$R^2:=1-\|\yv-\fhv\|_2^2/\|\yv-\bar{y}\|_2^2\leq 1$, where $\yv$ and $\bar{y}$ denotes the vector of observations and its mean, and $\fhv$ denotes the vector of model predictions, measures the proportion of the variance in $\yv$ that is explained by the model. $R^2=1$ corresponds to a perfect fit, while a model that predicts $\hat{f}_i=\bar{y}$ for all $i$ results in $R^2=0$; thus a negative value of $R^2$ is possible, but it corresponds to a model that performs worse than always predicting the mean of the observed data.}
The difference between the two tables is that in Table \ref{tab:real_100_1}, the presence of outliers is amplified by multiplying each element, $y_i$, in the response vector by $(1+|\varepsilon_i|)$, where $\varepsilon_i\sim\mathcal{C}(0,0.01)$ is Cauchy distributed. We note that amplifying the outliers only affects the relative performance between the robust and non-robust methods, not the relative performance among the robust methods.

For all five data sets, KSGD and K$\ell_\infty$R perform at least as well as the competing four robust methods in terms of test $R^2$. However, KSGD is significantly faster than the competing methods, performing between one and two orders of magnitude faster.

Comparing Tables \ref{tab:real_100_0} and \ref{tab:real_100_1}, we note that KGD and KRR are much more sensitive to outliers than the robust methods are. We also note with amplified outliers, KRR tends to perform better than KGD in terms of test $R^2$, while in Table \ref{tab:real_100_0}, the tendency is the opposite. This can probably be attributed to the fact that the KGF solution tends to lie closer to the observations than the KRR solution does, as suggested by Proposition \ref{thm:kgf_krr_y}. This is often beneficial, however not in the presence of outliers.

\begin{table}[t]
\caption{Real data sets, with corresponding reference when existing, used for comparing the methods. }
\centering
\begin{tabular}{|l|l|}
\hline
Data set & Size, $n\times p$\\
\hline
\hline
Sound pressure of airfoils\tablefootnote{The data set is available at \url{https://archive.ics.uci.edu/dataset/291/airfoil+self+noise}.} & $1502\times 6$\\
\hline
House values in California \citep{pace1997sparse}\tablefootnote{The data set is available at \url{https://www.dcc.fc.up.pt/~ltorgo/Regression/cal_housing.html}.} & $20640\times  9$\\
\hline
Energy consumption in steel production \citep{ve2021efficient}\tablefootnote{The data set is available at \url{https://archive.ics.uci.edu/dataset/851/steel+industry+energy+consumption}.} & $35040\times 7$\\
\hline
Critical temperature of superconductors \citep{hamidieh2018data}\tablefootnote{The data set is available at \url{https://archive.ics.uci.edu/dataset/464/superconductivty+data}.} & $21263\times  82$\\
\hline
Daily temperature in the U.K. in the year 2000 \citep{wood2017generalized}\tablefootnote{The data set is available at \url{https://www.maths.ed.ac.uk/~swood34}.} & $45568\times 5$\\
\hline
\end{tabular}
\label{tab:datasets}
\end{table}

\begin{table}
\caption{The 2.5th, 50th, and 97.5th percentiles of computation time and test $R^2$ for the different methods and data sets, \textbf{without} amplified outliers. The six robust methods perform very similarly in terms of test $R^2$, while KSGD performs one to two orders of magnitude faster.}
\center
\begin{tabular}{|l|l|l|l|}
\hline
Data & Method & \makecell[l]{Computation Time [s]\\50\%,\ (2.5\%,\ 97.5\%)} & \makecell[l]{Test $R^2$\\50\%,\ (2.5\%,\ 97.5\%)}\\
\hline
\multirow{9}{*}{\makecell[l]{Airfoil Sound\\Pressure}}
& KSGD            & $9.36,\ (5.41, 13.8)$ & $0.53 ,\ (-0.47  , 0.82 )$ \\
& K$\ell_\infty$R & $383,\ (370, 404)$ & $0.56 ,\ (-0.54  , 0.87 )$ \\
\cline{2-4}
& KMR-H           & $172,\ (158, 192)$ & $0.51 ,\ (-0.64  , 0.84 )$ \\
& KMR-T           & $161,\ (151, 178)$ & $0.53 ,\ (-0.45  , 0.84 )$ \\
& KQR-A           & $289,\ (275, 311)$ & $0.55 ,\ (-0.55  , 0.84 )$ \\
& KQR-B           & $314,\ (185, 345)$ & $0.41 ,\ (-0.13  , 0.72 )$ \\
\cline{2-4}
& KGD             & $14.7,\ (10.9, 18.4)$ & $0.57 ,\ (-0.49  , 0.85 )$ \\
& KRR             & $2.92,\ (2.87, 3.04)$ & $0.53 ,\ (-0.44  , 0.84 )$ \\
\hline
\multirow{9}{*}{\makecell[l]{California\\House Values}}
& KSGD            & $4.14,\ (2.20, 8.39)$ & $0.63 ,\ (-0.09  , 0.86 )$ \\
& K$\ell_\infty$R & $325,\ (311, 348)$ & $0.64 ,\ (-0.17  , 0.88 )$ \\
\cline{2-4}
& KMR-H           & $137,\ (124, 153)$ & $0.60 ,\ (-0.30  , 0.89 )$ \\
& KMR-T           & $120,\ (112, 135)$ & $0.58 ,\ (-0.31  , 0.89 )$ \\
& KQR-A           & $271,\ (259, 294)$ & $0.61 ,\ (-0.19  , 0.86 )$ \\
& KQR-B           & $206,\ (115, 218)$ & $0.33 ,\ (-0.10  , 0.62 )$ \\
\cline{2-4}
& KGD             & $12.6,\ (9.20, 15.7)$ & $0.60 ,\ (-0.17  , 0.89 )$ \\
& KRR             & $2.95,\ (2.90, 3.14)$ & $0.60 ,\ (-0.34  , 0.88 )$ \\
\hline
\multirow{9}{*}{\makecell[l]{Steel Energy\\Consumption}}
& KSGD            & $12.0,\ (9.85, 20.3)$ & $0.98 ,\ (0.92   , 0.99 )$ \\
& K$\ell_\infty$R & $416,\ (401, 436)$ & $0.98 ,\ (0.94   , 0.99 )$ \\
\cline{2-4}
& KMR-H           & $166,\ (146, 199)$ & $0.99 ,\ (0.98   , 1.00 )$ \\
& KMR-T           & $156,\ (148, 191)$ & $0.99 ,\ (0.98   , 1.00 )$ \\
& KQR-A           & $267,\ (256, 332)$ & $0.99 ,\ (0.97   , 1.00 )$ \\
& KQR-B           & $541,\ (475, 620)$ & $0.90 ,\ (0.60   , 0.98 )$ \\
\cline{2-4}
& KGD             & $22.2,\ (16.8, 26.0)$ & $0.99 ,\ (0.94   , 1.00 )$ \\
& KRR             & $2.95,\ (2.90, 3.29)$ & $0.99 ,\ (0.97   , 1.00 )$ \\
\hline
\multirow{9}{*}{\makecell[l]{Superconductor\\Critical\\Temperature}}
& KSGD            & $12.6,\ (8.54, 17.7)$ & $0.59 ,\ (-0.01  , 0.84 )$ \\
& K$\ell_\infty$R & $409,\ (401, 425)$ & $0.64 ,\ (0.14   , 0.89 )$ \\
\cline{2-4}
& KMR-H           & $169,\ (155, 193)$ & $0.66 ,\ (0.05   , 0.86 )$ \\
& KMR-T           & $160,\ (135, 189)$ & $0.65 ,\ (0.04   , 0.89 )$ \\
& KQR-A           & $288,\ (255, 329)$ & $0.61 ,\ (-0.04  , 0.88 )$ \\
& KQR-B           & $482,\ (338, 632)$ & $0.46 ,\ (-0.08  , 0.69 )$ \\
\cline{2-4}
& KGD             & $14.8,\ (10.9, 17.4)$ & $0.67 ,\ (0.21   , 0.89 )$ \\
& KRR             & $3.44,\ (3.37, 3.88)$ & $0.66 ,\ (0.17   , 0.89 )$ \\
\hline
\multirow{9}{*}{\makecell[l]{U.K.\\Temperature}}
& KSGD            & $6.17,\ (2.99, 9.76)$ & $0.40 ,\ (-0.07  , 0.69 )$ \\
& K$\ell_\infty$R & $370,\ (358, 388)$ & $0.37 ,\ (-0.06  , 0.67 )$ \\
\cline{2-4}
& KMR-H           & $182,\ (161, 206)$ & $0.38 ,\ (-1.13  , 0.68 )$ \\
& KMR-T           & $180,\ (157, 192)$ & $0.39 ,\ (-1.03  , 0.67 )$ \\
& KQR-A           & $310,\ (282, 340)$ & $0.36 ,\ (-1.24  , 0.66 )$ \\
& KQR-B           & $321,\ (258, 343)$ & $0.42 ,\ (-0.06  , 0.66 )$ \\
\cline{2-4}
& KGD             & $11.3,\ (8.31, 15.1)$ & $0.40 ,\ (-0.31  , 0.71 )$ \\
& KRR             & $2.97,\ (2.87, 3.14)$ & $0.40 ,\ (-1.13  , 0.68 )$ \\
\hline
\end{tabular}
\label{tab:real_100_0}
\end{table}

\begin{table}
\caption{The 2.5th, 50th, and 97.5th percentiles of computation time and test $R^2$ for the different methods and data sets, \textbf{with} amplified outliers. The six robust methods perform very similarly in terms of test $R^2$, while KSGD performs one to two orders of magnitude faster. The two non-robust methods, KGD/KRR, do not perform as well as the robust methods.}
\center
\begin{tabular}{|l|l|l|l|}
\hline
Data & Method & \makecell[l]{Computation Time [s]\\50\%,\ (2.5\%,\ 97.5\%)} & \makecell[l]{Test $R^2$\\50\%,\ (2.5\%,\ 97.5\%)}\\
\hline
\multirow{9}{*}{\makecell[l]{Airfoil Sound\\Pressure}}
& KSGD            & $10.5,\ (5.29, 13.8)$ & $0.44 ,\ (-0.05  , 0.74 )$ \\
& K$\ell_\infty$R & $395,\ (375, 431)$ & $0.42 ,\ (-0.76  , 0.75 )$ \\
\cline{2-4}
& KMR-H           & $168,\ (101, 199)$ & $0.38 ,\ (-1.12  , 0.71 )$ \\
& KMR-T           & $154,\ (115, 182)$ & $0.36 ,\ (-1.08  , 0.73 )$ \\
& KQR-A           & $304,\ (284, 346)$ & $0.38 ,\ (-1.10  , 0.74 )$ \\
& KQR-B           & $346,\ (287, 714)$ & $0.25 ,\ (-5.95  , 0.61 )$ \\
\cline{2-4}
& KGD             & $12.4,\ (3.23, 16.5)$ & $0.25 ,\ (-82.33 , 0.71 )$ \\
& KRR             & $2.99,\ (2.88, 3.22)$ & $0.27 ,\ (-1.61  , 0.70 )$ \\
\hline
\multirow{9}{*}{\makecell[l]{California\\House Values}}
& KSGD            & $4.65,\ (2.51, 8.68)$ & $0.46 ,\ (-0.06  , 0.80 )$ \\
& K$\ell_\infty$R & $352,\ (326, 384)$ & $0.42 ,\ (-0.50  , 0.76 )$ \\
\cline{2-4}
& KMR-H           & $130,\ (80.1, 170)$ & $0.36 ,\ (-0.74  , 0.81 )$ \\
& KMR-T           & $118,\ (91.1, 143)$ & $0.37 ,\ (-0.63  , 0.77 )$ \\
& KQR-A           & $297,\ (268, 327)$ & $0.30 ,\ (-0.44  , 0.82 )$ \\
& KQR-B           & $259,\ (160, 283)$ & $0.23 ,\ (-2.20  , 0.56 )$ \\
\cline{2-4}
& KGD             & $11.0,\ (4.98, 14.2)$ & $0.17 ,\ (-4.39  , 0.76 )$ \\
& KRR             & $2.99,\ (2.92, 3.35)$ & $0.29 ,\ (-0.22  , 0.77 )$ \\
\hline
\multirow{9}{*}{\makecell[l]{Steel Energy\\Consumption}}
& KSGD            & $14.5,\ (11.1, 23.3)$ & $0.89 ,\ (0.06   , 0.97 )$ \\
& K$\ell_\infty$R & $426,\ (404, 458)$ & $0.85 ,\ (-1.13  , 0.98 )$ \\
\cline{2-4}
& KMR-H           & $168,\ (122, 220)$ & $0.86 ,\ (-1.59  , 0.98 )$ \\
& KMR-T           & $174,\ (141, 214)$ & $0.86 ,\ (-0.54  , 0.98 )$ \\
& KQR-A           & $308,\ (263, 340)$ & $0.88 ,\ (-2.11  , 0.98 )$ \\
& KQR-B           & $707,\ (578, 1.220)$ & $0.67 ,\ (-1.68  , 0.95 )$ \\
\cline{2-4}
& KGD             & $13.3,\ (5.84, 17.8)$ & $0.74 ,\ (-33.96 , 0.97 )$ \\
& KRR             & $2.96,\ (2.87, 3.34)$ & $0.78 ,\ (-3.34  , 0.97 )$ \\
\hline
\multirow{9}{*}{\makecell[l]{Superconductor\\Critical\\Temperature}}
& KSGD            & $14.3,\ (10.7, 21.3)$ & $0.45 ,\ (-0.22  , 0.81 )$ \\
& K$\ell_\infty$R & $416,\ (401, 441)$ & $0.48 ,\ (-0.49  , 0.84 )$ \\
\cline{2-4}
& KMR-H           & $159,\ (114, 202)$ & $0.48 ,\ (-0.35  , 0.79 )$ \\
& KMR-T           & $146,\ (118, 201)$ & $0.48 ,\ (-0.33  , 0.82 )$ \\
& KQR-A           & $290,\ (258, 326)$ & $0.44 ,\ (-0.62  , 0.83 )$ \\
& KQR-B           & $586,\ (406, 1.190)$ & $0.28 ,\ (-0.96  , 0.69 )$ \\
\cline{2-4}
& KGD             & $12.1,\ (5.00, 15.4)$ & $0.37 ,\ (-1.41  , 0.83 )$ \\
& KRR             & $3.46,\ (3.37, 3.89)$ & $0.47 ,\ (-0.51  , 0.82 )$ \\
\hline
\multirow{9}{*}{\makecell[l]{U.K.\\Temperature}}
& KSGD            & $7.29,\ (3.14, 11.6)$ & $0.33 ,\ (-0.13  , 0.63 )$ \\
& K$\ell_\infty$R & $385,\ (368, 423)$ & $0.25 ,\ (-1.13  , 0.62 )$ \\
\cline{2-4}
& KMR-H           & $164,\ (99.7, 213)$ & $0.29 ,\ (-3.08  , 0.66 )$ \\
& KMR-T           & $156,\ (117, 186)$ & $0.27 ,\ (-2.31  , 0.65 )$ \\
& KQR-A           & $305,\ (282, 339)$ & $0.22 ,\ (-3.07  , 0.63 )$ \\
& KQR-B           & $328,\ (277, 532)$ & $0.20 ,\ (-0.64  , 0.60 )$ \\
\cline{2-4}
& KGD             & $9.97,\ (3.15, 15.7)$ & $0.11 ,\ (-20.91 , 0.61 )$ \\
& KRR             & $2.92,\ (2.87, 3.16)$ & $0.17 ,\ (-2.46  , 0.61 )$ \\
\hline
\end{tabular}
\label{tab:real_100_1}
\end{table}

\section{Conclusions}
We introduced an equivalent formulation of kernel ridge regression and used it to define kernel regression with the $\ell_\infty$ and $\ell_1$ penalties, and for solving kernel regression with gradient-based optimization methods. 
We introduced the methods kernel sign gradient descent and kernel coordinate descent and utilized the similarities between $\ell_\infty$ regularization and sign gradient descent, and between $\ell_1$ regularization and coordinate descent (forward stagewise regression), to obtain computationally efficient algorithms for robust and sparse kernel regression, respectively. We demonstrated on five real data sets that our implementation of robust regression is one to two orders of magnitude faster than existing robust kernel regression methods.

Our generalizations of kernel ridge regression, together with regularization through early stopping, enable computationally efficient, kernelized robust, and sparse, regression.
Although we investigated only kernel regression with the $\ell_2$, $\ell_1$, and $\ell_\infty$ penalties, many other penalties could be considered, such as the adaptive lasso \citep{zou2006adaptive}, the group lasso \citep{yuan2006model}, the exclusive lasso \citep{zhou2010exclusive} and OSCAR \citep{bondell2008simultaneous}. This is, however, left for future work.

Code is available at \url{https://github.com/allerbo/fast_robust_kernel_regression}.

\section*{Acknowledgments}
The computations were enabled by resources provided by the National Academic Infrastructure for Supercomputing in Sweden (NAISS), partially funded by the Swedish Research Council through grant agreement no. 2022-06725.

\newpage
\begin{appendix}
\section{Calculations for \texorpdfstring{$[\fhv^\top, \fhsv{^\top}]^\top$}{[f, f*]}}
\label{sec:equ_f}
In this section, we revisit some of the calculations in the main manuscript and reformulate them directly in terms of $[\fhv^\top, \fhsv{^\top}]^\top$, rather than obtaining them by multiplying $\alphav$ by $[\K^\top, \Ks{^\top}]^\top$.

\subsection{Equivalent Formulations of Kernel Ridge Regression}
In Proposition \ref{thm:equ_f}, we present three different objective functions, that all render the same solution for $[\fhv^\top, \fhsv{^\top}]^\top$ as in Equation \ref{eq:krr_s_fs}. When expressing KRR in terms of $\alphav\in\R^n$, it is enough to include $\K\in\R^{n\times n}$ in the expression to capture the dynamics of the kernel. However, when expressing KRR in terms of $[\fhv^\top, \fhsv{^\top}]^\top\in \R^{n+\ns}$, we need to introduce the larger kernel matrix
$$\Kss:=\begin{bmatrix}\K(\X,\X) & \K(\X,\Xs)\\ \K(\Xs,\X) & \K(\Xs,\Xs)\end{bmatrix}
\in \R^{(n+\ns)\times(n+\ns)}$$
in order to let the kernel affect not only $\fv\in \R^n$, but also $\fsv\in \R^{\ns}$.
Furthermore, we need the extended response vector $[\yv^\top,\ytv^\top]^\top$, where $\ytv$ is a copy of $\fsv$, so that $\ytv-\fsv=\nv$. 

\begin{prop}
\label{thm:equ_f}
\begin{subequations}
\label{eq:equ_f}
\begin{align}
\begin{bmatrix}\fhv\\\fhsv\end{bmatrix}=&\argmin_{[\fv^\top,\fsv{^\top}]^\top\in \R^{n+\ns}}\frac12\left\|\yv-\fv\right\|_2^2+\frac\lambda2\left\|\begin{bmatrix}\fv\\\fsv\end{bmatrix}\right\|^2_{(\Kss)^{-1}}\label{eq:equ_f1}\\
=&\argmin_{[\fv^\top,\fsv{^\top}]^\top\in \R^{n+\ns}}\frac12\left\|\begin{bmatrix}\yv\\\ytv\end{bmatrix}-\begin{bmatrix}\fv\\\fsv\end{bmatrix}\right\|_2^2+\frac\lambda2\left\|\begin{bmatrix}\fv\\\fsv\end{bmatrix}\right\|^2_{(\Kss)^{-1}}\label{eq:equ_f1a}\\
=&\argmin_{[\fv^\top,\fsv{^\top}]^\top\in \R^{n+\ns}}\frac12\left\|\begin{bmatrix}\yv\\\ytv\end{bmatrix}-\begin{bmatrix}\fv\\\fsv\end{bmatrix}\right\|_{\Kss}^2+\frac\lambda2\left\|\begin{bmatrix}\fv\\\fsv\end{bmatrix}\right\|^2_2\label{eq:equ_f2}\\
=&\begin{bmatrix}\K\\\Ks\end{bmatrix}\left(\K+\lambda\I\right)^{-1}\yv.\notag
\end{align}
\end{subequations}
\end{prop}
\begin{remark}
We assert that $\fsv$ does not affect the reconstruction error by requiring that $\ytv-\fsv=\nv$, We may, however, not define $\ytv$ to equal $\fsv$, since we do not want $\ytv$ to be considered when differentiating Equations \ref{eq:equ_f1a} and \ref{eq:equ_f2} with respect to $\fsv$.
\end{remark}
\begin{remark}
The term $\left\|\begin{bmatrix}\fv\\\fsv\end{bmatrix}\right\|^2_{(\Kss)^{-1}}$, might appear a bit peculiar, including the inverse of the extended kernel matrix (which never needs to be calculated due to cancellations). However, it is very closely related to the expression for the reproducing kernel Hilbert space norm, $\|f\|^2_{\Hh_k}$, obtained when expressing the functions $f$ and $k$ in terms of the orthogonal basis given by Mercer's Theorem:
$f(\xv)=\sum_{i=1}^\infty f_i\phi_i(\xv)$ and $k(\xv,\xpv)=\sum_{i=1}^\infty k_i\phi_i(\xv)\phi_i(\xpv)$, with $\|f\|^2_{\Hh_k}:=\sum_{i=1}^\infty \frac{f_i^2}{k_i}$. Defining the vector $\bm{\tilde{f}}$ and the diagonal matrix $\bm{\tilde{K}}$ according to $\bm{\tilde{f}}_i=f_i$ and $\bm{\tilde{K}}_{ii}=k_i$, we obtain $\|f\|^2_{\Hh_k}=\sum_{i=1}^\infty \frac{f_i^2}{k_i}=\bm{\tilde{f}}^\top\bm{\tilde{K}}^{-1}\bm{\tilde{f}}=\|\bm{\tilde{f}}\|_{\bm{\tilde{K}}^{-1}}^2$.
\end{remark}

\subsection{Kernel Gradient Descent and Kernel Gradient Flow}
The gradient of Equation \ref{eq:equ_f2} with respect to $[\fv^\top,\fsv{^\top}]^\top$ is 
$$\begin{bmatrix}\K\\\Ks\end{bmatrix}(\fv-\yv)+\lambda\begin{bmatrix}\fv\\\fsv\end{bmatrix},$$
which coincides with Equation \ref{eq:kgd_grad_f}, as expected. Thus, the KGD update in\\ $[\fhv^\top,\fhsv{^\top}]^\top$ is
\begin{equation*}
\label{eq:kgd_update_f}
\begin{bmatrix}\fhv_{k+1}\\\fhsv_{k+1}\end{bmatrix}=\begin{bmatrix}\fhv_k\\\fhsv_k\end{bmatrix}+\eta\cdot\begin{bmatrix}\K\\\Ks\end{bmatrix}\left(\yv-\fhv_k\right),\ \begin{bmatrix}\fhv_0\\\fhsv_0\end{bmatrix}=\nv,
\end{equation*}
with the corresponding differential equation
\begin{equation}
\label{eq:kgd_diff_eq_f}
\frac{\partial}{\partial t}\left(\begin{bmatrix}\fhv(t)\\\fhsv(t)\end{bmatrix}\right)=\begin{bmatrix}\K\\\Ks\end{bmatrix}\left(\yv-\fhv(t)\right),\ \begin{bmatrix}\fhv(0)\\\fhsv(0)\end{bmatrix}=\nv,
\end{equation}
whose solution is given by Equation \ref{eq:kgf_s_f}.
\begin{equation}
\label{eq:kgf_s_f}
\begin{bmatrix}\fhv(t)\\\fhsv(t)\end{bmatrix}=\begin{bmatrix}\I\\\Ks\K^{-1}\end{bmatrix}(\I-\exp(-t\K))\yv.
\end{equation}

\subsection{Kernel Regression with the \texorpdfstring{$\ell_\infty$}{linf} and \texorpdfstring{$\ell_1$}{l1} Norms}
If we replace the squared $\ell_2$ norm in Equation \ref{eq:equ_f2} by the $\ell_\infty$ and $\ell_1$ norms, we obtain 
\begin{equation}
\label{eq:finf}
\argmin_{[\fv^\top,\fsv{^\top}]^\top\in \R^{n+\ns}}\frac12\left\|\begin{bmatrix}\yv\\\ytv\end{bmatrix}-\begin{bmatrix}\fv\\\fsv\end{bmatrix}\right\|_{\Kss}^2+\lambda\left\|\begin{bmatrix}\fv\\\fsv\end{bmatrix}\right\|_\infty
\end{equation}
and
\begin{equation}
\label{eq:f1}
\argmin_{[\fv^\top,\fsv{^\top}]^\top\in \R^{n+\ns}}\frac12\left\|\begin{bmatrix}\yv\\\ytv\end{bmatrix}-\begin{bmatrix}\fv\\\fsv\end{bmatrix}\right\|_{\Kss}^2+\lambda\left\|\begin{bmatrix}\fv\\\fsv\end{bmatrix}\right\|_1.
\end{equation}

In contrast to ridge regression, where the formulations in $\alphav$ and $[\fv^\top,\fsv{^\top}]^\top$ are equivalent in the sense that the latter can always be obtained from the first through multiplication with $[\K^\top, \Ks{^\top}]^\top$, the solutions of Equations \ref{eq:finf} and \ref{eq:f1} cannot be obtained from the solutions of Equations \ref{eq:alphainf} and \ref{eq:alpha1}.

Since $\ell_\infty$ regularization promotes a solution with no extreme parameters, many of the predictions in the solution to Equation \ref{eq:finf} are exactly equal.
For $\ell_1$ regularization, which promotes a sparse solution with the most significant parameters included first, in the solution to Equation \ref{eq:f1}, many of the predictions equal exactly zero, with only the most extreme predictions being non-zero.
In Figure \ref{fig:comp_f}, we exemplify this for explicit regularization for two different regularization strengths. Even though it is technically possible to use gradient-based optimization with early stopping for $[\fv^\top,\fsv{^\top}]^\top$, since the reconstruction error, and thus the gradient, is affected only by $\fv$ and not by $\fsv$, this does not work well for the sign gradient and coordinate descent algorithms.

\begin{figure}
\center
\includegraphics[width=\textwidth]{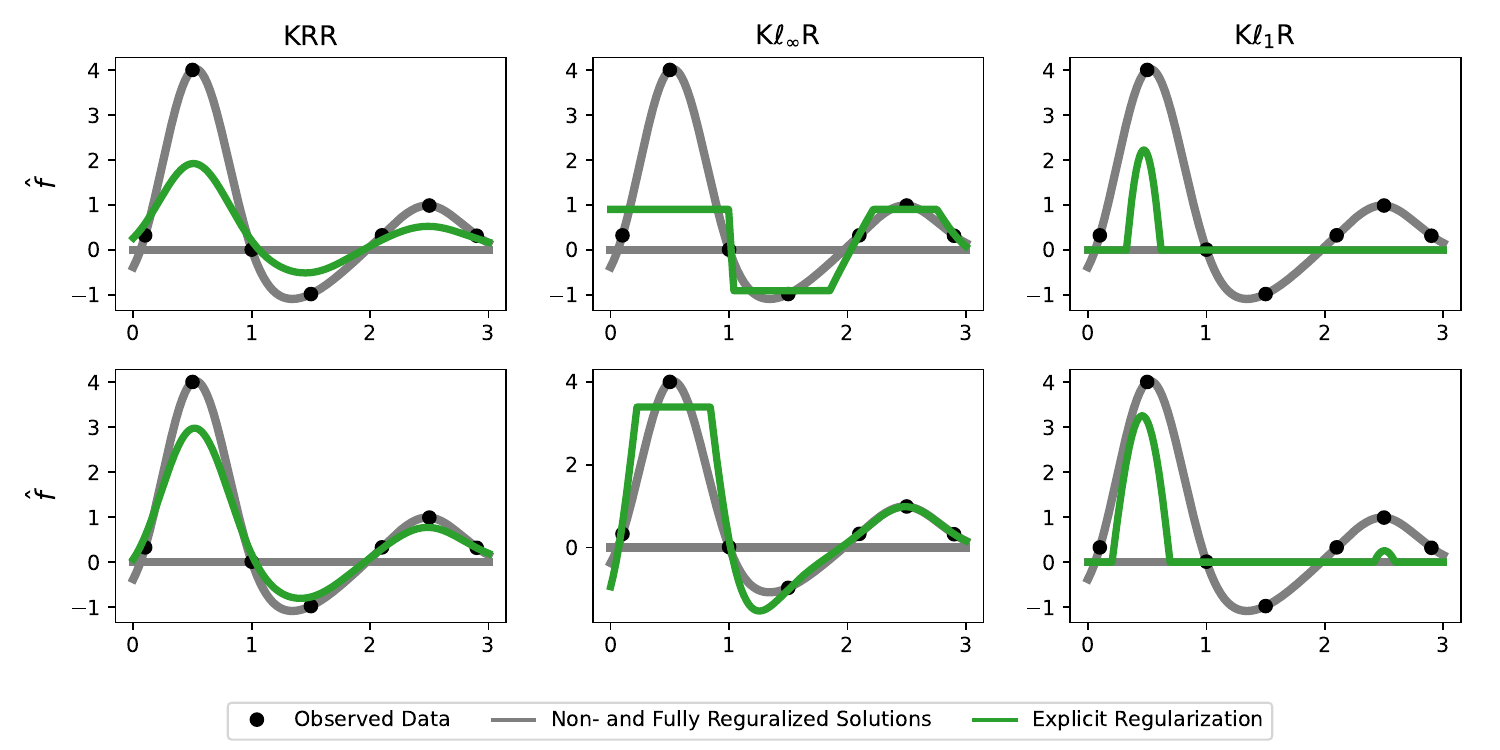}
\caption{Comparisons of the effects of $\ell_2$, $\ell_\infty$, and $\ell_1$ regularization on $[\fv^\top,\fsv{^\top}]^\top$.\\
In the top panel, a larger regularization is used than in the bottom panel.\\
For $\ell_\infty$ regularization, the absolute values of many predictions are exactly equal. To obtain this, compared to the non-regularized solution, some predictions are shifted away from zero, while some predictions are shifted toward zero.
For $\ell_1$ regularization, many predictions are set exactly to zero, with peaks at some extreme observations.}
\label{fig:comp_f}
\end{figure}

Finally, for completeness, we present the analog of Proposition \ref{thm:sgn_inf} for\\ $[\fv^\top,\fsv{^\top}]^\top$ in Proposition \ref{thm:sgn_inf_f}.

\begin{prop}~\\
\label{thm:sgn_inf_f}
For $\fpv:=[\fv^\top,\fsv{^\top}]^\top$ and $\ypv:=[\yv^\top,\ytv^\top]^\top$, where $\ytv-\fsv=\nv$, let $\fhpinfv(c)$ denote the solution to
$$\min_{\fpv\in\R^{n+\ns}}\left\|\ypv-\fpv\right\|_{\Kss}^2 \text{ s.t. } \|\fpv\|_\infty\leq c,$$
and let $\fhpsgfv(t)$ denote the solution to
$$\frac{\partial\fhpv(t)}{\partial t}=-\sgn\left(\frac{\partial}{\partial\fhpv(t)}\left(\left\|\ypv-\fhpv(t)\right\|_{\Kss}^2\right)\right),\ \fhpv(0)=\nv.$$
When $\Kss$ is diagonal, with elements $\{k_{ii}\}_{i=1}^{n+\ns}$, $k_{ii}>0$, the two solutions decompose element-wise and coincide for $c=t$:
$$\fhpinfi(t)=\fhpsgfi(t)=
\begin{cases}
\sgn(y_i)\cdot\min(t,|y_i|) &\text{if }i\leq n\\
0 &\text{if }i> n.
\end{cases}$$
\end{prop}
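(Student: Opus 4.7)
The plan is to exploit the diagonality of $\Kss$ to decouple the problem coordinate-wise, following the template used for Proposition \ref{thm:sgn_inf}(b), and then to handle the auxiliary coordinates corresponding to $\fsv$ as a special case. First I would expand
$$\left\|\ypv-\fpv\right\|_{\Kss}^2=\sum_{i=1}^{n+\ns}k_{ii}(\ypv_i-\fpv_i)^2=\sum_{i=1}^n k_{ii}(y_i-f_i)^2,$$
where the second equality uses $\ytv-\fsv=\nv$. Thus the objective is literally independent of $\fsv$, and (since $\ytv$ is treated as a constant when differentiating, per Remark 1 of Lemma \ref{thm:equ_f}) the component-wise derivative is $-k_{ii}(y_i-f_i)$ for $i\leq n$ and identically $0$ for $i>n$.

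Next I would analyze the sign gradient flow coordinate-wise. For $i>n$ the gradient vanishes, so $\sgn(0)=0$ and $\fpv_i(t)\equiv 0$ by the initial condition $\fpv(0)=\nv$. For $i\leq n$, since $k_{ii}>0$, the flow reduces to $df_i/dt=\sgn(y_i-f_i)$ with $f_i(0)=0$. This is the scalar sign-flow already handled in Proposition \ref{thm:sgn_inf}: starting from zero, $f_i$ moves toward $y_i$ at unit speed, reaches $y_i$ at time $|y_i|$, and stays there, giving $f_i(t)=\sgn(y_i)\min(t,|y_i|)$.

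For the constrained problem I would argue analogously. The minimization also decouples: for $i\leq n$, minimizing $k_{ii}(y_i-f_i)^2$ subject to $|f_i|\leq c$ is the standard projection giving the unique value $f_i=\sgn(y_i)\min(c,|y_i|)$; for $i>n$, the objective is constant in $f^*_j$, so every $f^*_j$ with $|f^*_j|\leq c$ is a minimizer, and in particular $f^*_j=0$ is. Setting $c=t$ then matches the flow solution coordinate-by-coordinate.

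The only delicate point, and the one I would flag as the main (minor) obstacle, is that the constrained minimizer is not unique on the $\fsv$-block, so the statement really picks the minimum-norm/zero representative of the solution set. I would address this by explicitly noting that $\fsv=\nv$ is optimal and is the canonical choice consistent with initializing the sign gradient flow at the origin. Beyond that bookkeeping, the argument is a coordinate-wise reduction to the one-dimensional analysis already carried out for Proposition \ref{thm:sgn_inf}.
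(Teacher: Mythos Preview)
Your proposal is correct and follows the same coordinate-wise decomposition as the paper: expand the weighted norm using diagonality of $\Kss$, treat the $i\leq n$ coordinates by the scalar projection/sign-flow argument from Proposition~\ref{thm:sgn_inf}, and handle the $i>n$ coordinates separately.

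The one place where you and the paper differ is the justification for $\fhp_i=0$ when $i>n$ in the constrained problem. You state plainly that the objective is constant in these coordinates, so the minimizer is non-unique, and you select $\fsv=\nv$ as the canonical (minimum-norm, origin-consistent) representative. The paper instead passes to the Lagrangian form, splits into cases on the multiplier $\lambda_i$, and for the degenerate case $\lambda_i=0$ appeals to the KRR closed form (Equation~\ref{eq:krr_s_fs} with $\lambda=0$ and $\Ks=\nv$) to argue $\fhp_i=0$. Your treatment is more transparent about what is actually going on---a genuine non-uniqueness resolved by convention---whereas the paper's detour through duality and the KRR limit somewhat obscures that a choice is being made. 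Both arrive at the same answer.
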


\section{Computationally Efficient Cross-Validation}
\label{sec:fast_cv}
For models of the type $\fhv=\Hhh\yv$, where $\Hhh\in\R^{n\times n}$ is a matrix that does not depend on $\yv$, the fast leave-one-out cross-validation, LOOCV, and generalized cross-validation, GCV, statistics are defined according to
\begin{equation*}
\begin{aligned}
&\text{LOOCV}:=\frac1n\sum_{i=1}^n\left(\frac{y_i-\fh(\xvi)}{1-H_{ii}}\right)^2\\
&\text{GCV}:=\frac1n\sum_{i=1}^n\left(\frac{y_i-\fh(\xvi)}{1-\text{Tr}(\Hhh)/n}\right)^2.
\end{aligned}
\end{equation*}
These two methods tend to be faster than k-fold cross-validation and provide similar performance. For KRR, where $\fhv=\K(\K+\lambda\I)^{-1}\yv$, we trivially obtain $\Hhh=\K(\K+\lambda\I)^{-1}$. For other penalties than the ridge penalty, it is however not trivial to obtain $\Hhh$. \citet{tibshirani1996regression} noted that the lasso objective, $\frac12\left\|\yv-\X\betav\right\|_2^2+\lambda\|\betav\|_1$, can be reformulated as $\frac12\left\|\yv-\X\betav\right\|_2^2+\frac\lambda2\betav^\top\B\betav$, where $\B\in \R^{p\times p}$ is a diagonal matrix with $B_{ii}=2/|\beta_i|$. If $\B$ is treated as a constant, this formulation becomes a weighted ridge regression problem which is minimized by $\betahv=\X(\X^\top\X+\lambda\B)^{-1}\X^\top\yv$, and thus, for a given $\betahv$, the author proposed approximating $\Hhh$ as $\X(\X^\top\X+\lambda\bm{\hat{B}})^{-1}\X^\top$, where $\bm{\hat{B}}$ is a diagonal matrix with $\hat{B}_{ii}=2/|\betahv|$.

Applying these ideas in the kernel formulation, we obtain, for an $\A$ such that $\frac12\alphav^\top\A\alphav=\|\alphav\|$,
\begin{equation*}
\begin{aligned}
\alphahv&=\argmin_{\alphav\in \R^n}\frac12\left\|\yv-\K\alphav\right\|_{\K^{-1}}^2+\lambda\|\alphav\|\\
&=\argmin_{\alphav\in \R^n}\frac12\left\|\yv-\K\alphav\right\|_{\K^{-1}}^2+\frac{\lambda}2\alphav^\top\A\alphav=(\K+\lambda\A)^{-1}\yv.
\end{aligned}
\end{equation*}
Thus $\Hhh$ can be approximated by $\K(\K+\lambda\bm{\hat{A}})^{-1}$, where $\bm{\hat{A}}\in\R^{n\times n}$ is a matrix such that $\frac12\alphahv^\top\bm{\hat{A}}\alphahv=\|\alphahv\|$. For the $\ell_1$ norm, $\|\alphahv\|_1=\sum_{i=1}^n|\hat{\alpha}_i|$, this is obtained if $\bm{\hat{A}}$ is a diagonal matrix with $\hat{A}_{ii}=2/|\hat{\alpha}_i|$.
For the $\ell_\infty$ norm, $\|\alphahv\|_\infty=\max_i|\hat{\alpha}_i|$, it is obtained if $\bm{\hat{A}}$ is a matrix with only zeros, except for $\hat{A}_{mm}=2/|\hat{\alpha}_m|$, where $m=\argmax_i|\hat{\alpha}_i|$.

In Tables \ref{tab:real_lgcv_100_0} and \ref{tab:real_lgcv_100_1}, we present the analogs of Tables \ref{tab:real_100_0} and \ref{tab:real_100_1} for LOOCV and GCV. As expected, the two methods perform faster than 10-fold cross-validation. However, for K$\ell_\infty$R, the performance in terms of test $R^2$ is much lower than for 10-fold cross-validation. For KRR, LOOCV performs well in terms of test $R^2$.

\begin{table}
\caption{The 2.5th, 50th, and 97.5th percentiles of computation time and test $R^2$ for K$\ell_\infty$R in combination with LOOCV and GCV on the different data sets, \textbf{without} amplified outliers. Compared to 10-fold cross validation LOOCV and GCV perform faster, but the cost of decreased predictive performance, especially for K$\ell_\infty$R.}
\center
\begin{tabular}{|l|l|l|l|l|}
\hline
Data & Method & \makecell[l]{Type of\\Cross-\\Validation} & \makecell[l]{Computation\\Time [s]\\50\%,\ (2.5\%,\ 97.5\%)} & \makecell[l]{Test $R^2$\\50\%,\ (2.5\%,\ 97.5\%)}\\
\hline
\multirow{4}{*}{\makecell[l]{Airfoil Sound\\Pressure}}
& \multirow{2}{*}{K$\ell_\infty$R}
& LOOCV & $49.1,\ (48.2, 57.2)$ & $-0.01,\ (-0.38  , 0.04 )$ \\
& & GCV & $50.0,\ (48.7, 57.9)$ & $-0.01,\ (-0.37  , 0.15 )$ \\
\cline{2-5}
& \multirow{2}{*}{KRR}
& LOOCV & $1.21,\ (0.891, 3.11)$ & $0.55 ,\ (-0.29  , 0.84 )$ \\
& & GCV & $1.43,\ (0.910, 3.35)$ & $0.33 ,\ (-4.74  , 0.75 )$ \\
\hline
\multirow{4}{*}{\makecell[l]{California\\House Values}}
& \multirow{2}{*}{K$\ell_\infty$R}
& LOOCV & $43.1,\ (41.2, 50.3)$ & $-0.02,\ (-0.36  , 0.05 )$ \\
& & GCV & $43.5,\ (41.7, 52.8)$ & $-0.03,\ (-0.37  , 0.00 )$ \\
\cline{2-5}
& \multirow{2}{*}{KRR}
& LOOCV & $1.26,\ (0.928, 2.25)$ & $0.60 ,\ (-0.13  , 0.88 )$ \\
& & GCV & $1.26,\ (0.841, 2.16)$ & $0.47 ,\ (-0.39  , 0.83 )$ \\
\hline
\multirow{4}{*}{\makecell[l]{Steel Energy\\Consumption}}
& \multirow{2}{*}{K$\ell_\infty$R}
& LOOCV & $54.1,\ (51.7, 63.1)$ & $0.00 ,\ (-0.38  , 0.71 )$ \\
& & GCV & $54.3,\ (51.5, 68.5)$ & $0.57 ,\ (-0.08  , 0.98 )$ \\
\cline{2-5}
& \multirow{2}{*}{KRR}
& LOOCV & $0.891,\ (0.793, 1.63)$ & $0.99 ,\ (0.97   , 1.00 )$ \\
& & GCV & $0.892,\ (0.797, 1.57)$ & $0.98 ,\ (0.71   , 1.00 )$ \\
\hline
\multirow{4}{*}{\makecell[l]{Superconductor\\Critical\\Temperature}}
& \multirow{2}{*}{K$\ell_\infty$R}
& LOOCV & $52.4,\ (51.2, 56.0)$ & $-0.00,\ (-0.42  , 0.25 )$ \\
& & GCV & $52.8,\ (51.6, 56.1)$ & $0.44 ,\ (0.01   , 0.74 )$ \\
\cline{2-5}
& \multirow{2}{*}{KRR}
& LOOCV & $1.79,\ (1.02, 2.91)$ & $0.69 ,\ (0.12   , 0.87 )$ \\
& & GCV & $2.30,\ (1.15, 3.83)$ & $0.46 ,\ (-2.71  , 0.79 )$ \\
\hline
\multirow{4}{*}{\makecell[l]{U.K.\\Temperature}}
& \multirow{2}{*}{K$\ell_\infty$R}
& LOOCV & $48.5,\ (46.6, 64.7)$ & $-0.05,\ (-0.38  , 0.03 )$ \\
& & GCV & $48.6,\ (46.8, 61.3)$ & $-0.03,\ (-0.38  , 0.10 )$ \\
\cline{2-5}
& \multirow{2}{*}{KRR}
& LOOCV & $1.01,\ (0.860, 1.59)$ & $0.38 ,\ (-0.30  , 0.68 )$ \\
& & GCV & $0.993,\ (0.796, 1.65)$ & $0.28 ,\ (-6.81  , 0.71 )$ \\
\hline
\end{tabular}
\label{tab:real_lgcv_100_0}
\end{table}

\begin{table}
\caption{The 2.5th, 50th, and 97.5th percentiles of computation time and test $R^2$ for K$\ell_\infty$R in combination with LOOCV and GCV on the different data sets, \textbf{with} amplified outliers. Compared to 10-fold cross validation LOOCV and GCV perform faster, but the cost of decreased predictive performance, especially for K$\ell_\infty$R.}
\center
\begin{tabular}{|l|l|l|l|l|}
\hline
Data & Method & \makecell[l]{Type of\\Cross-\\Validation} & \makecell[l]{Computation\\Time [s]\\50\%,\ (2.5\%,\ 97.5\%)} & \makecell[l]{Test $R^2$\\50\%,\ (2.5\%,\ 97.5\%)}\\
\hline
\multirow{4}{*}{\makecell[l]{Airfoil Sound\\Pressure}}
& \multirow{2}{*}{K$\ell_\infty$R}
& LOOCV & $52.8,\ (48.9, 60.4)$ & $-0.02,\ (-14.57 , 0.01 )$ \\
& & GCV & $56.1,\ (49.7, 64.9)$ & $-0.02,\ (-14.33 , 0.08 )$ \\
\cline{2-5}
& \multirow{2}{*}{KRR}
& LOOCV & $0.840,\ (0.760, 0.925)$ & $0.30 ,\ (-1.12  , 0.66 )$ \\
& & GCV & $0.844,\ (0.754, 0.997)$ & $-0.05,\ (-189.78, 0.69 )$ \\
\hline
\multirow{4}{*}{\makecell[l]{California\\House Values}}
& \multirow{2}{*}{K$\ell_\infty$R}
& LOOCV & $44.7,\ (42.4, 50.7)$ & $-0.02,\ (-0.21  , 0.01 )$ \\
& & GCV & $45.0,\ (43.0, 50.3)$ & $-0.03,\ (-1.28  , 0.29 )$ \\
\cline{2-5}
& \multirow{2}{*}{KRR}
& LOOCV & $0.985,\ (0.832, 3.58)$ & $0.29 ,\ (-0.62  , 0.77 )$ \\
& & GCV & $1.01,\ (0.833, 4.69)$ & $0.16 ,\ (-37.61 , 0.72 )$ \\
\hline
\multirow{4}{*}{\makecell[l]{Steel Energy\\Consumption}}
& \multirow{2}{*}{K$\ell_\infty$R}
& LOOCV & $54.5,\ (52.3, 68.9)$ & $-0.04,\ (-0.94  , 0.12 )$ \\
& & GCV & $55.1,\ (52.3, 65.7)$ & $0.01 ,\ (-0.97  , 0.88 )$ \\
\cline{2-5}
& \multirow{2}{*}{KRR}
& LOOCV & $0.869,\ (0.753, 1.04)$ & $0.79 ,\ (-0.80  , 0.97 )$ \\
& & GCV & $0.889,\ (0.762, 1.05)$ & $0.13 ,\ (-405.97, 0.96 )$ \\
\hline
\multirow{4}{*}{\makecell[l]{Superconductor\\Critical\\Temperature}}
& \multirow{2}{*}{K$\ell_\infty$R}
& LOOCV & $52.9,\ (51.3, 61.0)$ & $-0.04,\ (-0.90  , 0.02 )$ \\
& & GCV & $53.3,\ (51.5, 62.1)$ & $0.47 ,\ (-0.19  , 0.78 )$ \\
\cline{2-5}
& \multirow{2}{*}{KRR}
& LOOCV & $1.04,\ (0.993, 1.23)$ & $0.54 ,\ (-0.46  , 0.75 )$ \\
& & GCV & $1.06,\ (0.987, 1.29)$ & $0.13 ,\ (-37.16 , 0.62 )$ \\
\hline
\multirow{4}{*}{\makecell[l]{U.K.\\Temperature}}
& \multirow{2}{*}{K$\ell_\infty$R}
& LOOCV & $51.2,\ (48.2, 59.1)$ & $-0.05,\ (-4.55  , 0.00 )$ \\
& & GCV & $51.2,\ (48.3, 58.7)$ & $-0.05,\ (-1.12  , 0.00 )$ \\
\cline{2-5}
& \multirow{2}{*}{KRR}
& LOOCV & $0.906,\ (0.818, 1.13)$ & $0.18 ,\ (-2.03  , 0.56 )$ \\
& & GCV & $0.887,\ (0.817, 1.14)$ & $-0.01,\ (-61.69 , 0.58 )$ \\
\hline
\end{tabular}
\label{tab:real_lgcv_100_1}
\end{table}

\section{Additional Experiments}
\label{sec:more_exps}

In this section, we extend the results of Tables \ref{tab:real_100_0} and \ref{tab:real_100_1} by repeating the experiments for the four kernels in Table \ref{tab:kerns}. Since, at the time of writing, the KQRT-T implementation only supported the Gaussian kernel, this algorithm is omitted in this section. The results are presented in Tables \ref{tab:real_0.5_0} to \ref{tab:real_10_1} and agree with those of Section \ref{sec:exp_real}.
\begin{table}
\caption{Additional kernels used}
\center
\begin{tabular}{|l|l|}
\hline
Name & Equation \\
\hline
Matérn, $\nu=\frac12$ (Laplace) & $\exp\left(-\frac{\|\xv-\xpv\|_2}\sigma\right)$\\
Matérn, $\nu=\frac32$  & $\left(1+\frac{\sqrt{3}\cdot\|\xv-\xpv\|_2}\sigma\right)\cdot\exp\left(-\frac{\sqrt{3}\cdot\|\xv-\xpv\|_2}\sigma\right)$\\
Matérn, $\nu=\frac52$  & $\left(1+\frac{\sqrt{5}\cdot\|\xv-\xpv\|_2}\sigma+\frac{5\cdot\|\xv-\xpv\|^2_2}{3\cdot\sigma^2}\right)\cdot\exp\left(-\frac{\sqrt{5}\cdot\|\xv-\xpv\|_2}\sigma\right)$\\
Cauchy & $\left(1+\frac{\|\xv-\xpv\|_2^2}{\sigma^2}\right)^{-1}$\\
\hline
\end{tabular}
\label{tab:kerns}
\end{table}

\begin{table}
\caption{The 2.5th, 50th, and 97.5th percentiles of computation time and test $R^2$ for the different methods and data sets, \textbf{without} amplified outliers, for $\nu=0.5$ (Laplace kernel). The five robust methods perform very similarly in terms of test $R^2$, while KSGD performs one to two orders of magnitude faster.}
\center
\begin{tabular}{|l|l|l|l|}
\hline
Data & Method & \makecell[l]{Computation Time [s]\\50\%,\ (2.5\%,\ 97.5\%)} & \makecell[l]{Test $R^2$\\50\%,\ (2.5\%,\ 97.5\%)}\\
\hline
\multirow{8}{*}{\makecell[l]{Airfoil Sound\\Pressure}}
& KSGD            & $9.81,\ (4.94, 14.7)$ & $0.49 ,\ (-0.39  , 0.78 )$ \\
& K$\ell_\infty$R & $383,\ (372, 416)$ & $0.50 ,\ (-0.39  , 0.81 )$ \\
\cline{2-4}
& KMR-H           & $113,\ (96.3, 127)$ & $0.52 ,\ (-0.50  , 0.83 )$ \\
& KMR-T           & $84.2,\ (73.5, 96.8)$ & $0.50 ,\ (-0.44  , 0.83 )$ \\
& KQR-A           & $235,\ (219, 253)$ & $0.50 ,\ (-0.44  , 0.83 )$ \\
\cline{2-4}
& KGD             & $20.2,\ (12.7, 26.7)$ & $0.51 ,\ (-0.39  , 0.84 )$ \\
& KRR             & $2.64,\ (2.59, 3.24)$ & $0.54 ,\ (-0.38  , 0.84 )$ \\
\hline
\multirow{8}{*}{\makecell[l]{California\\House Values}}
& KSGD            & $3.58,\ (2.63, 4.97)$ & $0.62 ,\ (-0.09  , 0.87 )$ \\
& K$\ell_\infty$R & $331,\ (312, 361)$ & $0.63 ,\ (-0.10  , 0.84 )$ \\
\cline{2-4}
& KMR-H           & $95.6,\ (81.9, 118)$ & $0.66 ,\ (-0.20  , 0.86 )$ \\
& KMR-T           & $69.3,\ (57.7, 86.6)$ & $0.65 ,\ (-0.21  , 0.87 )$ \\
& KQR-A           & $222,\ (191, 241)$ & $0.67 ,\ (-0.18  , 0.86 )$ \\
\cline{2-4}
& KGD             & $17.9,\ (12.4, 25.1)$ & $0.66 ,\ (-0.13  , 0.85 )$ \\
& KRR             & $2.97,\ (2.64, 4.08)$ & $0.66 ,\ (-0.12  , 0.85 )$ \\
\hline
\multirow{8}{*}{\makecell[l]{Steel Energy\\Consumption}}
& KSGD            & $17.9,\ (14.6, 28.4)$ & $0.98 ,\ (0.85   , 0.99 )$ \\
& K$\ell_\infty$R & $464,\ (408, 478)$ & $0.98 ,\ (0.93   , 0.99 )$ \\
\cline{2-4}
& KMR-H           & $119,\ (91.7, 139)$ & $0.99 ,\ (0.95   , 1.00 )$ \\
& KMR-T           & $97.5,\ (78.7, 107)$ & $0.98 ,\ (0.95   , 1.00 )$ \\
& KQR-A           & $254,\ (208, 268)$ & $0.99 ,\ (0.95   , 1.00 )$ \\
\cline{2-4}
& KGD             & $26.2,\ (19.3, 33.4)$ & $0.98 ,\ (0.93   , 0.99 )$ \\
& KRR             & $2.98,\ (2.64, 3.34)$ & $0.99 ,\ (0.95   , 1.00 )$ \\
\hline
\multirow{8}{*}{\makecell[l]{Superconductor\\Critical\\Temperature}}
& KSGD            & $18.0,\ (13.1, 22.1)$ & $0.58 ,\ (0.04   , 0.88 )$ \\
& K$\ell_\infty$R & $443,\ (404, 491)$ & $0.65 ,\ (0.10   , 0.90 )$ \\
\cline{2-4}
& KMR-H           & $129,\ (106, 161)$ & $0.65 ,\ (-0.01  , 0.87 )$ \\
& KMR-T           & $99.4,\ (76.0, 124)$ & $0.67 ,\ (0.06   , 0.89 )$ \\
& KQR-A           & $275,\ (213, 335)$ & $0.65 ,\ (0.06   , 0.90 )$ \\
\cline{2-4}
& KGD             & $15.3,\ (10.6, 22.8)$ & $0.67 ,\ (0.12   , 0.90 )$ \\
& KRR             & $3.38,\ (3.16, 4.13)$ & $0.65 ,\ (0.14   , 0.88 )$ \\
\hline
\multirow{8}{*}{\makecell[l]{U.K.\\Temperature}}
& KSGD            & $7.43,\ (4.34, 11.5)$ & $0.43 ,\ (0.04   , 0.67 )$ \\
& K$\ell_\infty$R & $382,\ (363, 420)$ & $0.48 ,\ (0.02   , 0.70 )$ \\
\cline{2-4}
& KMR-H           & $118,\ (96.8, 142)$ & $0.46 ,\ (0.01   , 0.68 )$ \\
& KMR-T           & $87.2,\ (69.6, 110)$ & $0.46 ,\ (0.01   , 0.69 )$ \\
& KQR-A           & $250,\ (226, 297)$ & $0.47 ,\ (-0.10  , 0.69 )$ \\
\cline{2-4}
& KGD             & $18.6,\ (14.2, 25.4)$ & $0.47 ,\ (0.01   , 0.69 )$ \\
& KRR             & $2.74,\ (2.61, 3.03)$ & $0.46 ,\ (0.02   , 0.69 )$ \\
\hline
\end{tabular}
\label{tab:real_0.5_0}
\end{table}

\begin{table}
\caption{The 2.5th, 50th, and 97.5th percentiles of computation time and test $R^2$ for the different methods and data sets, \textbf{with} amplified outliers, for $\nu=0.5$ (Laplace kernel). The five robust methods perform very similarly in terms of test $R^2$, while KSGD performs one to two orders of magnitude faster. The two non-robust methods, KGD/KRR, do not perform as well as the robust methods.}
\center
\begin{tabular}{|l|l|l|l|}
\hline
Data & Method & \makecell[l]{Computation Time [s]\\50\%,\ (2.5\%,\ 97.5\%)} & \makecell[l]{Test $R^2$\\50\%,\ (2.5\%,\ 97.5\%)}\\
\hline
\multirow{8}{*}{\makecell[l]{Airfoil Sound\\Pressure}}
& KSGD            & $11.7,\ (4.46, 15.2)$ & $0.41 ,\ (-0.07  , 0.70 )$ \\
& K$\ell_\infty$R & $386,\ (375, 417)$ & $0.43 ,\ (-0.73  , 0.73 )$ \\
\cline{2-4}
& KMR-H           & $82.1,\ (38.3, 116)$ & $0.41 ,\ (-0.39  , 0.75 )$ \\
& KMR-T           & $70.9,\ (36.9, 91.1)$ & $0.40 ,\ (-0.14  , 0.76 )$ \\
& KQR-A           & $251,\ (223, 283)$ & $0.43 ,\ (-0.07  , 0.74 )$ \\
\cline{2-4}
& KGD             & $15.0,\ (3.83, 22.1)$ & $0.31 ,\ (-57.39 , 0.75 )$ \\
& KRR             & $2.64,\ (2.60, 2.89)$ & $0.32 ,\ (-1.62  , 0.72 )$ \\
\hline
\multirow{8}{*}{\makecell[l]{California\\House Values}}
& KSGD            & $4.05,\ (2.77, 6.99)$ & $0.49 ,\ (-0.20  , 0.77 )$ \\
& K$\ell_\infty$R & $344,\ (326, 382)$ & $0.49 ,\ (-0.26  , 0.79 )$ \\
\cline{2-4}
& KMR-H           & $72.2,\ (38.5, 111)$ & $0.46 ,\ (-0.32  , 0.80 )$ \\
& KMR-T           & $58.4,\ (35.8, 79.1)$ & $0.48 ,\ (-0.22  , 0.80 )$ \\
& KQR-A           & $226,\ (202, 266)$ & $0.48 ,\ (-0.30  , 0.79 )$ \\
\cline{2-4}
& KGD             & $12.7,\ (3.83, 18.7)$ & $0.34 ,\ (-2.69  , 0.80 )$ \\
& KRR             & $2.73,\ (2.66, 3.18)$ & $0.37 ,\ (-0.33  , 0.80 )$ \\
\hline
\multirow{8}{*}{\makecell[l]{Steel Energy\\Consumption}}
& KSGD            & $22.1,\ (17.5, 34.1)$ & $0.89 ,\ (0.07   , 0.97 )$ \\
& K$\ell_\infty$R & $433,\ (403, 516)$ & $0.84 ,\ (-0.76  , 0.97 )$ \\
\cline{2-4}
& KMR-H           & $81.8,\ (45.7, 125)$ & $0.87 ,\ (-1.19  , 0.97 )$ \\
& KMR-T           & $72.2,\ (46.7, 107)$ & $0.88 ,\ (-0.07  , 0.97 )$ \\
& KQR-A           & $263,\ (230, 330)$ & $0.88 ,\ (-1.58  , 0.97 )$ \\
\cline{2-4}
& KGD             & $11.6,\ (6.34, 17.5)$ & $0.75 ,\ (-31.91 , 0.96 )$ \\
& KRR             & $2.73,\ (2.65, 2.92)$ & $0.79 ,\ (-0.36  , 0.96 )$ \\
\hline
\multirow{8}{*}{\makecell[l]{Superconductor\\Critical\\Temperature}}
& KSGD            & $21.2,\ (15.0, 27.7)$ & $0.47 ,\ (-0.10  , 0.82 )$ \\
& K$\ell_\infty$R & $445,\ (400, 476)$ & $0.50 ,\ (-0.28  , 0.85 )$ \\
\cline{2-4}
& KMR-H           & $112,\ (55.7, 163)$ & $0.48 ,\ (-0.38  , 0.84 )$ \\
& KMR-T           & $89.1,\ (52.6, 158)$ & $0.49 ,\ (-0.30  , 0.84 )$ \\
& KQR-A           & $282,\ (222, 332)$ & $0.48 ,\ (-0.10  , 0.84 )$ \\
\cline{2-4}
& KGD             & $11.5,\ (4.98, 17.9)$ & $0.44 ,\ (-0.96  , 0.82 )$ \\
& KRR             & $3.26,\ (3.13, 3.76)$ & $0.46 ,\ (-0.44  , 0.81 )$ \\
\hline
\multirow{8}{*}{\makecell[l]{U.K.\\Temperature}}
& KSGD            & $9.33,\ (5.15, 14.9)$ & $0.36 ,\ (-0.07  , 0.63 )$ \\
& K$\ell_\infty$R & $383,\ (367, 461)$ & $0.27 ,\ (-0.47  , 0.63 )$ \\
\cline{2-4}
& KMR-H           & $78.5,\ (38.0, 135)$ & $0.32 ,\ (-0.10  , 0.65 )$ \\
& KMR-T           & $65.9,\ (36.2, 98.6)$ & $0.35 ,\ (-0.20  , 0.64 )$ \\
& KQR-A           & $249,\ (217, 312)$ & $0.37 ,\ (-0.25  , 0.69 )$ \\
\cline{2-4}
& KGD             & $13.8,\ (3.81, 22.6)$ & $0.18 ,\ (-19.57 , 0.62 )$ \\
& KRR             & $2.66,\ (2.62, 3.22)$ & $0.16 ,\ (-1.83  , 0.63 )$ \\
\hline
\end{tabular}
\label{tab:real_0.5_1}
\end{table}

\begin{table}
\caption{The 2.5th, 50th, and 97.5th percentiles of computation time and test $R^2$ for the different methods and data sets, \textbf{without} amplified outliers, for $\nu=1.5$. The five robust methods perform very similarly in terms of test $R^2$, while KSGD performs one to two orders of magnitude faster.}
\center
\begin{tabular}{|l|l|l|l|}
\hline
Data & Method & \makecell[l]{Computation Time [s]\\50\%,\ (2.5\%,\ 97.5\%)} & \makecell[l]{Test $R^2$\\50\%,\ (2.5\%,\ 97.5\%)}\\
\hline
\multirow{8}{*}{\makecell[l]{Airfoil Sound\\Pressure}}
& KSGD            & $10.2,\ (5.42, 14.7)$ & $0.50 ,\ (-0.42  , 0.83 )$ \\
& K$\ell_\infty$R & $411,\ (375, 435)$ & $0.53 ,\ (-0.52  , 0.84 )$ \\
\cline{2-4}
& KMR-H           & $184,\ (145, 208)$ & $0.54 ,\ (-0.41  , 0.81 )$ \\
& KMR-T           & $163,\ (135, 186)$ & $0.55 ,\ (-0.43  , 0.85 )$ \\
& KQR-A           & $329,\ (264, 344)$ & $0.52 ,\ (-0.42  , 0.85 )$ \\
\cline{2-4}
& KGD             & $17.4,\ (11.9, 21.2)$ & $0.56 ,\ (-0.50  , 0.85 )$ \\
& KRR             & $3.03,\ (2.95, 3.52)$ & $0.56 ,\ (-0.41  , 0.85 )$ \\
\hline
\multirow{8}{*}{\makecell[l]{California\\House Values}}
& KSGD            & $3.74,\ (2.08, 7.04)$ & $0.63 ,\ (-0.11  , 0.85 )$ \\
& K$\ell_\infty$R & $330,\ (314, 351)$ & $0.64 ,\ (-0.19  , 0.89 )$ \\
\cline{2-4}
& KMR-H           & $134,\ (120, 148)$ & $0.62 ,\ (-0.19  , 0.87 )$ \\
& KMR-T           & $110,\ (98.8, 118)$ & $0.61 ,\ (-0.25  , 0.87 )$ \\
& KQR-A           & $279,\ (245, 289)$ & $0.61 ,\ (-0.07  , 0.88 )$ \\
\cline{2-4}
& KGD             & $14.6,\ (10.6, 18.2)$ & $0.64 ,\ (-0.35  , 0.89 )$ \\
& KRR             & $3.24,\ (3.06, 3.41)$ & $0.63 ,\ (-0.10  , 0.85 )$ \\
\hline
\multirow{8}{*}{\makecell[l]{Steel Energy\\Consumption}}
& KSGD            & $15.0,\ (12.4, 23.4)$ & $0.98 ,\ (0.90   , 1.00 )$ \\
& K$\ell_\infty$R & $420,\ (404, 449)$ & $0.98 ,\ (0.94   , 0.99 )$ \\
\cline{2-4}
& KMR-H           & $162,\ (147, 175)$ & $0.99 ,\ (0.97   , 1.00 )$ \\
& KMR-T           & $145,\ (133, 157)$ & $0.99 ,\ (0.97   , 1.00 )$ \\
& KQR-A           & $278,\ (244, 289)$ & $0.99 ,\ (0.97   , 1.00 )$ \\
\cline{2-4}
& KGD             & $22.9,\ (18.8, 26.2)$ & $0.99 ,\ (0.94   , 1.00 )$ \\
& KRR             & $3.11,\ (3.04, 3.28)$ & $0.99 ,\ (0.97   , 1.00 )$ \\
\hline
\multirow{8}{*}{\makecell[l]{Superconductor\\Critical\\Temperature}}
& KSGD            & $16.2,\ (11.4, 21.6)$ & $0.58 ,\ (0.07   , 0.86 )$ \\
& K$\ell_\infty$R & $415,\ (399, 495)$ & $0.65 ,\ (0.17   , 0.91 )$ \\
\cline{2-4}
& KMR-H           & $179,\ (142, 228)$ & $0.64 ,\ (0.16   , 0.90 )$ \\
& KMR-T           & $156,\ (119, 204)$ & $0.65 ,\ (0.14   , 0.88 )$ \\
& KQR-A           & $306,\ (252, 373)$ & $0.61 ,\ (-0.00  , 0.90 )$ \\
\cline{2-4}
& KGD             & $16.3,\ (12.1, 22.4)$ & $0.68 ,\ (0.09   , 0.90 )$ \\
& KRR             & $4.06,\ (3.53, 4.52)$ & $0.65 ,\ (0.13   , 0.89 )$ \\
\hline
\multirow{8}{*}{\makecell[l]{U.K.\\Temperature}}
& KSGD            & $6.36,\ (3.65, 9.72)$ & $0.42 ,\ (0.01   , 0.72 )$ \\
& K$\ell_\infty$R & $367,\ (358, 379)$ & $0.43 ,\ (-0.11  , 0.71 )$ \\
\cline{2-4}
& KMR-H           & $162,\ (144, 181)$ & $0.44 ,\ (-0.27  , 0.68 )$ \\
& KMR-T           & $148,\ (129, 164)$ & $0.44 ,\ (-0.25  , 0.72 )$ \\
& KQR-A           & $285,\ (259, 333)$ & $0.39 ,\ (-0.23  , 0.70 )$ \\
\cline{2-4}
& KGD             & $12.8,\ (9.70, 18.0)$ & $0.43 ,\ (-0.20  , 0.71 )$ \\
& KRR             & $3.05,\ (2.98, 3.51)$ & $0.46 ,\ (-0.25  , 0.68 )$ \\
\hline
\end{tabular}
\label{tab:real_1.5_0}
\end{table}

\begin{table}
\caption{The 2.5th, 50th, and 97.5th percentiles of computation time and test $R^2$ for the different methods and data sets, \textbf{with} amplified outliers, for $\nu=1.5$. The five robust methods perform very similarly in terms of test $R^2$, while KSGD performs one to two orders of magnitude faster. The two non-robust methods, KGD/KRR, do not perform as well as the robust methods.}
\center
\begin{tabular}{|l|l|l|l|}
\hline
Data & Method & \makecell[l]{Computation Time [s]\\50\%,\ (2.5\%,\ 97.5\%)} & \makecell[l]{Test $R^2$\\50\%,\ (2.5\%,\ 97.5\%)}\\
\hline
\multirow{8}{*}{\makecell[l]{Airfoil Sound\\Pressure}}
& KSGD            & $10.9,\ (6.35, 14.8)$ & $0.40 ,\ (-0.25  , 0.74 )$ \\
& K$\ell_\infty$R & $387,\ (378, 412)$ & $0.44 ,\ (-0.77  , 0.74 )$ \\
\cline{2-4}
& KMR-H           & $141,\ (82.1, 169)$ & $0.39 ,\ (-0.14  , 0.74 )$ \\
& KMR-T           & $131,\ (93.0, 153)$ & $0.41 ,\ (-0.10  , 0.77 )$ \\
& KQR-A           & $289,\ (271, 304)$ & $0.39 ,\ (-0.15  , 0.75 )$ \\
\cline{2-4}
& KGD             & $13.6,\ (3.52, 18.5)$ & $0.25 ,\ (-80.63 , 0.72 )$ \\
& KRR             & $3.03,\ (2.99, 4.03)$ & $0.27 ,\ (-1.61  , 0.71 )$ \\
\hline
\multirow{8}{*}{\makecell[l]{California\\House Values}}
& KSGD            & $4.78,\ (2.53, 8.28)$ & $0.48 ,\ (-0.04  , 0.81 )$ \\
& K$\ell_\infty$R & $363,\ (326, 394)$ & $0.48 ,\ (-0.55  , 0.80 )$ \\
\cline{2-4}
& KMR-H           & $109,\ (69.8, 146)$ & $0.44 ,\ (-0.62  , 0.81 )$ \\
& KMR-T           & $98.2,\ (75.5, 127)$ & $0.48 ,\ (-0.54  , 0.82 )$ \\
& KQR-A           & $290,\ (245, 326)$ & $0.38 ,\ (-0.81  , 0.83 )$ \\
\cline{2-4}
& KGD             & $11.9,\ (5.15, 14.8)$ & $0.24 ,\ (-4.82  , 0.75 )$ \\
& KRR             & $3.12,\ (3.06, 3.47)$ & $0.33 ,\ (-0.28  , 0.78 )$ \\
\hline
\multirow{8}{*}{\makecell[l]{Steel Energy\\Consumption}}
& KSGD            & $20.8,\ (15.9, 31.9)$ & $0.89 ,\ (0.06   , 0.97 )$ \\
& K$\ell_\infty$R & $495,\ (413, 530)$ & $0.83 ,\ (-0.59  , 0.98 )$ \\
\cline{2-4}
& KMR-H           & $161,\ (123, 211)$ & $0.88 ,\ (-1.32  , 0.98 )$ \\
& KMR-T           & $169,\ (134, 204)$ & $0.87 ,\ (-0.02  , 0.98 )$ \\
& KQR-A           & $344,\ (264, 373)$ & $0.87 ,\ (-1.30  , 0.98 )$ \\
\cline{2-4}
& KGD             & $13.8,\ (6.79, 18.3)$ & $0.74 ,\ (-28.49 , 0.97 )$ \\
& KRR             & $3.09,\ (3.02, 3.75)$ & $0.80 ,\ (-0.03  , 0.97 )$ \\
\hline
\multirow{8}{*}{\makecell[l]{Superconductor\\Critical\\Temperature}}
& KSGD            & $18.1,\ (13.9, 23.1)$ & $0.46 ,\ (-0.29  , 0.83 )$ \\
& K$\ell_\infty$R & $426,\ (409, 472)$ & $0.48 ,\ (-0.50  , 0.85 )$ \\
\cline{2-4}
& KMR-H           & $156,\ (95.1, 201)$ & $0.49 ,\ (-0.43  , 0.82 )$ \\
& KMR-T           & $137,\ (96.7, 200)$ & $0.50 ,\ (-0.44  , 0.86 )$ \\
& KQR-A           & $289,\ (247, 325)$ & $0.44 ,\ (-0.55  , 0.83 )$ \\
\cline{2-4}
& KGD             & $12.5,\ (4.72, 16.7)$ & $0.37 ,\ (-1.33  , 0.83 )$ \\
& KRR             & $3.61,\ (3.56, 4.25)$ & $0.46 ,\ (-0.46  , 0.82 )$ \\
\hline
\multirow{8}{*}{\makecell[l]{U.K.\\Temperature}}
& KSGD            & $9.54,\ (4.72, 14.7)$ & $0.35 ,\ (-0.08  , 0.63 )$ \\
& K$\ell_\infty$R & $460,\ (367, 491)$ & $0.27 ,\ (-0.60  , 0.62 )$ \\
\cline{2-4}
& KMR-H           & $153,\ (92.4, 226)$ & $0.33 ,\ (-0.70  , 0.66 )$ \\
& KMR-T           & $146,\ (103, 204)$ & $0.30 ,\ (-0.71  , 0.65 )$ \\
& KQR-A           & $363,\ (266, 393)$ & $0.32 ,\ (-0.88  , 0.66 )$ \\
\cline{2-4}
& KGD             & $11.6,\ (4.35, 17.7)$ & $0.14 ,\ (-24.32 , 0.61 )$ \\
& KRR             & $3.11,\ (3.01, 4.03)$ & $0.21 ,\ (-1.45  , 0.61 )$ \\
\hline
\end{tabular}
\label{tab:real_1.5_1}
\end{table}

\begin{table}
\caption{The 2.5th, 50th, and 97.5th percentiles of computation time and test $R^2$ for the different methods and data sets, \textbf{without} amplified outliers, for $\nu=2.5$. The five robust methods perform very similarly in terms of test $R^2$, while KSGD performs one to two orders of magnitude faster.}
\center
\begin{tabular}{|l|l|l|l|}
\hline
Data & Method & \makecell[l]{Computation Time [s]\\50\%,\ (2.5\%,\ 97.5\%)} & \makecell[l]{Test $R^2$\\50\%,\ (2.5\%,\ 97.5\%)}\\
\hline
\multirow{8}{*}{\makecell[l]{Airfoil Sound\\Pressure}}
& KSGD            & $9.84,\ (5.66, 15.7)$ & $0.50 ,\ (-0.51  , 0.82 )$ \\
& K$\ell_\infty$R & $409,\ (371, 455)$ & $0.53 ,\ (-0.50  , 0.85 )$ \\
\cline{2-4}
& KMR-H           & $190,\ (152, 209)$ & $0.49 ,\ (-0.61  , 0.82 )$ \\
& KMR-T           & $177,\ (144, 197)$ & $0.52 ,\ (-0.43  , 0.82 )$ \\
& KQR-A           & $327,\ (270, 352)$ & $0.51 ,\ (-0.60  , 0.83 )$ \\
\cline{2-4}
& KGD             & $16.7,\ (11.5, 20.6)$ & $0.56 ,\ (-0.50  , 0.86 )$ \\
& KRR             & $3.27,\ (3.18, 3.68)$ & $0.54 ,\ (-0.43  , 0.85 )$ \\
\hline
\multirow{8}{*}{\makecell[l]{California\\House Values}}
& KSGD            & $4.35,\ (2.34, 7.86)$ & $0.60 ,\ (0.02   , 0.86 )$ \\
& K$\ell_\infty$R & $343,\ (317, 383)$ & $0.63 ,\ (-0.11  , 0.89 )$ \\
\cline{2-4}
& KMR-H           & $155,\ (122, 176)$ & $0.62 ,\ (-0.18  , 0.87 )$ \\
& KMR-T           & $130,\ (106, 147)$ & $0.59 ,\ (-0.29  , 0.87 )$ \\
& KQR-A           & $301,\ (255, 328)$ & $0.60 ,\ (-0.07  , 0.88 )$ \\
\cline{2-4}
& KGD             & $14.4,\ (10.4, 19.0)$ & $0.63 ,\ (-0.14  , 0.89 )$ \\
& KRR             & $3.56,\ (3.29, 3.84)$ & $0.62 ,\ (-0.06  , 0.85 )$ \\
\hline
\multirow{8}{*}{\makecell[l]{Steel Energy\\Consumption}}
& KSGD            & $14.7,\ (12.2, 23.6)$ & $0.98 ,\ (0.91   , 0.99 )$ \\
& K$\ell_\infty$R & $439,\ (406, 481)$ & $0.98 ,\ (0.95   , 1.00 )$ \\
\cline{2-4}
& KMR-H           & $178,\ (151, 200)$ & $0.99 ,\ (0.98   , 1.00 )$ \\
& KMR-T           & $167,\ (147, 178)$ & $0.99 ,\ (0.97   , 1.00 )$ \\
& KQR-A           & $290,\ (251, 311)$ & $0.99 ,\ (0.98   , 1.00 )$ \\
\cline{2-4}
& KGD             & $23.1,\ (18.6, 27.0)$ & $0.99 ,\ (0.95   , 1.00 )$ \\
& KRR             & $3.36,\ (3.27, 3.64)$ & $0.99 ,\ (0.98   , 1.00 )$ \\
\hline
\multirow{8}{*}{\makecell[l]{Superconductor\\Critical\\Temperature}}
& KSGD            & $14.9,\ (10.9, 19.9)$ & $0.58 ,\ (0.01   , 0.87 )$ \\
& K$\ell_\infty$R & $431,\ (406, 491)$ & $0.63 ,\ (0.16   , 0.91 )$ \\
\cline{2-4}
& KMR-H           & $185,\ (158, 231)$ & $0.64 ,\ (0.05   , 0.91 )$ \\
& KMR-T           & $166,\ (135, 209)$ & $0.64 ,\ (0.10   , 0.88 )$ \\
& KQR-A           & $315,\ (264, 366)$ & $0.62 ,\ (-0.05  , 0.89 )$ \\
\cline{2-4}
& KGD             & $15.5,\ (12.6, 20.5)$ & $0.68 ,\ (0.16   , 0.90 )$ \\
& KRR             & $4.15,\ (3.77, 4.42)$ & $0.65 ,\ (0.13   , 0.89 )$ \\
\hline
\multirow{8}{*}{\makecell[l]{U.K.\\Temperature}}
& KSGD            & $7.07,\ (3.47, 11.4)$ & $0.40 ,\ (-0.01  , 0.73 )$ \\
& K$\ell_\infty$R & $394,\ (362, 460)$ & $0.41 ,\ (-0.14  , 0.70 )$ \\
\cline{2-4}
& KMR-H           & $183,\ (162, 228)$ & $0.41 ,\ (-0.58  , 0.66 )$ \\
& KMR-T           & $177,\ (152, 205)$ & $0.43 ,\ (-0.58  , 0.68 )$ \\
& KQR-A           & $314,\ (284, 380)$ & $0.35 ,\ (-0.95  , 0.69 )$ \\
\cline{2-4}
& KGD             & $12.6,\ (9.63, 17.3)$ & $0.42 ,\ (-0.22  , 0.67 )$ \\
& KRR             & $3.53,\ (3.24, 4.04)$ & $0.42 ,\ (-0.58  , 0.66 )$ \\
\hline
\end{tabular}
\label{tab:real_2.5_0}
\end{table}

\begin{table}
\caption{The 2.5th, 50th, and 97.5th percentiles of computation time and test $R^2$ for the different methods and data sets, \textbf{with} amplified outliers, for $\nu=2.5$. The five robust methods perform very similarly in terms of test $R^2$, while KSGD performs one to two orders of magnitude faster. The two non-robust methods, KGD/KRR, do not perform as well as the robust methods.}
\center
\begin{tabular}{|l|l|l|l|}
\hline
Data & Method & \makecell[l]{Computation Time [s]\\50\%,\ (2.5\%,\ 97.5\%)} & \makecell[l]{Test $R^2$\\50\%,\ (2.5\%,\ 97.5\%)}\\
\hline
\multirow{8}{*}{\makecell[l]{Airfoil Sound\\Pressure}}
& KSGD            & $10.8,\ (6.34, 14.3)$ & $0.41 ,\ (-0.25  , 0.71 )$ \\
& K$\ell_\infty$R & $389,\ (376, 410)$ & $0.43 ,\ (-0.67  , 0.75 )$ \\
\cline{2-4}
& KMR-H           & $150,\ (90.2, 177)$ & $0.39 ,\ (-0.52  , 0.68 )$ \\
& KMR-T           & $140,\ (105, 160)$ & $0.37 ,\ (-0.57  , 0.79 )$ \\
& KQR-A           & $292,\ (272, 310)$ & $0.39 ,\ (-0.86  , 0.70 )$ \\
\cline{2-4}
& KGD             & $13.4,\ (3.63, 17.6)$ & $0.23 ,\ (-92.96 , 0.68 )$ \\
& KRR             & $3.26,\ (3.19, 3.35)$ & $0.27 ,\ (-1.61  , 0.70 )$ \\
\hline
\multirow{8}{*}{\makecell[l]{California\\House Values}}
& KSGD            & $4.89,\ (2.78, 10.2)$ & $0.46 ,\ (-0.05  , 0.82 )$ \\
& K$\ell_\infty$R & $352,\ (326, 391)$ & $0.43 ,\ (-0.41  , 0.80 )$ \\
\cline{2-4}
& KMR-H           & $119,\ (80.8, 157)$ & $0.40 ,\ (-0.70  , 0.81 )$ \\
& KMR-T           & $114,\ (88.8, 133)$ & $0.44 ,\ (-0.55  , 0.80 )$ \\
& KQR-A           & $282,\ (266, 328)$ & $0.34 ,\ (-0.78  , 0.82 )$ \\
\cline{2-4}
& KGD             & $11.8,\ (5.21, 15.1)$ & $0.20 ,\ (-3.93  , 0.79 )$ \\
& KRR             & $3.34,\ (3.29, 4.09)$ & $0.31 ,\ (-0.27  , 0.79 )$ \\
\hline
\multirow{8}{*}{\makecell[l]{Steel Energy\\Consumption}}
& KSGD            & $17.4,\ (13.2, 25.6)$ & $0.90 ,\ (0.06   , 0.97 )$ \\
& K$\ell_\infty$R & $416,\ (401, 573)$ & $0.85 ,\ (-0.70  , 0.98 )$ \\
\cline{2-4}
& KMR-H           & $149,\ (116, 173)$ & $0.87 ,\ (-1.92  , 0.98 )$ \\
& KMR-T           & $151,\ (132, 167)$ & $0.87 ,\ (-0.02  , 0.98 )$ \\
& KQR-A           & $275,\ (259, 311)$ & $0.86 ,\ (-2.41  , 0.98 )$ \\
\cline{2-4}
& KGD             & $13.8,\ (6.41, 19.5)$ & $0.76 ,\ (-31.99 , 0.97 )$ \\
& KRR             & $3.34,\ (3.30, 3.59)$ & $0.80 ,\ (-5.16  , 0.97 )$ \\
\hline
\multirow{8}{*}{\makecell[l]{Superconductor\\Critical\\Temperature}}
& KSGD            & $16.6,\ (12.4, 22.5)$ & $0.46 ,\ (-0.40  , 0.82 )$ \\
& K$\ell_\infty$R & $417,\ (399, 478)$ & $0.49 ,\ (-0.36  , 0.85 )$ \\
\cline{2-4}
& KMR-H           & $163,\ (99.4, 211)$ & $0.49 ,\ (-0.45  , 0.80 )$ \\
& KMR-T           & $146,\ (105, 204)$ & $0.49 ,\ (-0.41  , 0.85 )$ \\
& KQR-A           & $287,\ (258, 370)$ & $0.48 ,\ (-0.47  , 0.82 )$ \\
\cline{2-4}
& KGD             & $12.3,\ (4.91, 17.2)$ & $0.39 ,\ (-1.28  , 0.83 )$ \\
& KRR             & $3.84,\ (3.76, 4.70)$ & $0.47 ,\ (-0.51  , 0.80 )$ \\
\hline
\multirow{8}{*}{\makecell[l]{U.K.\\Temperature}}
& KSGD            & $8.31,\ (4.60, 12.8)$ & $0.33 ,\ (-0.12  , 0.62 )$ \\
& K$\ell_\infty$R & $384,\ (368, 455)$ & $0.25 ,\ (-1.11  , 0.60 )$ \\
\cline{2-4}
& KMR-H           & $149,\ (98.7, 196)$ & $0.27 ,\ (-2.30  , 0.65 )$ \\
& KMR-T           & $139,\ (107, 179)$ & $0.30 ,\ (-2.36  , 0.66 )$ \\
& KQR-A           & $291,\ (272, 354)$ & $0.26 ,\ (-1.52  , 0.69 )$ \\
\cline{2-4}
& KGD             & $10.8,\ (3.44, 17.3)$ & $0.11 ,\ (-20.02 , 0.61 )$ \\
& KRR             & $3.28,\ (3.22, 3.88)$ & $0.19 ,\ (-2.27  , 0.61 )$ \\
\hline
\end{tabular}
\label{tab:real_2.5_1}
\end{table}

\begin{table}
\caption{The 2.5th, 50th, and 97.5th percentiles of computation time and test $R^2$ for the different methods and data sets, \textbf{without} amplified outliers, for the Cauchy kernel. The five robust methods perform very similarly in terms of test $R^2$, while KSGD performs one to two orders of magnitude faster.}
\center
\begin{tabular}{|l|l|l|l|}
\hline
Data & Method & \makecell[l]{Computation Time [s]\\50\%,\ (2.5\%,\ 97.5\%)} & \makecell[l]{Test $R^2$\\50\%,\ (2.5\%,\ 97.5\%)}\\
\hline
\multirow{8}{*}{\makecell[l]{Airfoil Sound\\Pressure}}
& KSGD            & $10.6,\ (6.23, 15.9)$ & $0.49 ,\ (-0.46  , 0.81 )$ \\
& K$\ell_\infty$R & $403,\ (375, 445)$ & $0.53 ,\ (-0.54  , 0.82 )$ \\
\cline{2-4}
& KMR-H           & $189,\ (156, 209)$ & $0.48 ,\ (-0.59  , 0.84 )$ \\
& KMR-T           & $177,\ (145, 196)$ & $0.52 ,\ (-0.43  , 0.82 )$ \\
& KQR-A           & $326,\ (273, 348)$ & $0.53 ,\ (-0.53  , 0.83 )$ \\
\cline{2-4}
& KGD             & $17.7,\ (12.2, 22.6)$ & $0.54 ,\ (-0.51  , 0.87 )$ \\
& KRR             & $2.26,\ (2.20, 2.57)$ & $0.52 ,\ (-0.42  , 0.84 )$ \\
\hline
\multirow{8}{*}{\makecell[l]{California\\House Values}}
& KSGD            & $4.83,\ (2.93, 7.88)$ & $0.63 ,\ (0.02   , 0.89 )$ \\
& K$\ell_\infty$R & $352,\ (313, 405)$ & $0.64 ,\ (-0.19  , 0.89 )$ \\
\cline{2-4}
& KMR-H           & $154,\ (131, 175)$ & $0.60 ,\ (-0.13  , 0.87 )$ \\
& KMR-T           & $122,\ (106, 145)$ & $0.59 ,\ (-0.31  , 0.86 )$ \\
& KQR-A           & $288,\ (253, 334)$ & $0.56 ,\ (-0.21  , 0.88 )$ \\
\cline{2-4}
& KGD             & $17.2,\ (13.5, 23.2)$ & $0.62 ,\ (-0.26  , 0.88 )$ \\
& KRR             & $2.51,\ (2.29, 2.88)$ & $0.61 ,\ (-0.11  , 0.87 )$ \\
\hline
\multirow{8}{*}{\makecell[l]{Steel Energy\\Consumption}}
& KSGD            & $21.4,\ (16.4, 28.5)$ & $0.98 ,\ (0.89   , 0.99 )$ \\
& K$\ell_\infty$R & $454,\ (421, 518)$ & $0.98 ,\ (0.94   , 0.99 )$ \\
\cline{2-4}
& KMR-H           & $186,\ (150, 208)$ & $1.00 ,\ (0.98   , 1.00 )$ \\
& KMR-T           & $176,\ (145, 195)$ & $0.99 ,\ (0.97   , 1.00 )$ \\
& KQR-A           & $302,\ (258, 331)$ & $0.99 ,\ (0.97   , 1.00 )$ \\
\cline{2-4}
& KGD             & $24.4,\ (20.1, 29.0)$ & $0.99 ,\ (0.94   , 1.00 )$ \\
& KRR             & $2.36,\ (2.26, 2.68)$ & $0.99 ,\ (0.97   , 1.00 )$ \\
\hline
\multirow{8}{*}{\makecell[l]{Superconductor\\Critical\\Temperature}}
& KSGD            & $23.7,\ (17.4, 31.2)$ & $0.57 ,\ (0.05   , 0.86 )$ \\
& K$\ell_\infty$R & $465,\ (400, 500)$ & $0.66 ,\ (0.19   , 0.91 )$ \\
\cline{2-4}
& KMR-H           & $188,\ (147, 231)$ & $0.63 ,\ (0.10   , 0.90 )$ \\
& KMR-T           & $165,\ (130, 209)$ & $0.64 ,\ (0.18   , 0.88 )$ \\
& KQR-A           & $313,\ (253, 351)$ & $0.59 ,\ (-0.01  , 0.89 )$ \\
\cline{2-4}
& KGD             & $18.3,\ (13.6, 24.6)$ & $0.68 ,\ (0.10   , 0.89 )$ \\
& KRR             & $2.76,\ (2.64, 3.25)$ & $0.65 ,\ (0.17   , 0.89 )$ \\
\hline
\multirow{8}{*}{\makecell[l]{U.K.\\Temperature}}
& KSGD            & $7.46,\ (4.06, 11.4)$ & $0.44 ,\ (-0.02  , 0.72 )$ \\
& K$\ell_\infty$R & $410,\ (364, 421)$ & $0.42 ,\ (-0.14  , 0.69 )$ \\
\cline{2-4}
& KMR-H           & $198,\ (158, 220)$ & $0.42 ,\ (-1.12  , 0.65 )$ \\
& KMR-T           & $189,\ (144, 202)$ & $0.44 ,\ (-1.11  , 0.67 )$ \\
& KQR-A           & $338,\ (269, 356)$ & $0.37 ,\ (-1.22  , 0.69 )$ \\
\cline{2-4}
& KGD             & $14.6,\ (9.97, 19.1)$ & $0.41 ,\ (-0.18  , 0.70 )$ \\
& KRR             & $2.37,\ (2.25, 2.65)$ & $0.45 ,\ (-1.12  , 0.67 )$ \\
\hline
\end{tabular}
\label{tab:real_10_0}
\end{table}

\begin{table}
\caption{The 2.5th, 50th, and 97.5th percentiles of computation time and test $R^2$ for the different methods and data sets, \textbf{with} amplified outliers, for the Cauchy kernel. The five robust methods perform very similarly in terms of test $R^2$, while KSGD performs one to two orders of magnitude faster. The two non-robust methods, KGD/KRR, do not perform as well as the robust methods.}
\center
\begin{tabular}{|l|l|l|l|}
\hline
Data & Method & \makecell[l]{Computation Time [s]\\50\%,\ (2.5\%,\ 97.5\%)} & \makecell[l]{Test $R^2$\\50\%,\ (2.5\%,\ 97.5\%)}\\
\hline
\multirow{8}{*}{\makecell[l]{Airfoil Sound\\Pressure}}
& KSGD            & $12.6,\ (7.98, 17.6)$ & $0.42 ,\ (-0.05  , 0.74 )$ \\
& K$\ell_\infty$R & $386,\ (371, 411)$ & $0.42 ,\ (-0.79  , 0.75 )$ \\
\cline{2-4}
& KMR-H           & $151,\ (94.5, 178)$ & $0.41 ,\ (-0.85  , 0.68 )$ \\
& KMR-T           & $140,\ (105, 158)$ & $0.33 ,\ (-0.60  , 0.81 )$ \\
& KQR-A           & $282,\ (265, 310)$ & $0.42 ,\ (-1.13  , 0.75 )$ \\
\cline{2-4}
& KGD             & $14.2,\ (3.66, 19.7)$ & $0.23 ,\ (-83.22 , 0.70 )$ \\
& KRR             & $2.24,\ (2.20, 2.30)$ & $0.28 ,\ (-2.40  , 0.70 )$ \\
\hline
\multirow{8}{*}{\makecell[l]{California\\House Values}}
& KSGD            & $5.59,\ (3.24, 9.36)$ & $0.49 ,\ (-0.05  , 0.81 )$ \\
& K$\ell_\infty$R & $338,\ (322, 362)$ & $0.46 ,\ (-0.54  , 0.78 )$ \\
\cline{2-4}
& KMR-H           & $112,\ (69.8, 143)$ & $0.40 ,\ (-0.76  , 0.80 )$ \\
& KMR-T           & $107,\ (77.4, 121)$ & $0.41 ,\ (-0.59  , 0.81 )$ \\
& KQR-A           & $268,\ (251, 291)$ & $0.29 ,\ (-0.80  , 0.82 )$ \\
\cline{2-4}
& KGD             & $13.9,\ (6.24, 16.8)$ & $0.20 ,\ (-3.98  , 0.77 )$ \\
& KRR             & $2.37,\ (2.28, 2.42)$ & $0.30 ,\ (-0.21  , 0.76 )$ \\
\hline
\multirow{8}{*}{\makecell[l]{Steel Energy\\Consumption}}
& KSGD            & $22.7,\ (19.6, 34.0)$ & $0.89 ,\ (0.06   , 0.97 )$ \\
& K$\ell_\infty$R & $416,\ (402, 485)$ & $0.83 ,\ (-0.75  , 0.98 )$ \\
\cline{2-4}
& KMR-H           & $152,\ (113, 199)$ & $0.86 ,\ (-1.63  , 0.98 )$ \\
& KMR-T           & $152,\ (127, 184)$ & $0.86 ,\ (-0.02  , 0.98 )$ \\
& KQR-A           & $272,\ (252, 320)$ & $0.88 ,\ (-1.72  , 0.98 )$ \\
\cline{2-4}
& KGD             & $14.0,\ (6.36, 18.6)$ & $0.74 ,\ (-34.70 , 0.97 )$ \\
& KRR             & $2.31,\ (2.20, 2.39)$ & $0.80 ,\ (-5.36  , 0.97 )$ \\
\hline
\multirow{8}{*}{\makecell[l]{Superconductor\\Critical\\Temperature}}
& KSGD            & $27.6,\ (22.0, 34.7)$ & $0.47 ,\ (-0.09  , 0.82 )$ \\
& K$\ell_\infty$R & $469,\ (406, 502)$ & $0.47 ,\ (-0.39  , 0.84 )$ \\
\cline{2-4}
& KMR-H           & $182,\ (111, 236)$ & $0.49 ,\ (-0.39  , 0.84 )$ \\
& KMR-T           & $173,\ (114, 250)$ & $0.49 ,\ (-0.54  , 0.85 )$ \\
& KQR-A           & $329,\ (270, 358)$ & $0.48 ,\ (-0.87  , 0.83 )$ \\
\cline{2-4}
& KGD             & $15.4,\ (8.11, 19.6)$ & $0.37 ,\ (-1.24  , 0.83 )$ \\
& KRR             & $2.73,\ (2.59, 3.18)$ & $0.46 ,\ (-0.49  , 0.81 )$ \\
\hline
\multirow{8}{*}{\makecell[l]{U.K.\\Temperature}}
& KSGD            & $9.66,\ (5.27, 17.3)$ & $0.38 ,\ (-0.08  , 0.62 )$ \\
& K$\ell_\infty$R & $428,\ (373, 456)$ & $0.29 ,\ (-0.93  , 0.63 )$ \\
\cline{2-4}
& KMR-H           & $163,\ (111, 225)$ & $0.28 ,\ (-3.21  , 0.66 )$ \\
& KMR-T           & $152,\ (115, 202)$ & $0.29 ,\ (-3.33  , 0.66 )$ \\
& KQR-A           & $338,\ (273, 369)$ & $0.22 ,\ (-1.46  , 0.64 )$ \\
\cline{2-4}
& KGD             & $12.3,\ (3.28, 19.1)$ & $0.10 ,\ (-19.91 , 0.59 )$ \\
& KRR             & $2.33,\ (2.23, 2.65)$ & $0.14 ,\ (-2.69  , 0.59 )$ \\
\hline
\end{tabular}
\label{tab:real_10_1}
\end{table}

\newpage
\clearpage
\newpage

\section{Proofs}
\label{sec:proofs}

\begin{proof}[Proof of Proposition \ref{thm:equ_alpha}]~\\
Differentiating Equation \ref{eq:equ_alpha2} with respect to $\alphav$ and setting the gradient to $\nv$, we obtain

\begin{equation*}
\begin{aligned}
&\nv=\frac{\partial}{\partial \alphahv}\left(\frac12\left\|\yv-\K\alphahv\right\|_{\K^{-1}}^2+\frac\lambda2\|\alphahv\|^2_2\right)=-\yv+\K\alphahv+\lambda\alphahv=-\yv+(\K+\lambda\I)\alphahv\\
&\iff \alphahv=(\K+\lambda\I)^{-1}\yv.
\end{aligned}
\end{equation*}
\end{proof}

\begin{proof}[Proof of Equation \ref{eq:kgf_s}]~\\
With $\etahv(t)=\K\alphahv(t)-\yv$, we obtain $\frac{d\etahv(t)}{dt}=\K\frac{d\alphahv(t)}{dt}$ and Equation \ref{eq:kgd_diff_eq} can be written as
$$\frac{d\etahv(t)}{dt}=-\K\etahv(t)\iff \etahv(t)=\exp\left(-t\K\right)\etahnv.$$
Now,
$$\alphahv(0)=\bm{0}\implies\etahnv=\etahv(0)=-\yv\implies \etahv(t)=-\exp(-t\K)\yv.$$
Solving for $\alphahv(t)$, we obtain
$$\alphahv(t)=\K^{-1}\left(\I-\exp(-t\K)\right)\yv=\left(\I-\exp(-t\K)\right)\K^{-1}\yv.$$
\end{proof}

\begin{proof}[Proof of Remark \ref{rm:nest}]~\\
The differential equation in Equation \ref{eq:kgd_diff_eq} can be obtained by writing
$$\alphahv_{k+1}=\alphahv_k+\eta\cdot\left(\yv-\K\alphahv_k\right)$$
as 
$$\alphahv(t+\Delta t)=\alphahv(t)+\Delta t\cdot\left(\yv-\K\alphahv(t)\right),$$
rearranging and letting $\Delta t\to 0$, to obtain
$$\frac{d\alphahv(t)}{dt}=\lim_{\Delta t\to 0}\left(\frac{\alphahv(t+\Delta t)-\alphahv(t)}{\Delta t}\right)=\yv-\K\alphahv(t).$$
For momentum and Nesterov accelerated gradient, the update rule for $\alphahv$ in Equation \ref{eq:kgd_update} generalizes to
\begin{equation*}
\begin{aligned}
\alphahv(t+\Delta t)=&\alphahv(t)+\gamma\cdot\left(\alphahv(t)-\alphahv(t-\Delta t)\right)\\
&+\Delta t\cdot\left(\yv-\K\left(\alphahv(t)+\mathbb{I}_{\text{NAG}}\cdot\gamma\cdot\left(\alphahv(t)-\alphahv(t-\Delta t)\right)\right)\right),
\end{aligned}
\end{equation*}
where $\gamma\in[0,1)$ is the strength of the momentum and $\mathbb{I}_{\text{NAG}} \in \{0,1\}$ decides whether to use Nesterov accelerated gradient or not. Rearranging, we obtain
\begin{equation*}
\begin{aligned}
&\frac{\alphahv(t+\Delta t)-\alphahv(t)}{\Delta t}-\gamma\cdot\frac{\alphahv(t)-\alphahv(t-\Delta t)}{\Delta t}=\\
&\yv-\K\left(\alphahv(t)+\mathbb{I}_{\text{NAG}}\cdot\gamma\cdot\left(\alphahv(t)-\alphahv(t-\Delta t)\right)\right),
\end{aligned}
\end{equation*}
and, when $\Delta t \to 0$,
$$(1-\gamma)\cdot\frac{d\alphahv(t)}{d t}=\yv-\K\alphahv(t).$$
Solving the differential equations in the same way as for Equation \ref{eq:kgf_s}, we obtain
$$\alphahv(t)=\left(\I-\exp\left(-\frac t{1-\gamma}\K\right)\right)\K^{-1}\yv.$$

\end{proof}

\begin{proof}[Proof of Proposition \ref{thm:kgf_krr_diff}]~\\
By using the fact that $\K$, $\K^{-1}$, and $\DK:=\left(\left(\I+t\K\right)^{-1}-\exp(-t\K)\right)$ commute, we obtain
\begin{equation*}
\begin{aligned}
\alphahfv(t)-\alphahrv(1/t)&=\left(\I-\exp(-t\K)\right)\K^{-1}\yv-\left(\I-\left(\I+t\K\right)^{-1}\right)\K^{-1}\yv\\
&=\underbrace{\left(\left(\I+t\K\right)^{-1}-\exp(-t\K)\right)}_{=:\DK}\K^{-1}\yv=\DK\K^{-1}\yv,\\
\end{aligned}
\end{equation*}

\begin{equation*}
\begin{aligned}
\fhfv(\X,t)-\fhrv(\X,1/t)&=\K\left(\alphahfv-\alphahrv\right)=\K\DK\K^{-1}\yv\\
&=\DK\K\K^{-1}\yv=\DK\yv,\\
\fhf(\xsv,t)-\fhr(\xsv,1/t)&=\kv(\xsv)^\top\left(\alphahfv-\alphahrv\right)\\
&=\kv(\xsv)^\top\DK\K^{-1}\yv=\kv(\xsv)^\top\K^{-1}\DK\yv,\\
\end{aligned}
\end{equation*}

We denote the singular value decomposition of the symmetric matrix $\K$ by $\K=\U\Ss\U^\top$, where $\Ss$ is a diagonal matrix with diagonal elements $\Ss_{ii}=s_i$. Then $\DK=\U\D\U^\top$, where $\D$ is a diagonal matrix with entries 
$$\D_{ii}=d_i=\frac1{1+t\cdot s_i}-e^{t\cdot s_i}.$$
Since for any vector $\xv$ and any diagonal matrix $\D$,
\begin{equation}
\label{eq:move_middle}
\xv^\top\D\xv=\sum_i x_i^2\cdot d_i\leq\sum_ix_i^2\cdot\max_i d_i=\xv^\top\xv\cdot\max_i d_i,
\end{equation}
and since $\K^{-1}$ and $\DK$ are both symmetric,
\begin{equation*}
\begin{aligned}
&\|\alphahfv(t)-\alphahrv(1/t)\|_2^2=\yv^\top\K^{-1}\DK\DK\K^{-1}\yv=\yv^\top\K^{-1}\U\D^2\U^\top\K^{-1}\yv\\
&\leq\yv^\top\K^{-1}\underbrace{\U\U^\top}_{=\I}\K^{-1}\yv\cdot \max_i d_i^2=\|\K^{-1}\yv\|_2^2\cdot\max_i d_i^2,
\end{aligned}
\end{equation*}
and
\begin{equation*}
\begin{aligned}
\|\fhfv(\X,t)-\fhrv(\X,1/t)\|_2^2&=\yv^\top\DK\DK\yv=\yv^\top\U\D^2\U^\top\yv\\
&\leq\yv^\top\underbrace{\U\U^\top}_{=\I}\yv\cdot \max_i d_i^2=\|\yv\|_2^2\cdot\max_i d_i^2.
\end{aligned}
\end{equation*}

For out-of-sample predictions, calculations become different, since $\kv(\xsv)$ and $\K$ do not commute. Hence we need to take expectation over $\yv$. We first note that for $\yv=\K\alphanv+\epsv$, where $\alphanv\sim(\bm{0},\Sigmaalpha)$, and $\epsv\sim(\bm{0},\sigma_\varepsilon^2\I)$, and where $\K$ and $\Sigmaalpha$ commute,
$$\E_{\epsv,\alphanv}\left(\yv\yv^\top\right)=\K\E\left(\alphanv\alphanv^\top\right)\K+2\K\E(\alphanv)\E(\epsv)^\top+\E(\epsv\epsv^\top)=\Sigmaalpha\K^2+\sigma_\varepsilon^2\I.$$
Now,
\begin{equation*}
\begin{aligned}
&\E_{\epsv,\alphanv}\left(\left(\fhf(\xsv,t)-\fhr(\xsv,1/t)\right)^2\right)\\
&=\E_{\epsv,\alphanv}\left(\kv(\xsv)^\top\K^{-1}\DK\yv\yv^\top\DK\K^{-1}\kv(\xsv)\right)\\
&=\kv(\xsv)^\top\K^{-1}\DK\left(\Sigmaalpha\K^2+\sigma_\varepsilon^2\I\right)\DK\K^{-1}\kv(\xsv)\\
&=\kv(\xsv)^\top\K^{-1}\Sigmaalpha^{1/2}\K\U\D^2\U^\top\K\Sigmaalpha^{1/2}\K^{-1}\kv(\xsv)\\
&+ \sigma_\varepsilon^2\cdot\kv(\xsv)^\top\K^{-1}\U\D^2\U^\top\K^{-1}\kv(\xsv)\\
&\leq\left(\kv(\xsv)^\top\K^{-1}\Sigmaalpha^{1/2}\K\underbrace{\U\U^\top}_{=\I}\K\Sigmaalpha^{1/2}\K^{-1}\kv(\xsv)\right.\\
&\left.+ \sigma_\varepsilon^2\cdot\kv(\xsv)^\top\K^{-1}\underbrace{\U\U^\top}_{=\I}\K^{-1}\kv(\xsv)\right)\cdot \max_i d_i^2\\
&=\kv(\xsv)^\top\K^{-1}\left(\Sigmaalpha\K^2+\sigma_\varepsilon^2\I\right)\K^{-1}\kv(\xsv)\cdot \max_i d_i^2\\
&=\E_{\epsv,\alphanv}\left(\kv(\xsv)^\top\K^{-1}\yv\yv^\top\K^{-1}\kv(\xsv)\right)\cdot \max_i d_i^2\\
&=\E_{\epsv,\alphanv}\left((\kv(\xsv)^\top\K^{-1}\yv)^2\right)\cdot \max_i d_i^2
=\E_{\epsv,\alphanv}\left(\fhn(\xsv)^2\right)\cdot \max_i d_i^2,\\
\end{aligned}
\end{equation*}
where we have again used Equation \ref{eq:move_middle}, and the fact that $\K$, $\Sigmaalpha^{1/2}$ and $\DK$ commute.

If we repeat the calculations with $\kv(\xsv)$ replaced by $\bm{e_i}\in \R^n$, where $(\bm{e_i})_i=1$ and remaining elements equal 0, so that $\hat{\alpha}_i=\bm{e_i}^\top\alphahv$, we obtain
\begin{equation*}
\begin{aligned}
\E_{\epsv,\alphanv}\left(\left(\alphahfi(t)-\alphahri(1/t)\right)^2\right)x_i d_i^2
&=\E_{\epsv,\alphanv}\left((\bm{e_i}^\top\K^{-1}\yv)^2\right)\cdot \max_i d_i^2\\
&=\E_{\epsv,\alphanv}\left((\alphahni)^2\right)\cdot \max_i d_i^2.
\end{aligned}
\end{equation*}
What is left to do, is to calculate $\max_id_i^2$. Let $x=t\cdot s_i$. Then, for $x\geq0$ (which holds, since both $t$ and $s_i$ are non-negative), $d_i^2=\left(\frac1{1+x}-e^{-x}\right)^2$ is obviously larger than 0, and equals 0 for $x=0$ and $x=+\infty$. To find the maximum of the expression we differentiate and set the derivative to zero.

$$\frac{\partial}{\partial x}\left(\left(\frac1{1+x}-e^{-x}\right)^2\right)=2\left(e^{-x}-\frac1{(1+x)^2}\right)\left(\frac1{1+x}-e^{-x}\right).$$
For $i\in\{1,2\}$ we obtain
\begin{equation*}
\begin{aligned}
&e^{-x}-\frac1{(1+x)^i}=0 \iff -\frac{1+x}ie^{-\frac{1+x}i}=-\frac{e^{-\frac1i}}i\\
\iff& -\frac{1+x}i=W_k\left(-\frac{e^{-\frac1i}}i\right)
\iff x=-1-i\cdot W_k\left(-\frac{e^{-\frac1i}}i\right)
\end{aligned}
\end{equation*}
where $W_k$ is the $k$-th branch of the Lambert W function, for $k\in\{-1,0\}$. The only combination of $i$ and $k$ that yields an $x\neq0$ is $i=2$, $k=-1$, for which we obtain $x\approx 2.51287$ and $\left.\left(\frac1{1+x}-e^{-x}\right)^2\right|_{x=2.51287}\approx0.04166.$

\end{proof}

\begin{proof}[Proof of Proposition \ref{thm:kgf_krr_y}]~\\
In all four parts, we will use the inequality $e^x\geq 1+x$ and the equalities $\max_{\vv\neq \nv}\frac{\|\A\vv\|_2}{\|\vv\|_2}=\|\A\|_2=\smax(\A)$, where $\smax(\A)$ denotes the largest singular value. We will denote the largest and smallest singular values of $\K$ by $\smax=\smax(\K)$ and $\smin=\smin(\K).$

For part \emph{(a)}, we have
\begin{equation*}
\begin{aligned}
&\max_{\yv\neq \nv}\left(\frac{\left\|\fhfv(\X,t)-\yv\right\|_2}{\left\|\yv\right\|_2}\right)
=\max_{\yv\neq \nv}\left(\frac{\left\|\K\K^{-1}(\I-\exp(-t\K))\yv-\yv\right\|_2}{\left\|\yv\right\|_2}\right)\\
&=\max_{\yv\neq \nv}\left(\frac{\left\|\exp(-t\K))\yv\right\|_2}{\left\|\yv\right\|_2}\right)
=\left\|\exp(-t\K))\right\|_2=e^{-t\smin},
\end{aligned}
\end{equation*}
where $\smin$ denotes the smallest singular value of $\K$.

Similarly,
\begin{equation*}
\begin{aligned}
&\max_{\yv\neq \nv}\left(\frac{\left\|\fhrv(\X,1/t)-\yv\right\|_2}{\left\|\yv\right\|_2}\right)
=\max_{\yv\neq \nv}\left(\frac{\left\|\K\K^{-1}(\I-(\I+t\K)^{-1})\yv-\yv\right\|_2}{\left\|\yv\right\|_2}\right)\\
&=\max_{\yv\neq \nv}\left(\frac{\left\|(\I+t\K)^{-1}\yv\right\|_2}{\left\|\yv\right\|_2}\right)
=\left\|(\I+t\K)^{-1}\right\|_2=\frac1{1+t\smin},
\end{aligned}
\end{equation*}
where $e^{-t\smin}\leq \frac1{1+t\smin}$, since $e^x\geq 1+x$.

~\\
For part \emph{(b)},
\begin{equation*}
\begin{aligned}
&\max_{\yv\neq \nv}\left(\frac{\left\|\fhrv(\X,1/t)\right\|_2}{\left\|\yv\right\|_2}\right)
=\max_{\yv\neq \nv}\left(\frac{\left\|\K\K^{-1}(\I-(\I+t\K)^{-1})\yv\right\|_2}{\left\|\yv\right\|_2}\right)\\
&=\max_{\yv\neq \nv}\left(\frac{\left\|(\I-(\I+t\K)^{-1})\yv\right\|_2}{\left\|\yv\right\|_2}\right)
=\left\|\I-(\I+t\K)^{-1}\right\|_2=1-\frac1{1+t\smax}
\end{aligned}
\end{equation*}
and
\begin{equation*}
\begin{aligned}
&\max_{\yv\neq \nv}\left(\frac{\left\|\fhfv(\X,t)\right\|_2}{\left\|\yv\right\|_2}\right)
=\max_{\yv\neq \nv}\left(\frac{\left\|\K\K^{-1}(\I-\exp(-t\K))\yv\right\|_2}{\left\|\yv\right\|_2}\right)\\
&=\max_{\yv\neq \nv}\left(\frac{\left\|(\I-\exp(-t\K)))\yv\right\|_2}{\left\|\yv\right\|_2}\right)
=\left\|\I-\exp(-t\K))\right\|_2=1-e^{-t\smax},
\end{aligned}
\end{equation*}
where $1-\frac1{1+t\smax}\leq 1-e^{-t\smax}$, since $e^x\geq 1+x$.

~\\
For parts \emph{(c)} and \emph{(d)}, we first note that for $\yv=\K\alphanv+\epsv$, where $\alphanv\sim(\bm{0},\Sigmaalpha)$, and $\epsv\sim(\bm{0},\sigma_\varepsilon^2\I)$, and where $\K$ and $\Sigmaalpha$ commute,
$$\E_{\epsv,\alphanv}\left(\yv\yv^\top\right)=\K\E\left(\alphanv\alphanv^\top\right)\K+2\K\E(\alphanv)\E(\epsv)^\top+\E(\epsv\epsv^\top)=\Sigmaalpha\K^2+\sigma_\varepsilon^2\I,$$
and that this quantity commutes with other functions of $\K$.
Furthermore, let $\bm{e_i}\in \R^n$, denote the i-th base vector, where $(\bm{e_i})_i=1$ and remaining elements equal 0, so that $\bm{e_i}^\top\vv=v_i$.

For part \emph{(c)},
\begin{equation*}
\begin{aligned}
&\max_{\E_{\epsv,\alphanv}(y_i^2)\neq 0}\left(\frac{\E_{\epsv,\alphanv}\left(\left(\fhf(\xvi,t)-y_i\right)^2\right)}{\E_{\epsv,\alphanv}\left(y_i^2\right)}\right)\\
&=\max_{\E_{\epsv,\alphanv}(y_i^2)\neq 0}\left(\frac{\E_{\epsv,\alphanv}\left(\left(\bm{e_i}^\top\exp(-t\K)\yv\right)^2\right)}{\E_{\epsv,\alphanv}\left(\left(\bm{e_i}^\top\yv\right)^2\right)}\right)\\
&=\max_{\E_{\epsv,\alphanv}(y_i^2)\neq 0}\left(\frac{\E_{\epsv,\alphanv}\left(\bm{e_i}^\top\exp(-t\K)\yv\yv^\top\exp(-t\K)\bm{e_i}\right)}{\E_{\epsv,\alphanv}\left(\bm{e_i}^\top\yv\yv^\top\bm{e_i}\right)}\right)\\
&=\max_{\E_{\epsv,\alphanv}(y_i^2)\neq 0}\left(\frac{\bm{e_i}^\top\exp(-t\K)\left(\Sigmaalpha\K^2+\sigma_\varepsilon^2\I\right)\exp(-t\K)\bm{e_i}}{\bm{e_i}^\top\left(\Sigmaalpha\K^2+\sigma_\varepsilon^2\I\right)\bm{e_i}}\right)\\
&=\max_{\left\|\sqrt{\Sigmaalpha\K^2+\sigma_\varepsilon^2\I}\bm{e_i}\right\|_2\neq 0}\left(\frac{\left\|\exp(-t\K)\sqrt{\Sigmaalpha\K^2+\sigma_\varepsilon^2\I}\bm{e_i}\right\|_2}{\left\|\sqrt{\Sigmaalpha\K^2+\sigma_\varepsilon^2\I}\bm{e_i}\right\|_2}\right)^2
=e^{-2t\smin}
\end{aligned}
\end{equation*}
and
\begin{equation*}
\begin{aligned}
&\max_{\E_{\epsv,\alphanv}(y_i^2)\neq 0}\left(\frac{\E_{\epsv,\alphanv}\left(\left(\fhr(\xvi,1/t)-\yv\right)^2\right)}{\E_{\epsv,\alphanv}\left(y_i^2\right)}\right)\\
&=\max_{\E_{\epsv,\alphanv}(y_i^2)\neq 0}\left(\frac{\E_{\epsv,\alphanv}\left(\left(\bm{e_i}^\top(\I+t\K)^{-1}\yv\right)^2\right)}{\E_{\epsv,\alphanv}\left(\left(\bm{e_i}^\top\yv\right)^2\right)}\right)\\
&=\max_{\E_{\epsv,\alphanv}(y_i^2)\neq 0}\left(\frac{\E_{\epsv,\alphanv}\left(\bm{e_i}^\top(\I+t\K)^{-1}\yv\yv^\top(\I+t\K)^{-1}\bm{e_i}\right)}{\E_{\epsv,\alphanv}\left(\bm{e_i}^\top\yv\yv^\top\bm{e_i}\right)}\right)\\
&=\max_{\E_{\epsv,\alphanv}(y_i^2)\neq 0}\left(\frac{\bm{e_i}^\top(\I+t\K)^{-1}\left(\Sigmaalpha\K^2+\sigma_\varepsilon^2\I\right)(\I+t\K)^{-1}\bm{e_i}}{\bm{e_i}^\top\left(\Sigmaalpha\K^2+\sigma_\varepsilon^2\I\right)\bm{e_i}}\right)\\
&=\max_{\left\|\sqrt{\Sigmaalpha\K^2+\sigma_\varepsilon^2\I}\bm{e_i}\right\|_2\neq 0}\left(\frac{\left\|(\I+t\K)^{-1}\sqrt{\Sigmaalpha\K^2+\sigma_\varepsilon^2\I}\bm{e_i}\right\|_2}{\left\|\sqrt{\Sigmaalpha\K^2+\sigma_\varepsilon^2\I}\bm{e_i}\right\|_2}\right)^2
=\frac1{(1+t\smin)^2},
\end{aligned}
\end{equation*}
where $e^{-2t\smin}\leq \frac1{(1+t\smin)^2}$, since $e^x\geq 1+x$.

For part \emph{(d)},
\begin{equation*}
\begin{aligned}
&\max_{\E_{\epsv,\alphanv}(y_i^2)\neq 0}\left(\frac{\E_{\epsv,\alphanv}\left(\left(\fhr(\xvi,1/t)\right)^2\right)}{\E_{\epsv,\alphanv}\left(y_i^2\right)}\right)\\
&=\max_{\E_{\epsv,\alphanv}(y_i^2)\neq 0}\left(\frac{\E_{\epsv,\alphanv}\left(\left(\bm{e_i}^\top(\I-(\I+t\K)^{-1})\yv\right)^2\right)}{\E_{\epsv,\alphanv}\left(\left(\bm{e_i}^\top\yv\right)^2\right)}\right)\\
&=\max_{\E_{\epsv,\alphanv}(y_i^2)\neq 0}\left(\frac{\E_{\epsv,\alphanv}\left(\bm{e_i}^\top((\I-\I+t\K)^{-1})\yv\yv^\top(\I-(\I+t\K)^{-1})\bm{e_i}\right)}{\E_{\epsv,\alphanv}\left(\bm{e_i}^\top\yv\yv^\top\bm{e_i}\right)}\right)\\
&=\max_{\E_{\epsv,\alphanv}(y_i^2)\neq 0}\left(\frac{\bm{e_i}^\top(\I-(\I+t\K)^{-1})\left(\Sigmaalpha\K^2+\sigma_\varepsilon^2\I\right)(\I-(\I+t\K)^{-1})\bm{e_i}}{\bm{e_i}^\top\left(\Sigmaalpha\K^2+\sigma_\varepsilon^2\I\right)\bm{e_i}}\right)\\
&=\max_{\left\|\sqrt{\Sigmaalpha\K^2+\sigma_\varepsilon^2\I}\bm{e_i}\right\|_2\neq 0}\left(\frac{\left\|(\I-(\I+t\K)^{-1})\sqrt{\Sigmaalpha\K^2+\sigma_\varepsilon^2\I}\bm{e_i}\right\|_2}{\left\|\sqrt{\Sigmaalpha\K^2+\sigma_\varepsilon^2\I}\bm{e_i}\right\|_2}\right)^2\\
&=\left(1-\frac1{(1+t\smax)}\right)^2,
\end{aligned}
\end{equation*}
and
\begin{equation*}
\begin{aligned}
&\max_{\E_{\epsv,\alphanv}(y_i^2)\neq 0}\left(\frac{\E_{\epsv,\alphanv}\left(\left(\fhf(\xvi,t)-\yv\right)^2\right)}{\E_{\epsv,\alphanv}\left(y_i^2\right)}\right)\\
&=\max_{\E_{\epsv,\alphanv}(y_i^2)\neq 0}\left(\frac{\E_{\epsv,\alphanv}\left(\left(\bm{e_i}^\top(\I-\exp(-t\K))\yv\right)^2\right)}{\E_{\epsv,\alphanv}\left(\left(\bm{e_i}^\top\yv\right)^2\right)}\right)\\
&=\max_{\E_{\epsv,\alphanv}(y_i^2)\neq 0}\left(\frac{\E_{\epsv,\alphanv}\left(\bm{e_i}^\top(\I-\exp(-t\K))\yv\yv^\top(\I-\exp(-t\K))\bm{e_i}\right)}{\E_{\epsv,\alphanv}\left(\bm{e_i}^\top\yv\yv^\top\bm{e_i}\right)}\right)\\
&=\max_{\E_{\epsv,\alphanv}(y_i^2)\neq 0}\left(\frac{\bm{e_i}^\top(\I-\exp(-t\K))\left(\Sigmaalpha\K^2+\sigma_\varepsilon^2\I\right)(\I-\exp(-t\K))\bm{e_i}}{\bm{e_i}^\top\left(\Sigmaalpha\K^2+\sigma_\varepsilon^2\I\right)\bm{e_i}}\right)\\
&=\max_{\left\|\sqrt{\Sigmaalpha\K^2+\sigma_\varepsilon^2\I}\bm{e_i}\right\|_2\neq 0}\left(\frac{\left\|(\I-\exp(-t\K))\sqrt{\Sigmaalpha\K^2+\sigma_\varepsilon^2\I}\bm{e_i}\right\|_2}{\left\|\sqrt{\Sigmaalpha\K^2+\sigma_\varepsilon^2\I}\bm{e_i}\right\|_2}\right)^2\\
&=(1-e^{-t\smax})^2,
\end{aligned}
\end{equation*}
where $\left(1-\frac1{1+t\smax}\right)^2\leq \left(1-e^{-t\smax}\right)^2$, since $e^x\geq 1+x$.

\end{proof}

\begin{proof}[Proof of Proposition \ref{thm:kgf_krr_risk}]~\\
Unless otherwise stated, all expectations and covariances are with respect to the random variable $\epsv$. We let $\K=\U\Ss\U^\top$ denote the eigenvalue decomposition of $\K$.

Before starting the calculations, we show some intermediary results that will be needed:

\begin{equation*}
\begin{aligned}
&\E(\alphahfv(t))-\alphanv=\left(\I-\exp\left(-t\K\right)\right)\K^{-1}\E(\yv)-\alphanv\\
&=\left(\I-\exp\left(-t\K\right)\right)\alphanv-\alphanv=-\exp\left(-t\K\right)\alphanv.\\
&\Cov(\alphahfv(t))=\E\left(\alphahfv(t)\left(\alphahfv(t)\right)^\top\right)-\E(\alphahfv(t))\E(\alphahfv(t))^\top\\
&=\left(\I-\exp\left(-t\K\right)\right)\alphanv\alphanv^\top\left(\I-\exp\left(-t\K\right)\right) +\sigma_\varepsilon^2\K^{-2}\left(\I-\exp\left(-t\K\right)\right)^2\\
&-\left(\I-\exp\left(-t\K\right)\right)\alphanv\alphanv^\top\left(\I-\exp\left(-t\K\right)\right)\\
&=\sigma_\varepsilon^2\K^{-2}\left(\I-\exp\left(-t\K\right)\right)^2.\\
&\E(\alphahrv(1/t))-\alphanv\\
&=\left(\K+1/t\cdot\I\right)^{-1}\E(\yv)-\alphanv=\left(\K+1/t\cdot\I\right)^{-1}\K\alphanv-\alphanv\\
&=\left(\left(\K+1/t\cdot\I\right)^{-1}\K-\I\right)\alphanv=\left(\K+1/t\cdot\I\right)^{-1}\left(\K-\K-1/t\cdot\I\right)\alphanv\\
&=-1/t\cdot\left(\K+1/t\cdot\I\right)^{-1}\alphanv.\\
&\Cov(\alphahrv(1/t))=\E\left(\alphahrv(1/t)\left(\alphahrv(1/t)\right)^\top\right)\\
&-\E(\alphahrv(1/t))\E(\alphahrv(1/t))^\top\\
&=\left(\K+1/t\cdot\I\right)^{-1}\left(\K\alphanv\alphanv^\top\K+\sigma_\varepsilon^2\I\right)\left(\K+1/t\cdot\I\right)^{-1}\\
&-\left(\K+1/t\cdot\I\right)^{-1}\K\alphanv\alphanv^\top\K\left(\K+1/t\cdot\I\right)^{-1}\\
&=\sigma_\varepsilon^2\left(\K+1/t\cdot\I\right)^{-2}.
\end{aligned}
\end{equation*}

\begin{equation*}
\begin{aligned}
&\E\left(\fhf(\xsv,t)\right)-f_0(\xsv)=\kv(\xsv)^\top\left(\E(\alphahfv(t))-\alphanv\right)=\\
&-\kv(\xsv)^\top\exp\left(-t\K\right)\alphanv.\\
&\E\left(\fhr(\xsv,1/t)\right)-f_0(\xsv)=\kv(\xsv)^\top\left(\E(\alphahrv(1/t))-\alphanv\right)=\\
&-1/t\cdot\kv(\xsv)^\top\left(\K+1/t\cdot\I\right)^{-1}\alphanv.\\
&\Var(\fhf(\xsv,t))=\E\left(\fhf(\xsv,t)^2\right)-\E\left(\fhf(\xsv,t)\right)^2\\
&=\kv(\xsv)^\top\left(\I-\exp\left(-t\K\right)\right)\alphanv\alphanv^\top\left(\I-\exp\left(-t\K\right)\right)\kv(\xsv)\\
&+\sigma_\varepsilon^2\kv(\xsv)^\top\K^{-2}\left(\I-\exp\left(-t\K\right)\right)^2\kv(\xsv)\\
&-\kv(\xsv)^\top\left(\I-\exp\left(-t\K\right)\right)\alphanv\alphanv^\top\left(\I-\exp\left(-t\K\right)\right)\kv(\xsv)\\
&=\sigma_\varepsilon^2\kv(\xsv)^\top\K^{-2}\left(\I-\exp\left(-t\K\right)\right)^2\kv(\xsv).\\
&\Var(\fhr(\xsv,1/t))=\E\left(\fhr(\xsv,1/t)^2\right)-\E\left(\fhr(\xsv,1/t)\right)^2\\
&=\kv(\xsv)^\top\left(\K+1/t\cdot\I\right)^{-1}\left(\K\alphanv\alphanv^\top\K+\sigma_\varepsilon^2\I\right)\left(\K+1/t\cdot\I\right)^{-1}\kv(\xsv)\\
&-\kv(\xsv)^\top\left(\K+1/t\cdot\I\right)^{-1}\K\alphanv\alphanv^\top\K\left(\K+1/t\cdot\I\right)^{-1}\kv(\xsv)\\
&=\sigma_\varepsilon^2\kv(\xsv)^\top\left(\K+1/t\cdot\I\right)^{-2}\kv(\xsv).
\end{aligned}
\end{equation*}

According to the bias covariance decomposition, we can write the risk of an estimator $\bm{\hat{\theta}}$, estimating $\bm{\theta_0}$, as
\begin{equation*}
\label{eq:risk}
\text{Risk}(\thetahv;\thetanv)=\E\left(\|\thetahv-\thetanv\|_2^2\right)=\|\E(\thetahv)-\thetanv\|_2^2+\Tr(\Cov(\thetahv)).
\end{equation*}
Thus,
\begin{equation*}
\begin{aligned}
&\text{Risk}\left(\alphahfv(t);\alphanv\right)=\|\E(\alphahfv(t))-\alphanv\|_2^2+\Tr(\Cov(\alphahfv))\\
&=\|-\exp(-t\K)\alphanv\|_2^2+\Tr\left(\sigma_\varepsilon^2\K^{-2}\left(\I-\exp(-t\K)\right)^2\right)\\
&=\alphanv^\top\U e^{-t\Ss}\U^\top\U e^{-t\Ss}\U^\top\alphanv +\sigma_\varepsilon^2\Tr\left(\U\left(\frac{1-e^{-t\Ss}}{\Ss}\right)^2\U^\top\right)\\
&=\alphanv^\top\U e^{-2t\Ss}\U^\top\alphanv +\sigma_\varepsilon^2\Tr\left(\left(\frac{1-e^{-t\Ss}}{\Ss}\right)^2\right)\\
&=\sum_{i=1}^n((\U_{:,i})^\top\alphanv)^2 e^{-2ts_i}+\sigma_\varepsilon^2\sum_{i=1}^n\left(\frac{1-e^{-ts_i}}{s_i}\right)^2,
\end{aligned}
\end{equation*}
where in the third equality, we have used the cyclic property of the trace.

Equivalently, for the risk of $\alphahrv$, for $\lambda=1/t$,
\begin{equation*}
\begin{aligned}
&\text{Risk}\left(\alphahrv(1/t);\alphanv\right)=\|\E(\alphahrv(1/t))-\alphanv\|_2^2+\Tr(\Cov(\alphahrv))\\
&=\|-1/t\cdot\left(\K+1/t\cdot\I\right)^{-1}\alphanv\|_2^2+\Tr\left(\sigma_\varepsilon^2\left(\K+1/t\cdot\I\right)^{-2}\right)\\
&=\alphanv^\top\U \frac{1/t}{\Ss+1/t}\U^\top\U \frac{1/t}{\Ss+1/t}\U^\top\alphanv +\sigma_\varepsilon^2\Tr\left(\U\left(\frac1{\Ss+1/t}\right)^2\U^\top\right)\\
&=\alphanv^\top\U \frac1{(t\cdot\Ss+1)^2}\U^\top\alphanv +\sigma_\varepsilon^2\Tr\left(\frac{t^2}{(t\cdot\Ss+1)^2}\right)\\
&=\sum_{i=1}^n((\U_{:,i})^\top\alphanv)^2 \frac{1}{(ts_i+1)^2}+\sigma_\varepsilon^2\sum_{i=1}^n\frac{t^2}{(ts_i+1)^2}.
\end{aligned}
\end{equation*}
For $x\geq 0$, the following two inequalities are shown by \cite{ali2019continuous}: 
$$e^{-x}\leq1/(x+1)\text{ and }1-e^{-x}\leq1.2985\cdot x/(x+1).$$
Using these, we obtain
\begin{equation*}
\begin{aligned}
\text{Risk}\left(\alphahfv(t);\alphanv\right)=&\sum_{i=1}^n\left(((\U_{:,i})^\top\alphanv)^2 e^{-2ts_i}+\sigma_\varepsilon^2\left(\frac{1-e^{-ts_i}}{s_i}\right)^2\right)\\
\leq&\sum_{i=1}^n\left(((\U_{:,i})^\top\alphanv)^2 \frac1{(ts_i+1)^2}+\sigma_\varepsilon^2\left(1.2985\frac {ts_i}{ts_i+1}\right)^2\frac1{s_i^2}\right)\\
=&\sum_{i=1}^n\left(((\U_{:,i})^\top\alphanv)^2 \frac1{(ts_i+1)^2}+\sigma_\varepsilon^21.2985^2\frac {t^2}{(ts_i+1)^2}\right)\\
\leq&1.6862\sum_{i=1}^n\left(((\U_{:,i})^\top\alphanv)^2 \frac1{(ts_i+1)^2}+\sigma_\varepsilon^2\frac {t^2}{(ts_i+1)^2}\right)\\
=&1.6862\cdot\text{Risk}\left(\alphahrv(1/t);\alphanv\right),
\end{aligned}
\end{equation*}
which proves part \emph{(a)}.

The calculations for part \emph{(b)} are basically identical to those for part \emph{(a)}. We obtain
\begin{equation*}
\begin{aligned}
&\text{Risk}\left(\fhfv(\X,t);\fnv(\X)\right)=\sum_{i=1}^n((\U_{:,i})^\top\alphanv)^2 s_i^2e^{-2ts_i}+\sigma_\varepsilon^2\sum_{i=1}^n\left(1-e^{-ts_i}\right)^2,\\
&\text{Risk}\left(\fhrv(\X,1/t);\fnv(\X)\right)=\sum_{i=1}^n((\U_{:,i})^\top\alphanv)^2 \frac{s_i^2}{(ts_i+1)^2}+\sigma_\varepsilon^2\sum_{i=1}^n\frac{t^2s_i^2}{(ts_i+1)^2},
\end{aligned}
\end{equation*}
and thus
\begin{equation*}
\begin{aligned}
&\text{Risk}\left(\fhfv(\X,t);\fnv(\X)\right)=\sum_{i=1}^n\left(((\U_{:,i})^\top\alphanv)^2 s_i^2e^{-2ts_i}+\sigma_\varepsilon^2\left(1-e^{-ts_i}\right)^2\right)\\
\leq&\sum_{i=1}^n\left(((\U_{:,i})^\top\alphanv)^2 \frac{s_i^2}{(ts_i+1)^2}+\sigma_\varepsilon^2\left(1.2985\frac {ts_i}{ts_i+1}\right)^2\right)\\
\leq&1.2985^2\sum_{i=1}^n\left(((\U_{:,i})^\top\alphanv)^2 \frac{s_i^2}{(ts_i+1)^2}+\sigma_\varepsilon^2\frac {t^2s_i^2}{(ts_i+1)^2}\right)\\
=&1.6862\cdot\text{Risk}\left(\fhrv(\X,1/t);\fnv(\X)\right),
\end{aligned}
\end{equation*}
which proves part \emph{(b)}.

For the out-of-sample prediction risk, since $\kv(\xsv)$ and $\K$ do not commute, calculations become slightly different. We now obtain
\begin{equation*}
\begin{aligned}
&\text{Risk}\left(\fhf(\xsv,t);f_0(\xsv,t)\right)\\
=&\left(\E(\fhf(\xsv,t))-f_0(\xsv,t)\right)^2+\Var(\fhf(\xsv,t))\\
=&\left(-\kv(\xsv)^\top\exp(-t\K)\alphanv\right)^2+\sigma_\varepsilon^2\kv(\xsv)^\top\K^{-2}\left(\I-\exp(-t\K)\right)^2\kv(\xsv)\\
=&\alphanv^\top\exp(-t\K)\kv(\xsv)\kv(\xsv)^\top\exp(-t\K)\alphanv\\
&+\sigma_\varepsilon^2\Tr\left(\K^{-2}\left(\I-\exp(-t\K)\right)^2\kv(\xsv)\kv(\xsv)^\top\right)\\
=&\Tr\left(\exp(-t\K)\alphanv\alphanv^\top\exp(-t\K)\kv(\xsv)\kv(\xsv)^\top\right)\\
&+\sigma_\varepsilon^2\Tr\left(\K^{-2}\left(\I-\exp(-t\K)\right)^2\kv(\xsv)\kv(\xsv)^\top\right),\\
\end{aligned}
\end{equation*}
where we have used the fact that the trace of a scalar is the scalar itself, and the cyclic property of the trace.

Analogously, for the risk of $\fhr(\xsv)$,
\begin{equation*}
\begin{aligned}
&\text{Risk}\left(\fhr(\xsv,1/t);f_0(\xsv)\right)\\
=&\left(\E(\fhr(\xsv,1/t))-f_0(\xsv)\right)^2+\Var(\fhr(\xsv,1/t))\\
=&\left(-1/t\cdot\kv(\xsv)^\top\left(\K+1/t\cdot\I\right)^{-1}\alphanv\right)^2+\sigma_\varepsilon^2\kv(\xsv)^\top\left(\K+1/t\cdot\I\right)^{-2}\kv(\xsv)\\
=&1/t^2\cdot\alphanv^\top\left(\K+1/t\cdot\I\right)^{-1}\kv(\xsv)\kv(\xsv)^\top\left(\K+1/t\cdot\I\right)^{-1}\alphanv\\
&+\sigma_\varepsilon^2\Tr\left(\left(\K+1/t\cdot\I\right)^{-2}\kv(\xsv)\kv(\xsv)^\top\right)\\
=&\Tr\left(1/t^2\cdot\left(\K+1/t\cdot\I\right)^{-1}\alphanv\alphanv^\top\left(\K+1/t\cdot\I\right)^{-1}\kv(\xsv)\kv(\xsv)^\top\right)\\
&+\sigma_\varepsilon^2\Tr\left(\left(\K+1/t\cdot\I\right)^{-2}\kv(\xsv)\kv(\xsv)^\top\right).
\end{aligned}
\end{equation*}

Taking expectation over $\alphanv$, for $\alphanv\sim(\bm{0},\Sigmaalpha)$, we obtain
\begin{equation*}
\begin{aligned}
\E_{\alphanv}&\left(\text{Risk}\left(\fhf(\xsv,t);f_0(\xsv)\right)\right)\\
&=\Tr\left(\left(\Sigmaalpha\exp(-t\K)^2+\sigma_\varepsilon^2\K^{-2}\left(\I-\exp(-t\K)\right)^2\right)\kv(\xsv)\kv(\xsv)^\top\right),\\
\E_{\alphanv}&\left(\text{Risk}\left(\fhr(\xsv,1/t);f_0(\xsv)\right)\right)\\
&=\Tr\left(\left(\Sigmaalpha1/t^2\cdot\left(\K+1/t\cdot\I\right)^{-2}+\sigma_\varepsilon^2\left(\K+1/t\cdot\I\right)^{-2}\right)\kv(\xsv)\kv(\xsv)^\top\right).
\end{aligned}
\end{equation*}

Finally comparing the risks, we obtain
\begin{equation*}
\begin{aligned}
\E_{\alphanv}&\left(\text{Risk}(\fhf(\xsv,t);f_0(\xsv))\right)\\
=&\Tr\left(\left(\Sigmaalpha\exp(-t\K)^2+\sigma_\varepsilon^2t^2(t\K)^{-2}\left(\I-\exp(-t\K)\right)^2\right)\kv(\xsv)\kv(\xsv)^\top\right)\\
\leq& \Tr\left(\left(\Sigmaalpha\left(\I+t\K\right)^{-2}+\sigma_\varepsilon^2t^2\left(1.2985 \left(\I+t\K\right)^{-1}\right)^2\right)\kv(\xsv)\kv(\xsv)^\top\right)\\
=& \Tr\left(\left(\Sigmaalpha\cdot1/t^2\cdot\left(1/t\cdot\I+\K\right)^{-2}\right.\right.\\
&\left.\left.+\sigma_\varepsilon^2 \cdot 1.2985^2 \left(1/t\cdot\I+\K\right)^{-2}\right)\kv(\xsv)\kv(\xsv)^\top\right)\\
\leq& 1.6862\cdot\Tr\left(\left(\Sigmaalpha\cdot1/t^2\cdot\left(1/t\cdot\I+\K\right)^{-2}\right.\right.\\
&\left.\left.+\sigma_\varepsilon^2 \left(1/t\cdot\I+\K\right)^{-2}\right)\kv(\xsv)\kv(\xsv)^\top\right)\\
=&1.6862\cdot\E_{\alphanv}\left(\text{Risk}(\fhr(\xsv,1/t);f_0(\xsv))\right),\\
\end{aligned}
\end{equation*}
which proves part \emph{(c)}.

\end{proof}

\begin{proof}[Proof of Proposition \ref{thm:sgn_inf}]~\\
The proofs of the two parts are very similar, differing only in the details.

For part \emph{(a)}, when $\X^\top\X$ is a diagonal matrix with elements $\{s_{ii}\}_{i=1}^p$,
$$\left\|\yv-\X\betav\right\|_2^2=\yv^\top\yv-2\yv^\top\X\betav+\betav^\top\X^\top\X\betav=\sum_{i=1}^p\left(y_i^2-2(\X^\top\yv)_i\beta_i+s_{ii}\beta_i^2\right).$$
The constraint $\|\betav\|_\infty=\max_i|\beta_i| \leq c$ is equivalent to $|\beta_i| \leq c$ for $i=1,2,\dots p$.
Thus, $$\left\|\yv-\X\betav\right\|_2^2 \text{ s.t. } \|\betav\|_\infty\leq c \iff\sum_{i=1}^p\left(y_i^2-2(\X^\top\yv)_i\beta_i+s_{ii}\beta_i^2\right) \text{ s.t. } |\beta_i|\leq c\ \forall i,$$
which decomposes element-wise. In the absence of the constraint,\\ $\hat{\beta}_i=\left((\X^\top\X)^{-1}\X^\top\yv\right)_i=\left(\X^\top\yv\right)_i/s_{ii}$.

Assume $\left(\X^\top\yv\right)_i/s_{ii}\geq 0$. Then, the optimal value for $\beta_i$ is $\left(\X^\top\yv\right)_i/s_{ii}$, unless $\left(\X^\top\yv\right)_i/s_{ii}>c$, then, due to convexity, the optimal value is $c$, i.e.\ $\hat{\beta}_i=\min\left(\left(\X^\top\yv\right)_i/s_{ii},c\right)$. Accounting also for the case of $\left(\X^\top\yv\right)_i/s_{ii}< 0$, we obtain
$$\hat{\beta}_i(c)=\sgn\left(\left(\X^\top\yv\right)_i/s_{ii}\right)\cdot\min\left(\left|\left(\X^\top\yv\right)_i/s_{ii}\right|,c\right).$$

The sign gradient flow solution is calculated as follows:
\begin{equation*}
\begin{aligned}
&\sgn\left(-\frac{\partial}{\partial\betahv(t)}\left(\left\|\yv-\X\betahv(t)\right\|_2^2\right)\right)=\sgn\left(\X^\top\yv-\X^\top\X\betahv(t)\right)\\
&=\sgn\left(\begin{bmatrix}\vrule\\(\X^\top\yv)_i-s_{ii}\betah_i(t)\\ \vrule\end{bmatrix}\right)
=\begin{bmatrix}\vrule\\\sgn\left((\X^\top\yv)_i-s_{ii}\betah_i(t)\right)\\ \vrule\end{bmatrix}.
\end{aligned}
\end{equation*}
Since element $i$ in the vector only depends on $\betahv$ through $\betah_i$, 
\begin{equation*}
\begin{aligned}
\frac{\partial\betahv(t)}{\partial t}&=\sgn\left(-\frac{\partial}{\partial\betahv(t)}\left(\left\|\yv-\X\betahv(t)\right\|_2^2\right)\right)\\
\iff\frac{\partial\betah_i(t)}{\partial t}&= \sgn((\X^\top \yv)_i-s_{ii}\betah_i(t))\\
&=
\begin{cases}
1&\text{if }(\X^\top\yv)_i>s_{ii}\betah(t)\\
-1&\text{if }(\X^\top\yv)_i<s_{ii}\betah(t)\\
0&\text{if }(\X^\top\yv)_i=s_{ii}\betah(t)\\
\end{cases},
\ i=1,2,\dots p.
\end{aligned}
\end{equation*}
Thus, with $\betah_i(0)=0$, $\betah_i(t)=\pm t$, depending on the sign of $(\X^\top\yv)_i/s_{ii}$. Once $\betah_i=(\X^\top\yv)_i/s_{ii}$ (and $\sgn(s_{ii}\betah_i-(\X^\top\yv)_i)=0$), then $\betah_i=(\X^\top\yv)_i/s_{ii}$. That is,
$$\betah_i(t)=\sgn\left((\X^\top\yv)_i/s_{ii}\right)\cdot\min\left(t,\left|(\X^\top\yv)_i/s_{ii}\right|\right).$$

For part \emph{(b)}, when $\K$ is a diagonal matrix with elements $\{k_{ii}\}_{i=1}^n$,
$$\left\|\yv-\K\alphav\right\|_{\K^{-1}}^2=\left\|\left(\sqrt{\K}\right)^{-1}\yv-\sqrt{\K}\alphav\right\|_2^2=\sum_{i=1}^n\left(\frac{y_i}{\sqrt{k_{ii}}}-\sqrt{k_{ii}}\alpha_i\right)^2.$$
The constraint $\|\alphav\|_\infty=\max_i|\alpha_i| \leq c$ is equivalent to $|\alpha_i| \leq c$, for $i=1,2,\dots n$.
Thus, $$\left\|\yv-\K\alphav\right\|_{\K^{-1}}^2 \text{ s.t. } \|\alphav\|_\infty\leq c \iff\sum_{i=1}^n\left(\frac{y_i}{\sqrt{k_{ii}}}-\sqrt{k_{ii}}\alpha_i\right)^2 \text{ s.t. } |\alpha_i|\leq c\ \forall i,$$
which decomposes element-wise. Assume $y_i/k_{ii}\geq 0$. Then, the optimal value for $\alpha_i$ is $y_i/k_{ii}$, unless $y_i/k_{ii}>c$, then, due to convexity, the optimal value is $c$, i.e.\ $\hat{\alpha}_i=\min(y_i/k_{ii},c)$. Accounting also for the case of $y_i/k_{ii}< 0$, we obtain
$$\hat{\alpha}_i(c)=\sgn(y_i/k_{ii})\cdot\min(|y_i/k_{ii}|,c).$$

The sign gradient flow solution is calculated as follows:
\begin{equation*}
\begin{aligned}
&\sgn\left(-\frac{\partial}{\partial\alphahv(t)}\left(\left\|\yv-\K\alphahv(t)\right\|_{\K^{-1}}^2\right)\right)=\sgn\left(\yv-\K\alphahv(t)\right)\\
&=\sgn\left(\begin{bmatrix}\vrule\\y_i-k_{ii}\alphah_i(t)\\ \vrule\end{bmatrix}\right)=\begin{bmatrix}\vrule\\\sgn(y_i-k_{ii}\alphah_i(t))\\ \vrule\end{bmatrix}.
\end{aligned}
\end{equation*}
Since element $i$ in the vector only depends on $\alphahv$ through $\alphah_i$, 
\begin{equation*}
\begin{aligned}
&\frac{\partial\alphahv(t)}{\partial t}=\sgn\left(-\frac{\partial}{\partial\alphahv(t)}\left(\left\|\yv-\K\alphahv(t)\right\|_{\K^{-1}}^2\right)\right)\\
\iff&\frac{\partial\alphah_i(t)}{\partial t}= \sgn(y_i-k_{ii}\alphah_i(t))
=\begin{cases}
1&\text{if }y_i>k_{ii}\alphah_i(t)\\
-1&\text{if }y_i<k_{ii}\alphah_i(t)\\
0&\text{if }y_i=k_{ii}\alphah_i(t)\\
\end{cases},
\ i=1,2,\dots n.
\end{aligned}
\end{equation*}
Thus, with $\alphah_i(0)=0$, $\alphah_i(t)=\pm t$, depending on the sign of $y_i/k_{ii}$. Once $\alphah_i=y_i/k_{ii}$ (and $\sgn(y_i-k_{ii}\alphah_i)=0$), then $\alphah_i=y_i/k_{ii}$. That is,
$$\alphah_i(t)=\sgn(y_i/k_{ii})\cdot\min(t,|y_i/k_{ii}|).$$
\end{proof}

\begin{proof}[Proof of Proposition \ref{thm:other_objs}]~\\
We first calculate the gradients of $\|\yv-\Ph\betav\|_2^2$, $\|\yv-\Ph\betav\|_1$ and $\|\yv-\Ph\betav\|_\infty$, respectively:
$$\frac{\partial}{\partial \betav}\left(\left\|\yv-\Ph\betav\right\|_2^2\right)=\frac{\partial}{\partial \betav}\left((\yv-\Ph\betav)^\top(\yv-\Ph\betav)\right)=-\Ph^\top(\yv-\Ph\betav).$$
Since $\|\vv\|_1=\sgn(\vv)^\top\vv$,
$$\frac{\partial}{\partial \betav}\left(\left\|\yv-\Ph\betav\right\|_1\right)=\frac{\partial}{\partial \betav}\left(\sgn(\yv-\Ph\betav)^\top(\yv-\Ph\betav)\right)=-\Ph^\top\sgn(\yv-\Ph\betav).$$
Let $\mathbb{I}_\infty(\vv)$ be a vector denoting the sign of the largest (absolute) value in $\vv$, such that
$$\mathbb{I}_\infty(\vv)_d=
\begin{cases}
\sgn(v_d)&\text{ if } d=\argmax_{d'}|v_{d'}|\\
0&\text{ else.}
\end{cases}$$
Then $\|\vv\|_\infty=\mathbb{I}_\infty(\vv)^\top\vv$, and 
$$\frac{\partial}{\partial \betav}\left(\left\|\yv-\Ph\betav\right\|_\infty\right)=\frac{\partial}{\partial \betav}\left(\mathbb{I}_\infty(\yv-\Ph\betav)^\top(\yv-\Ph\betav)\right)=-\Ph^\top\mathbb{I}_\infty(\yv-\Ph\betav).$$

The three update rules for gradient descent in $\betav$ are thus

\begin{equation}
\label{eq:eq_gd_beta}
\begin{aligned}
\betahv_{k+1}&=\betahv_k+\eta\cdot\Ph^\top\left(\yv-\Ph\betahv_k\right), && \text{ for } \|\yv-\Ph\betav\|_2^2\\
\betahv_{k+1}&=\betahv_k+\eta\cdot\Ph^\top\sgn\left(\yv-\Ph\betahv_k\right), && \text{ for } \|\yv-\Ph\betav\|_1\\
\betahv_{k+1}&=\betahv_k+\eta\cdot\Ph^\top\mathbb{I}_\infty\left(\yv-\Ph\betahv_k\right), && \text{ for } \|\yv-\Ph\betav\|_\infty.
\end{aligned}
\end{equation}

Since the gradient of $\|\yv-\K\alphav\|_{\K^{-1}}^2$ is $-(\yv-\K\alphav)$, the update rules for gradient descent, sign gradient descent and coordinate descent are, respectively
\begin{equation}
\label{eq:eq_gd_alpha}
\begin{aligned}
\alphahv_{k+1}&=\alphahv_k+\eta\cdot\left(\yv-\K\alphahv_k\right)\\
\alphahv_{k+1}&=\alphahv_k+\eta\cdot\sgn\left(\yv-\K\alphahv_k\right)\\
\alphahv_{k+1}&=\alphahv_k+\eta\cdot\mathbb{I}_\infty\left(\yv-\K\alphahv_k\right).\\
\end{aligned}
\end{equation}
By multiplying each term in Equation \ref{eq:eq_gd_alpha} by $\Ph^\top$, and using $\betav=\Ph^\top\alphav$ and $\K=\Ph\Ph^\top$, and thus $\K\alphav=\Ph\betav$, we obtain Equation \ref{eq:eq_gd_beta}.
\end{proof}

\begin{proof}[Proof of Proposition \ref{thm:conv}]~\\
For KGD, let $L_2(\alphahv):=\|\yv-\K\alphahv\|_2^2$. Then $L_2'(\alphahv)=-\K(\yv-\K\alphav)$, $L_2''(\alphahv)=\K^2$ and higher order derivatives are zero. With $\dalphahv=\eta\cdot(\yv-\K\alphahv)$, according to Taylor's theorem
\begin{equation*}
\begin{aligned}
&L_2(\alphahv_{k+1})-L_2(\alphahv_k)
=L_2(\alphahv_k+\dalphahv_k)-L_2(\alphahv_k)\\
&=L_2(\alphahv_k)+L_2'(\alphahv_k)^\top\dalphahv+\frac12\dalphahv^\top L_2''(\alphahv_k)\dalphahv-L_2(\alphahv_k)\\
&=-\eta\cdot(\yv-\K\alphav)^\top\K(\yv-\K\alphahv)+\frac{\eta^2}2(\yv-\K\alphahv_k)^\top\K^2(\yv-\K\alphahv)\\
&\leq-\eta\cdot(\yv-\K\alphav)^\top\K(\yv-\K\alphahv)+\frac{\eta^2}2\smax(\K)(\yv-\K\alphahv_k)^\top\K(\yv-\K\alphahv).
\end{aligned}
\end{equation*}
Factoring out $\eta\cdot(\yv-\K\alphav)^\top\K(\yv-\K\alphahv)$, and using that $(\yv-\K\alphav)^\top\K(\yv-\K\alphahv)\geq0$, since $\K$ is positive semi-definite, we see that
$L_2(\alphahv_{k+1})-L_2(\alphahv_k)\leq 0$ if $\eta\leq\frac2{\smax(\K)}$.

For KSGD, $L_1(\alphahv):=\|\yv-\K\alphahv\|_1=\sgn(\yv-\K\alphahv)^\top(\yv-\K\alphahv)$. Then $L_1'(\alphahv)=-\K\cdot\sgn(\yv-\K\alphav)$ and higher order derivatives are zero. With $\dalphahv=\eta\cdot\sgn(\yv-\K\alphahv)$, according to Taylor's theorem, for $\eta\leq \min_{1\leq i\leq n}\left(\left|\yv-\K\alphahv_k\right|_i\right)$
\begin{equation*}
\begin{aligned}
L_1(\alphahv_{k+1})-L_1(\alphahv_k)
&=L_1(\alphahv_k+\dalphahv_k)-L_1(\alphahv_k)
=L_1(\alphahv_k)+L'(\alphahv_k)^\top\dalphahv\\
&=-\eta\cdot\sgn(\yv-\K\alphav)^\top\K\cdot\sgn(\yv-\K\alphahv)\leq0.\\
\end{aligned}
\end{equation*}

For KCD, let $\mathbb{I}_\infty(\vv)$ be a vector denoting the sign of the largest (absolute) value in $\vv$, such that
$$\mathbb{I}_\infty(\vv)_d=
\begin{cases}
\sgn(v_d)&\text{ if } d=\argmax_{d'}|v_{d'}|\\
0&\text{ else,}
\end{cases}$$
so that $\|\vv\|_\infty=\mathbb{I}_\infty(\vv)^\top\vv$. Furthermore, let $L_\infty(\alphahv):=\|\yv-\K\alphahv\|_\infty=\mathbb{I}_\infty(\yv-\K\alphahv)^\top(\yv-\K\alphahv)$. Then $L_\infty'(\alphahv)=-\K\cdot\mathbb{I}_\infty(\yv-\K\alphav)$ and higher order derivatives are zero. With $\dalphahv=\eta\cdot\mathbb{I}_\infty(\yv-\K\alphahv)$, according to Taylor's theorem, for $\eta\leq \min_{1\leq i\leq n}\left(\left|\yv-\K\alphahv_k\right|_i\right)$
\begin{equation*}
\begin{aligned}
L_\infty(\alphahv_{k+1})-L_\infty(\alphahv_k)
&=L_\infty(\alphahv_k+\dalphahv_k)-L_\infty(\alphahv_k)
=L_\infty(\alphahv_k)+L'(\alphahv_k)^\top\dalphahv\\
&=-\eta\cdot\mathbb{I}_\infty(\yv-\K\alphav)^\top\K\cdot\mathbb{I}_\infty(\yv-\K\alphahv)\leq0.\\
\end{aligned}
\end{equation*}
\end{proof}

\begin{proof}[Proof of Proposition \ref{thm:equ_f}]~\\
Let $\fpv:=[\fhv^\top,\fhsv{^\top}]^\top$, $\ypv:=[\yv^\top,\ytv^\top]^\top$, and let
$$\Ih:=\begin{bmatrix}\I_{n\times n} & \nv_{n\times \ns}\end{bmatrix}\in \R^{n\times (n+\ns)}$$
denote the training data selection matrix, so that $\fv=\Ih\fpv$ and $\K=\Ih\cdot[\K^\top,\Ks{^\top}]^\top=\Ih\Kss\Ih^\top$.

Differentiating Equation \ref{eq:equ_f1} with respect $\fpv$ and setting the gradient to $\nv$, we obtain
\begin{equation*}
\begin{aligned}
\label{eq:fh_temp}
\nv&=\frac{\partial}{\partial \fhpv}\left(\frac12\left\|\yv-\Ih\fpv\right\|_2^2+\frac\lambda2\|\fpv\|^2_{(\Kss)^{-1}}\right)\\
&=\Ih^\top\left(\Ih\fhpv-\yv\right)+\lambda(\Kss)^{-1}\fhpv\\
\iff\fhpv&=\left(\Ih^\top\Ih+\lambda(\Kss)^{-1}\right)^{-1}\Ih^\top\yv.
\end{aligned}
\end{equation*}
According to the matrix inversion lemma,
$$(\D-\C\A^{-1}\B)^{-1}\C\A^{-1}=\D^{-1}\C(\A-\B\D^{-1}\C)^{-1}.$$
For $\A=\I$, $\B=\Ih$, $\C=\Ih^\top$ and $\D=\lambda(\Kss)^{-1}$, we obtain
\begin{equation*}
\begin{aligned}
\left(\lambda(\Kss)^{-1}+\Ih^\top\Ih\right)^{-1}\Ih^\top&=1/\lambda\underbrace{\Kss\Ih^\top}_{=[\K^\top,\Ks{^\top}]^\top}\left(\I+1/\lambda\underbrace{\Ih\Kss\Ih^\top}_{=\K}\right)^{-1}\\
&=\begin{bmatrix}\K\\\Ks\end{bmatrix}(\lambda\I+\K)^{-1},
\end{aligned}
\end{equation*}
and thus
$$\fhpv=\begin{bmatrix}\K\\\Ks\end{bmatrix}\left(\K+\lambda\I\right)^{-1}\yv.$$

Before differentiating Equation \ref{eq:equ_f2} with respect to $\fpv$, we first note that according to the definition of $\ytv$,
$$\ypv-\fpv=\begin{bmatrix}\yv-\Ih\fpv\\\nv\end{bmatrix}=\begin{bmatrix}\I_{n\times n} \\ \nv_{\ns\times n} \end{bmatrix}(\yv-\Ih\fpv)=\Ih^\top\yv-\Ih^\top\Ih\fpv.$$
Now, differentiating Equation \ref{eq:equ_f2} with respect $\fpv$ and setting the gradient to $\nv$ we obtain
\begin{equation*}
\begin{aligned}
\nv&=\frac{\partial}{\partial \fhpv}\left(\frac12\left\|\ypv-\fhpv\right\|_{\Kss}^2+\frac\lambda2\|\fhpv\|^2_2\right)\\
&=-\Kss\ypv+\Kss\fhpv+\lambda\fhpv\\
&=-\Kss(\ypv-\fhpv)+\lambda\fhpv
=-\Kss\left(\Ih^\top\yv-\Ih^\top\Ih\fhpv\right)+\lambda\fhpv\\
&=-\Kss\Ih^\top\yv+(\Kss\Ih^\top\Ih+\lambda\I)\fhpv\\
\iff\fhpv&=\left(\begin{bmatrix}\K\\\Ks\end{bmatrix}\Ih+\lambda\I\right)^{-1}\begin{bmatrix}\K\\\Ks\end{bmatrix}\yv,
\end{aligned}
\end{equation*}

Using the matrix inversion lemma,
$$(\D-\C\A^{-1}\B)^{-1}\C\A^{-1}=\D^{-1}\C(\A-\B\D^{-1}\C)^{-1},$$
with $\A=\I$, $\B=\Ih$, $\C=[\K^\top,\Ks{^\top}]^\top$ and $\D=\lambda\I$, we obtain
\begin{equation*}
\begin{aligned}
\left(\lambda\I+\begin{bmatrix}\K\\\Ks\end{bmatrix}\Ih\right)^{-1}\begin{bmatrix}\K\\\Ks\end{bmatrix}&=1/\lambda\begin{bmatrix}\K\\\Ks\end{bmatrix}\left(\I+1/\lambda\underbrace{\Ih\begin{bmatrix}\K\\\Ks\end{bmatrix}}_{=\K}\right)^{-1}\\
&=\begin{bmatrix}\K\\\Ks\end{bmatrix}(\lambda\I+\K)^{-1},
\end{aligned}
\end{equation*}
and thus
$$\fhpv=\begin{bmatrix}\K\\\Ks\end{bmatrix}\left(\K+\lambda\I\right)^{-1}\yv.$$
\end{proof}

\begin{proof}[Proof of Equation \ref{eq:kgf_s_f}]~\\
With $\etahv(t)=\fhv(t)-\yv$ we obtain $\frac{d\etahv(t)}{dt}=\frac{d\fhv(t)}{dt}$ and the first part of Equation \ref{eq:kgd_diff_eq_f} can be written as
$$\frac{d\etahv(t)}{dt}=-\K\etahv(t)\iff \etahv(t)=\exp\left(-t\K\right)\etahnv.$$
Now,
$$\fhv(0)=\bm{0}\implies\etahnv=\etahv(0)=-\yv\implies \etahv(t)=-\exp(-t\K)\yv$$
Solving for $\fhv(t)$, we obtain
$$\fhv(t)=\left(\I-\exp(-t\K)\right)\yv.$$

Finally, 
\begin{equation*}
\begin{aligned}
\frac{d\fhsv(t)}{dt}&=\Ks\left(\yv-\fhv(t)\right)=\Ks\exp(-t\K)\yv\\
\implies \fhsv(t)&=\bm{c}-\Ks\K^{-1}\exp(-t\K)\yv.
\end{aligned}
\end{equation*}
Now,
$$\fhsv(0)=\bm{0}\implies\bm{c}=\Ks\K^{-1}\yv\implies \fhsv(t)=\Ks\K^{-1}(\I-\exp(-t\K))\yv.$$
\end{proof}

\begin{proof}[Proof of Equation \ref{eq:kgf_s_f} with momentum and Nesterov accelerated gradient]~\\
Analogous to the case of $\alphahv$ in the proof of the Remark \ref{rm:nest}, for momentum and Nesterov accelerated gradient, Equation \ref{eq:kgd_diff_eq_f} generalizes into
$$(1-\gamma)\cdot\begin{bmatrix}\frac{d\fhv(t)}{dt}\\\frac{d\fhv(t)}{dt}\end{bmatrix}=\begin{bmatrix}\K\\\Ks\end{bmatrix}\left(\yv-\fhv(t)\right).$$
Solving the differential equations in the same way as for Equation \ref{eq:kgf_s_f}, we obtain
$$\begin{bmatrix}\fhv(t)\\\fhsv(t)\end{bmatrix}=\begin{bmatrix}\I\\\Ks\K^{-1}\end{bmatrix}\left(\I-\exp\left(-\frac t{1-\gamma}\K\right)\right)\yv.$$
\end{proof}

\begin{proof}[Proof of Proposition \ref{thm:sgn_inf_f}]~\\
When $\Kss$ is a diagonal matrix with elements $\{k_{ii}\}_{i=1}^{n+\ns}$, $k_{ii}>0$,
$$\left\|\ypv-\fpv\right\|_{\Kss}^2=\sum_{i=1}^{n+\ns}\left(k_{ii}(\yp_i-\fp_i)\right)^2.$$
The constraint $\|\fpv\|_\infty=\max_i|\fp_i| \leq c$ is equivalent to $|\fp_i| \leq c\text{ for } i=1,2,\dots n+\ns$.\\
Thus, $$\left\|\ypv-\fpv\right\|_{\Kss}^2 \text{ s.t. } \|\fpv\|_\infty\leq c \iff\sum_{i=1}^{n+\ns}\left(k_{ii}(\yp_i-\fp_i)\right)^2 \text{ s.t. } |\fp_i|\leq c\ \forall i,$$
which decomposes element-wise to
\begin{equation}
\label{eq:fs_opt1}
\fhp_i=\argmin_{\fp_i\in \R}\left(k_{ii}(\yp_i-\fp_i)\right)^2 \text{ s.t. } |\fp_i|\leq c.
\end{equation}
We solve Equation \ref{eq:fs_opt1} separately for the two cases $i\leq n$ and $i>n$. For $i>n$, we have defined $\yp_i$ to be a copy of $\fp_i$, which means that the reconstruction error is always 0 and Equation \ref{eq:fs_opt1} reduces to
\begin{equation}
\label{eq:fs_opt2}
\begin{aligned}
\fhp_i&=\argmin_{\fp_i\in \R}\left(k_{ii}(\yp_i-\fp_i)\right)^2 \text{ s.t. } |\fp_i|\leq c\\
&=\argmin_{\fp_i\in \R}\left(k_{ii}(\yp_i-\fp_i)\right)^2 +\lambda_i|\fp_i|
=\lambda_i|\fp_i|,
\end{aligned}
\end{equation}
for $\lambda_i=\max(0,k_{ii}(\yp_i-c))$, where the first equality is due to Lagrangian duality, and the second is due to the reconstruction error being 0.

First assume $\lambda_i=0$, in which case Equation \ref{eq:fs_opt2} reduces to the ill-posed problem $\fh_i=\argmin_{\fp_i\in \R} 0$. However, in this case, we can use Equation \ref{eq:krr_s_fs} with $\lambda=0$. Since a diagonal $\Kss$ implies $\Ks=\nv$, we obtain, for $i>n$, $\fhp_i=0$. If, on the other hand, $\lambda_i>0$, then $\fhp_i=\argmin_{\fp_i\in \R} \lambda_i|\fp_i|=0$. Thus, for $i>n$, regardless of $\lambda_i$, $\fhp_i=0$.

For $i\leq n$, we have $\yp_i=y_i$. Assume $y_i\geq 0$. Then, the optimal value for $\fp_i$ is $y_i$, unless $y_i>c$, then, due to convexity, the optimal value is $c$, i.e.\ $\fhp_i=\min(y_i,c)$. Accounting also for the case of $y_i< 0$, we obtain
$$\fhp_i(c)=
\begin{cases}
\sgn(y_i)\cdot\min(|y_i|,c) &\text{if }i\leq n\\
0 &\text{if }i> n.\\
\end{cases}
$$

The sign gradient flow solution is calculated as follows:
\begin{equation*}
\begin{aligned}
&\sgn\left(-\frac{\partial}{\partial\fhpv(t)}\left(\left\|\ypv-\fhpv(t)\right\|_{\Kss}^2\right)\right)=\sgn\left(\Kss(\ypv-\fhpv(t))\right)\\
&=\sgn\left(\begin{bmatrix}\vrule\\k_{ii}(\yp_i-\fhp_i(t))\\ \vrule\end{bmatrix}\right)
=\sgn\left(\begin{bmatrix}[k_{ii}(y_i-\fh_i(t))]_{i=1}^n \\\nv\end{bmatrix}\right)\\
&=\begin{bmatrix}[\sgn(k_{ii}(y_i-\fh_i(t)))]_{i=1}^n\\\nv\end{bmatrix},
\end{aligned}
\end{equation*}
where $[\sgn(k_{ii}(y_i-\fh_i(t)))]_{i=1}^n\in \R^n$ and $\nv\in \R^{\ns}$.
Since element $i$ in the vector depends on $\fhpv$ only through $\fhp_i$, and since $k_{ii}>0$,
\begin{equation*}
\begin{aligned}
\frac{\partial\fhpv(t)}{\partial t}&=\sgn\left(-\frac{\partial}{\partial\fhpv(t)}\left(\left\|\ypv-\fhpv(t)\right\|_{\Kss}^2\right)\right)\\
\iff\frac{\partial \fhp_i(t)}{\partial t}
&= \begin{cases}
\sgn(k_{ii}(y_i-\fh_i(t))) &\text{if }i\leq n\\
0 &\text{if }i> n
\end{cases}\\
&= \begin{cases}
1&\text{if }i\leq n \text{ and } y_i>\fhp_i(t)\\
-1&\text{if }i\leq n \text{ and } y_i<\fhp_i(t)\\
0&\text{if }i>n \text{ or } y_i=\fhp_i(t).
\end{cases}
\end{aligned}
\end{equation*}
Thus, since $\fhp_i(0)=0$, $\fhp_i(t)=0$, for $i>n$. For $i\leq n$, $\fhp_i(t)=\pm t$, depending on the sign of $y_i$. Once $\fhp_i=y_i$ (and $\sgn(k_{ii}(y_i-\fhp_i)=0$), then $\fhp_i=y$:
$$\fhp_i(t)=
\begin{cases}
\sgn(y_i)\cdot\min(t,|y_i|) &\text{if }i\leq n\\
0 &\text{if }i> n.\\
\end{cases}
$$
\end{proof}
\end{appendix}

\newpage
\clearpage
\newpage
\bibliography{refs}
\bibliographystyle{apalike}

\end{document}